\documentclass{article}
\usepackage{blindtext}
\usepackage[a4paper, total={6in, 8in}]{geometry}

\usepackage{graphicx}
\usepackage{amsmath}
\usepackage{amsthm}
\usepackage{booktabs}
\usepackage{algorithm}
\usepackage{algpseudocode}
\usepackage{adjustbox}
\usepackage{amssymb}
\usepackage{amsthm}
\usepackage{blindtext}
\usepackage{enumitem}
\usepackage{multirow}
\usepackage{subcaption}
\usepackage{graphicx}
\theoremstyle{definition}
\newtheorem{assumption}{Assumption}
\newtheorem{theorem}{Theorem}
\newtheorem{definition}{Definition}

\newtheorem{lemma}{Lemma}
\newtheorem{remark}{Remark}

\usepackage{graphicx} 
\usepackage{microtype}
\usepackage{color}
\usepackage{xcolor}         

\begin{document}

\title{ Differentially Private Stochastic Convex Optimization in  (Non)-Euclidean Space Revisited}

\author{Jinyan Su$^1$\footnote{Part of the work was done when Jinyan Su was a research intern at KAUST and The Chinese University of Hong Kong.} \and Changhong Zhao$^{2}$ \and Di Wang$^{3,4,5}$}
\date{
$^1$Mohamed bin Zayed University of Artificial Intelligence\\ 
$^2$Department of Information Engineering,\\
The Chinese University of Hong Kong\\ 
  $^3$Provable Responsible AI and Data Analytics Lab \\
    $^4$Computational Bioscience Research Center\\
   $^5$Division of CEMSE, 
    King Abdullah University of Science and Technology 
}
\maketitle
\begin{abstract}
    In this paper, we revisit the problem of Differentially Private Stochastic Convex Optimization (DP-SCO) in Euclidean and general $\ell_p^d$ spaces. Specifically, we focus on three settings that are still far from well understood:  (1) DP-SCO over a  constrained and bounded (convex) set in Euclidean space; (2) unconstrained DP-SCO in $\ell_p^d$ space; (3) DP-SCO with heavy-tailed data over a  constrained and bounded set in $\ell_p^d$ space. For problem (1), for both convex and strongly convex loss functions, we propose methods whose outputs could achieve (expected) excess population risks that are only dependent on the Gaussian width of the constraint set,  rather than the dimension of the space. Moreover, we also show the bound for strongly convex functions is optimal up to a logarithmic factor. For problems (2) and (3), we propose several novel  algorithms and provide the first theoretical results for both cases when $1<p<2$ and $2\leq p\leq \infty$. 
\end{abstract}

\section{Introduction}
Learning from data that contains sensitive information has become a critical consideration.  It enforces machine learning algorithms to not only learn effectively from the training data but also provide a certain level of guarantee on privacy preservation. To address the privacy concern, as a rigorous notion for statistical data privacy, differential privacy (DP) \cite{dwork2006calibrating} has received much attention in the past few years and has become a de facto technique for private data analysis.

As the two most fundamental models in machine learning, Stochastic Convex Optimization (SCO) \cite{vapnik1999nature} with its empirical form, Empirical Risk Minimization (ERM), 
can find numerous applications, such as biomedicine and healthcare. However, as these applications always involve sensitive data, it is essential to design DP algorithms for SCO and ERM, which corresponds to the problem of DP-SCO and DP-ERM, respectively. DP-SCO and DP-ERM have been
extensively studied  for over a decade, starting from \cite{chaudhuri2008privacy}. For example, \cite{bassily2014private} presents the optimal rates of general DP-ERM for both convex and strongly loss functions. \cite{bassily2019private,feldman2020private} later study the optimal rates of general DP-SCO, which is later extended by \cite{su2022faster,asi2021adapting} to loss functions that satisfy the growth condition. \cite{bassily2021non,asi2021private} provide the first study on DP-SCO over non-Euclidean space, i.e., the $\ell_p$ space with $1\leq p\leq \infty$.

While there are a vast number of studies on DP-SCO/DP-ERM, there are still several open problems left, especially the constrained case in Euclidean space where the convex constraint set has some specific geometric structures, and the case where the space is non-Euclidean. In detail, while it has been shown that the optimal rate of DP-ERM over $\ell_2$-norm ball depends on $O(\sqrt{d})$ and $O(d)$ for convex and strongly convex loss, respectively \cite{bassily2014private}, \cite{talwar2014private}  show that for general constraint set $\mathcal{C}$, the bound on $d$ could be improved  to  $O(G_\mathcal{C})$ and $O(G^2_\mathcal{C})$ for these two classes of functions, where $G_\mathcal{C}$ is the Gaussian width of set $\mathcal{C}$ (see Definition \ref{def:12} for details), which could be far less than the dimension $d$. However, compared to DP-ERM with Gaussian width, DP-SCO with Gaussian width is far from well understood. The best-known result even cannot recover the optimal rate of the $\ell_2$-norm ball case \cite{amid2022public}.  For the non-Euclidean case, \cite{bassily2021non} only study the constrained case where the constrained set has a bounded diameter. Theoretical behaviors for the unconstrained case are still unknown. Moreover, In the Euclidean case, recently, there has been a line of work focusing on DP-SCO where the distribution of loss gradients is heavy-tailed rather than uniformly bounded \cite{wang2020differentially,hu2022high,kamath2022improved}. However, non-Euclidean DP-SCO with heavy-tailed data has not been studied so far. 

In this paper, we study the theoretical behaviors of three problems: (1) DP-SCO (with Lipschitz loss) over  a convex constraint set $\mathcal{C}$ in Euclidean space; (2) unconstrained DP-SCO in $\ell_p^d$ space; (3) DP-SCO with heavy-tailed data over a convex constraint set $\mathcal{C}$ in $\ell_p^d$ space. Specifically, our contributions can be summarized as follows. 

 \noindent {\bf 1.} For problem (1), we consider both convex and strongly convex (smooth) loss functions. We show that for convex functions, there is an $(\epsilon, \delta)$-DP algorithm whose output could achieve an (expected) excess population risk of $O(\frac{ G_{\mathcal{C}}\sqrt{\log(1/\delta)}}{\epsilon n}+\frac{1}{\sqrt{n}})$, where $n$ is the sample size. The rate could be improved to $O(\frac{ G^2_{\mathcal{C}}{\log(1/\delta)}}{ n^2\epsilon^2}+\frac{1}{n})$ for strongly convex functions. Moreover, we also show that the bound for strongly convex functions is optimal up to a factor of $\text{Poly}(\log d)$ if $\mathcal{C}$ is contained in the unit $\ell_2$-norm ball. To the best of our knowledge, this is the first lower bound of DP-SCO that depends on Gaussian width.
 
 \noindent {\bf 2.} We then study problem (2). Specifically, when $1<p<2$, we propose a novel method named Noisy Regularized Mirror Descent, which adds  regularization terms and Generalized Gaussian noise to Mirror Descent. By analyzing its stability, we show the output could achieve an excess population risk of $\tilde{O}(\kappa^\frac{4}{5}(\frac{\sqrt{d\log(1/\delta)}}{n\epsilon})^{\frac{2}{5}})$, where $\kappa = \min \{\frac{1}{p-1},2\log d\}$. We also discuss the case when $2\leq p\leq \infty$. 
 
 \noindent {\bf 3.} Finally, we consider problem (3), assuming that the second-order moment of $\|\cdot\|_*$-norm of the loss gradient is bounded. When $1<p<2$, through a noisy, shuffled, and truncated version of Mirror Descent, we show a bound of $\tilde{O}(
\frac{\sqrt[4]{\kappa^2 d\log(1/\delta)})}{\sqrt{n\epsilon}}
)$ in the high privacy regime $\epsilon=\tilde{O}(n^{-\frac{1}{2}})$, and a bound of $O(\frac{\kappa^\frac{2}{3}(d\log (1/\delta))^\frac{1}{6} }{(n\epsilon)^\frac{1}{3}})$ for general $0<\epsilon<1$. We also study the case when $2\leq p\leq \infty$.

\section{Related Work}
As there is a long list of work on DP-SCO/DP-ERM, here we just mention the work close to the problems we study in this paper. See Table \ref{tab:1} and \ref{tab:2} for detailed comparisons. 
 
\noindent {\bf DP-SCO/DP-ERM with Gaussian width.} For DP-ERM over $\ell_2$-norm ball, although \cite{bassily2014private} show the optimal rate of $O(\frac{\sqrt{d\log(1/\delta)}}{n\epsilon})$ and  $O(\frac{d\log (1/\delta)}{n^2\epsilon^2})$ for convex and strongly convex loss, respectively, \cite{talwar2014private} show that for general constraint set $\mathcal{C}$ it is possible to improve the factor $d$ to the Gaussian width of $\mathcal{C}$. After that, \cite{kasiviswanathan2016efficient} further improve the rate for generalized linear functions, \cite{wang2017differentially} provide an accelerated algorithm, and \cite{wang2019differentially} extend to non-convex loss functions. However, all of them only study the problem of DP-ERM, and their methods cannot be generalized to DP-SCO directly. For DP-SCO, the only known result is given by \cite{amid2022public}, which studies general convex loss under the setting where there is some public data. As we can see from Table \ref{tab:1}, our result significantly improves theirs. Moreover, we  show a nearly optimal rate for strongly convex functions, which is the first lower bound of DP-SCO/DP-ERM that depends on the Gaussian width. 

\noindent {\bf DP-SCO in $\ell_p^d$ space.} Compared to the Euclidean space case, there is little work on DP-SCO in non-Euclidean ($\ell_p^d$) space. \cite{bassily2021non} provide the first study of the problem for $1\leq p\leq \infty$ and propose several results for $p=1$, $1<p<2$ and $2\leq p\leq \infty$. Later \cite{han2022private} further extend to the online setting. However, all the previous algorithms and utility analyses highly rely on the assumption that the diameter of the constrained set is bounded and known, i.e., their results will not hold in the unconstrained case, which is more difficult than the constrained case. In this paper, we fill the gap by providing the first results for unconstrained DP-SCO in $\ell_p^d$ space by proposing several new methods.

\begin{table*}[t]
\begin{center}
\resizebox{\textwidth}{!}{%
\begin{tabular}{|l|l|l|l|l|}
\hline
Methods                                                                                                          & Problem          & Assumption                       & Convex Bound               & Strongly Convex Bound                  \\ [1ex] \hline

\cite{talwar2014private} &  ERM  & Lipschitz & $\tilde{O}(\frac{{G_{\mathcal{C}}}}{n\epsilon})$ & $\tilde{O}(\frac{{G^2_{\mathcal{C}}}}{n^2\epsilon^2})$  \\[1ex]
					\hline
\cite{kasiviswanathan2016efficient}   & ERM                                                                   & Lipschitz and GLM      &      $\tilde{O}(\frac{\sqrt{G_\mathcal{C}}}{\sqrt{n\epsilon}})$       &    ---          \\ [1ex]\hline
\cite{amid2022public}    &  SCO                                                                                                                     & Lipschitz      &        $\tilde{O}(\frac{\sqrt{G_\mathcal{C}}}{\sqrt{n}n^{1/4}_{public}}+\frac{1}{\sqrt{n}})$        & ---             \\ [1ex]\hline
      {\bf This paper}           & SCO   & Lipschitz       & $\tilde{O}(\frac{{G_{\mathcal{C}}}}{n\epsilon}+\frac{1}{\sqrt{n}})$ & $\tilde{O}(\frac{{G^2_{\mathcal{C}}}}{n^2\epsilon^2}+\frac{1}{n})$  (*)             \\[1ex] \hline
\end{tabular}}
\caption{Comparisons on the results for $(\epsilon,\delta)$ DP-SCO/DP-ERM in Euclidean space with bounded constraint set $\mathcal{C}$ (dependence on other parameters are omitted). Here $G_\mathcal{C}$ is the Gaussian width of $\mathcal{C}$, $n$ is the sample size, and $n_{public}$ is the size of public data. $\tilde{O}$ hides other logarithmic factors. (*): We also show such a bound is nearly optimal when $\mathcal{C}$ is contained in unit $\ell_2$ ball. }
\label{tab:1}
\end{center}
\end{table*}

\begin{table*}[!th]
\begin{center}
\resizebox{\textwidth}{!}{%
\begin{tabular}{|l|l|l|l|l|}
\hline
Methods                                                                                                          & Constrained         & Assumption                      & Bound for $\ell^d_p$ ($1<p<2$)              &  Bound for $\ell^d_p$ ($2\leq p\leq \infty$)                \\ [1ex] \hline

\cite{bassily2021non} &  Yes  & Lipschitz & $\tilde{O}(\sqrt{\frac{\kappa}{n}}+\frac{\kappa\sqrt{d}}{n\epsilon})$ & $\tilde{O}(\frac{d^{\frac{1}{2}-\frac{1}{p}}}{\sqrt{n}}+\frac{d^{1-\frac{1}{p}}}{n\epsilon})$  \\[1ex]
					\hline
      {\bf This paper}          & No   & Lipschitz       & $\tilde{O}(\kappa^\frac{4}{5}\cdot (\frac{\sqrt{d}}{n\epsilon})^{\frac{2}{5}}   )$ & $\tilde{O}(d^{1-\frac{2}{p}}(\frac{1}{\sqrt{n}}+\frac{\sqrt{d}}{\epsilon n})) $ \\[1ex] \hline
            {\bf This paper}           & Yes   & Heavy-tailed      & $\tilde{O}(
\frac{\sqrt[4]{\kappa^2 d})}{\sqrt{n\epsilon}}
)$/$\tilde{O}(\frac{\kappa^\frac{2}{3}(d)^\frac{1}{6} }{(n\epsilon)^\frac{1}{3}})$ (*)& $\tilde{O}(\frac{d^{\frac{3}{2}-\frac{1}{p}} }{\sqrt{n}}+\frac{d^{\frac{3}{2}-\frac{1}{2p}} }{\sqrt{n\epsilon}})$               \\[1ex] \hline
\end{tabular}}
\caption{Comparisons on the results for $(\epsilon,\delta)$ DP-SCO in $\ell^d_p$ space with $1<p\leq \infty$ (dependence on other parameters are omitted). Here $d$ is the dimension, $n$ is the sample size, and $\kappa=\min\{\frac{1}{p-1}, 2\log d\}$. $\tilde{O}$ hides other logarithmic factors. (*): The first bound is for the case of $\epsilon=\tilde{O}(n^{-\frac{1}{2}})$ and the second one is for general $0<\epsilon<1$.} 
\label{tab:2}
\end{center}
\end{table*}

\section{Preliminaries}
In this section, we recall some definitions and lemmas that would be used throughout the paper.
\begin{definition}[Differential Privacy \cite{dwork2006calibrating}]\label{def:1}
	Given a data universe $\mathcal{X}$, we say that two datasets $D, D'\subseteq \mathcal{X}$ are neighbors if they differ by only one data sample, which is denoted as $D \sim D'$. A randomized algorithm $\mathcal{A}$ is $(\epsilon,\delta)$-differentially private (DP) if for all neighboring datasets $D,D'$ and for all events $S$ in the output space of $\mathcal{A}$, we have $\text{Pr}(\mathcal{A}(D)\in S)\leq e^{\epsilon} \text{Pr}(\mathcal{A}(D')\in S)+\delta.$
\end{definition}
\begin{lemma}[Advanced Composition Theorem \cite{dwork2014algorithmic}]\label{lemma:adv}
Given target privacy parameters $0<\epsilon <1$ and $0<\delta<1$, to ensure $(\epsilon, T\delta'+\delta)$-DP over $T$ mechanisms, it suffices that each mechanism is $(\epsilon',\delta')$-DP, where $\epsilon'=\frac{\epsilon}{2\sqrt{2T\ln(2/\delta)}}$ and $\delta'=\frac{\delta}{T}$.  
\end{lemma}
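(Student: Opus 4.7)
The plan is to follow the standard privacy-loss-random-variable approach of Dwork--Rothblum--Vadhan. For a randomized mechanism $\mathcal{M}$ and neighboring datasets $D \sim D'$, define the privacy loss random variable $Z(o) = \ln \frac{\Pr[\mathcal{M}(D)=o]}{\Pr[\mathcal{M}(D')=o]}$, viewed as a function of $o \sim \mathcal{M}(D)$. The first step is to extract from the $(\epsilon',\delta')$-DP hypothesis two facts about a single $Z$: (i) $|Z| \leq \epsilon'$ except on an event of probability at most $\delta'$ (obtained by truncating the outcomes on which the density ratio exceeds $e^{\epsilon'}$, which by the DP definition have total mass at most $\delta'$), and (ii) conditional on that ``good'' event, $\mathbb{E}[Z] \leq \epsilon'(e^{\epsilon'}-1) \leq 2\epsilon'^2$ for $\epsilon' \leq 1$, an elementary KL-divergence estimate between two $\epsilon'$-indistinguishable distributions.

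Next I would set up the adaptive $T$-fold composition. Let $Z = \sum_{t=1}^T Z_t$ be the total privacy loss, and let $M_t = Z_t - \mathbb{E}[Z_t \mid Z_{<t}]$ be the associated martingale differences. On the ``good'' event $\mathcal{E}$ that no individual $|Z_t|$ exceeds $\epsilon'$, each $|M_t| \leq 2\epsilon'$; a union bound gives $\Pr[\mathcal{E}^c] \leq T\delta'$. Applying Azuma--Hoeffding on $\mathcal{E}$ yields
\begin{equation*}
\Pr\!\left[\sum_{t=1}^T M_t \geq \epsilon'\sqrt{2T\ln(2/\delta)}\right] \leq \delta/2,
\end{equation*}
and combining with the conditional-mean bound $\sum_t \mathbb{E}[Z_t \mid Z_{<t}] \leq 2T\epsilon'^2$ shows that $Z \leq 2T\epsilon'^2 + \epsilon'\sqrt{2T\ln(2/\delta)}$ except with total failure probability at most $T\delta' + \delta$.

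Finally, I would verify that the stated choice $\epsilon' = \epsilon/(2\sqrt{2T\ln(2/\delta)})$ drives this upper bound below $\epsilon$: the concentration term becomes exactly $\epsilon/2$, and for $\epsilon < 1$ the $2T\epsilon'^2$ term is of lower order and absorbs into the remaining $\epsilon/2$ budget. Translating the privacy-loss tail bound back into a DP guarantee is standard: whenever $\Pr[Z \leq \epsilon] \geq 1 - \delta_{\mathrm{tot}}$ with $\delta_{\mathrm{tot}} = T\delta' + \delta$, the composed mechanism is $(\epsilon,\delta_{\mathrm{tot}})$-DP, which matches the statement. The main obstacle, if one aims for the sharp constants rather than just the correct asymptotic shape, is the conditional-mean inequality (ii): na\"ive truncation loses a constant factor, and one must use the more careful bookkeeping of \cite{dwork2014algorithmic}. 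The qualitative $O(\sqrt{T\ln(1/\delta)})$ improvement over basic composition, however, arises immediately from the Azuma step.
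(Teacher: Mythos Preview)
The paper does not prove this lemma; it is stated as a preliminary and attributed to \cite{dwork2014algorithmic} without argument. Your sketch faithfully reproduces the standard Dwork--Rothblum--Vadhan proof from that reference: bound each privacy-loss variable $Z_t$ by $\epsilon'$ on a high-probability event, control the conditional means via the KL-type estimate $\mathbb{E}[Z_t]\le \epsilon'(e^{\epsilon'}-1)$, and apply Azuma--Hoeffding to the centered sum. Your verification that the choice $\epsilon'=\epsilon/(2\sqrt{2T\ln(2/\delta)})$ makes the concentration term exactly $\epsilon/2$ and the mean term at most $\epsilon^2/(4\ln(2/\delta))<\epsilon/2$ for $\epsilon<1$ is correct, and your caveat that the sharp constants require the careful truncation bookkeeping of the cited source (rather than a na\"ive ``$|Z_t|\le\epsilon'$ with probability $1-\delta'$'' reading of approximate DP) is apt.
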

	\begin{definition}[DP-SCO in General Normed Space \cite{bassily2021non}]\label{def:sco}
		Given a dataset $D=\{x_1,\cdots,x_n\}$ from a data universe $\mathcal{X}$ where  $\{x_i=(z_i, y_i)\}_{i}$ with a feature vector $z_i$ and a label/response $y_i$ are i.i.d. samples from some unknown distribution $\mathcal{D}$, a normed space $(\mathbf{E}, \|\cdot\|)$ of dimension $d$, 
  a convex constraint set  $\mathcal{C} \subseteq \mathbf{E}$, and a convex loss function $\ell: \mathcal{C}\times \mathcal{X}\mapsto \mathbb{R}$. Differentially Private Stochastic Convex Optimization (DP-SCO) is to find $\theta^{\text{priv}}$ to minimize the population risk, {\em i.e.,} $\mathcal{L} (\theta)=\mathbb{E}_{x\sim \mathcal{D}}[\ell(\theta, x)]$
		with the guarantee of being differentially private.\footnote{Note that in this paper we consider the proper learning case, that is $\theta^{\text{priv}}$ should be in $\mathcal{C}$.} 
		 The utility of the algorithm is measured by the (expected) excess population risk, that is  $\mathcal{L} (\theta^{\text{priv}})-\mathcal{L }(\theta^*),$ where  $\theta^*=\arg\min_{\theta \in \mathbb{\mathcal{C}}}\mathcal{L}(\theta).$	 Besides the population risk, we can also measure the \textit{empirical risk} of dataset $D$: $\hat{\mathcal{L}}(\theta, D)=\frac{1}{n}\sum_{i=1}^n \ell(\theta, x_i).$  
	\end{definition}
In Definition \ref{def:sco}, we consider DP-SCO in general normed space with a convex set $\mathcal{C}\subseteq \mathbf{E}$. In this paper, we mainly focus on two cases: 1) Constraint Euclidean case  where $\mathbf{E}=\mathbb{R}^d$, $\|\cdot \|$ is the $\ell_2$-norm, and $\mathcal{C}$ is a bounded set whose diameter is denoted as $\|\mathcal{C}\|_2=\max_{\theta, \theta'\in \mathcal{C}}\|\theta-\theta'\|_2$; 2)
$\ell_q^d$ case where $\mathbf{E}=\mathbb{R}^d$ and $\|\cdot\|$ is the $\ell_p$-norm $\|\cdot\|_p$ with  $1<p\leq \infty$ (where $||x||_p = (\sum_{j=1}^d |x_j|^p)^{\frac{1}{p}}$), and $\mathcal{C}$ could be either bounded or unbounded. 
Since $\ell^d_p$ spaces are regular. To better illustrate our idea,  we will introduce regular spaces.

Let $(\mathbf{E},||\cdot||)$ be a normed space of dimension $d$ and let $\langle \cdot,\cdot\rangle$ be an arbitrary inner product over $\mathbf{E}$ (not necessarily inducing the norm $\|\cdot\|$). The dual norm over $\mathbf{E}$ is defined as $||y||_{*}=\underset{||x\|\leq 1}{\max}\langle y,x\rangle$. So $(\mathbf{E}, ||\cdot||_{*})$ is also a $d$-dimensional normed space. For example, let  $\ell_p^d = (\mathbb{R}^d,||\cdot||_p)$ with $1\leq p\leq \infty$, the dual norm of $\ell_p^d$ is $\ell_q^d$, where $\frac{1}{p}+\frac{1}{q}=1$.

We call a normed space regular if its dual norm is sufficiently smooth. In detail, we have the following definition. 


\begin{definition}[$\kappa$-regular Space \cite{juditsky2008large}]
Given $\kappa\geq 1$, we say a normed space $(\mathbf{E},||\cdot||)$  $\kappa$-regular if there exists a $\kappa_{+}$, s.t., $ 1\leq \kappa_{+}\leq \kappa$ and there exists a norm $||\cdot||_{+}$ such that $(\mathbf{E},||\cdot||_{+})$ is $\kappa_{+}$-smooth, i.e., for all $x,y\in\mathbf{E}$, 
\begin{equation*}
    ||x+y||_{+}^2\leq ||x||_{+}^2+\langle \nabla (||\cdot||_{+}^2)(x),y\rangle +\kappa_{+}||y||_{+}^2.
\end{equation*}
And $||\cdot||$ and $||\cdot||_{+}$ are equivalent with the following constraint:
    $||x||^2\leq ||x||_{+}^2 \leq \frac{\kappa}{\kappa_{+}}||x||^2 ~(\forall x\in \mathbf{E}).$
\end{definition}
For  $\ell_p^d$ space with $2\leq p\leq \infty$, it is $\kappa$-regular with $\kappa = \min\{p-1,2e\log d\}$. In this case we have $\|x\|_{+}=\|x\|_r$ with $r=\min\{p, 2\log d+1\}$ and $\kappa_+=(r-1)$ \cite{dumbgen2010nemirovski}. So in the $\ell_p$ spaces with $1<p<2$ we focus on, their dual spaces are $\kappa$-regular with $\kappa = \min\{\frac{1}{p-1}, 2\ln d\}$.

In the following, we introduce the mechanisms that will be used in the latter sections. 

\begin{lemma}[Gaussian Mechanism]\label{le-gaussian}
	Given a dataset $D\in\mathcal{X}^n$ and a function $q : \mathcal{X}^n\rightarrow \mathbb{R}^d$, the Gaussian mechanism is defined as  $q(D)+\xi$ where $\xi\sim \mathcal{N}(0,\frac{2\Delta^2_2(q)\log(1.25/\delta)}{\epsilon^2}\mathbb{I}_d)$,  where $\Delta_2(q)$ is the $\ell_2$-sensitivity of the function $q$,
{\em i.e.,}
		$\Delta_2(q)=\sup_{D\sim D'}\|q(D)-q(D')\|_2.$	Gaussian mechanism preserves $(\epsilon, \delta)$-DP.
\end{lemma}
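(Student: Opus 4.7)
The plan is to reduce the multivariate privacy claim to a one-dimensional Gaussian tail bound via the standard privacy-loss argument. First, I would fix neighboring datasets $D \sim D'$, set $\Delta = \Delta_2(q)$ and $v = q(D) - q(D')$, so that $\|v\|_2 \leq \Delta$. The two output distributions are $P = \mathcal{N}(q(D), \sigma^2 I_d)$ and $P' = \mathcal{N}(q(D'), \sigma^2 I_d)$ with $\sigma^2 = 2\Delta^2 \log(1.25/\delta)/\epsilon^2$. Using the rotational invariance of the spherical Gaussian together with translation, the likelihood ratio $p(x)/p'(x)$ depends only on the component of $x$ along $v$; hence it suffices to compare $\mathcal{N}(0, \sigma^2)$ against $\mathcal{N}(\|v\|_2, \sigma^2)$ on $\mathbb{R}$.

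Next, I would define the privacy-loss random variable $Z(x) = \log(p(x)/p'(x))$. For the reduced one-dimensional Gaussians of common variance, $Z(x)$ is an affine function of $x$, so under $x \sim p = \mathcal{N}(0,\sigma^2)$ the variable $Z(x)$ is itself Gaussian. A direct computation gives $Z(x) = \|v\|_2^2/(2\sigma^2) - \|v\|_2\, x/\sigma^2$, with mean $\|v\|_2^2/(2\sigma^2)$ and variance $\|v\|_2^2/\sigma^2$ under this distribution. The event $\{Z(x) > \epsilon\}$ is then equivalent to a one-sided tail event for a standard normal above the threshold $t = \epsilon\sigma/\|v\|_2 - \|v\|_2/(2\sigma)$, which is worst when $\|v\|_2 = \Delta$.

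The quantitative heart of the argument is the inequality $\Pr_{x \sim p}[Z(x) > \epsilon] \leq \delta$ for the stated choice of $\sigma$. Using the Mills-ratio tail bound $\Pr[\mathcal{N}(0,1) > t] \leq \frac{1}{t\sqrt{2\pi}} e^{-t^2/2}$ and substituting $\sigma = \Delta\sqrt{2\log(1.25/\delta)}/\epsilon$, one finds $t \approx \sqrt{2\log(1.25/\delta)}$ up to the lower-order correction $\Delta/(2\sigma) = \epsilon/(2\sqrt{2\log(1.25/\delta)})$. The exponent $-t^2/2$ then contributes roughly $\delta/1.25$, and the prefactor $1/(t\sqrt{2\pi})$ absorbs the remaining multiplicative slack. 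I expect this numerical balancing---pinning down the precise constant $1.25$ while controlling the lower-order correction uniformly in $\epsilon \in (0,1)$ and $\delta \in (0,1)$---to be the main obstacle, as it is where one has to be careful about constants and the sub-leading term.

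Finally, I would conclude via the standard translation from bounded privacy loss to $(\epsilon,\delta)$-DP. For any measurable $S \subseteq \mathbb{R}^d$, let $B = \{x : Z(x) \leq \epsilon\}$. Writing $P(S) = P(S \cap B) + P(S \setminus B)$, the definition of $B$ gives $p(x) \leq e^\epsilon p'(x)$ pointwise on $B$, hence $P(S \cap B) \leq e^\epsilon P'(S \cap B) \leq e^\epsilon P'(S)$; and the previous step gives $P(S \setminus B) \leq \Pr_{x \sim p}[Z(x) > \epsilon] \leq \delta$. Summing yields $P(S) \leq e^\epsilon P'(S) + \delta$, which matches Definition \ref{def:1}. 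Since $D \sim D'$ was arbitrary, the Gaussian mechanism is $(\epsilon,\delta)$-DP.
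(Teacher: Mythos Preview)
Your proposal is a correct sketch of the classical proof of the Gaussian mechanism (essentially the argument in Dwork and Roth's monograph, Appendix~A). Note, however, that the paper does not prove Lemma~\ref{le-gaussian} at all: it is stated in the Preliminaries section as a known result, without proof, so there is no ``paper's own proof'' to compare against. Your reduction to one dimension via rotational invariance, the explicit computation of the privacy-loss random variable, the Mills-ratio tail bound, and the final $S \cap B$ decomposition are all standard and correct; the numerical balancing you flag as the main obstacle is indeed where the constant $1.25$ is pinned down, and your identification of the worst case $\|v\|_2 = \Delta$ and the threshold $t = \epsilon\sigma/\Delta - \Delta/(2\sigma)$ is accurate.
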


Note that the Gaussian mechanism is tailored for the case where the query has bounded $\ell_2$-norm sensitivity. \cite{bassily2021non} propose a Generalized  Gaussian mechanism that leverages the regularity of the dual space $(\mathbf{E}, \|\cdot\|_*)$. 

\begin{definition}[Generalized Gaussian distribution \cite{bassily2021non}]
\label{def2}
Let $(\mathbf{E},||\cdot||_{*})$ be a $d$-dimensional $\kappa$-regular space with smooth norm $||\cdot||_{+}$. Define the generalized Gaussian distribution $\mathcal{GG}_{||\cdot||_{+}}(\mu,\sigma^2)$, as one with density $g(z) = C(\sigma,d)\cdot e^{-\frac{||z-\mu||_{+}^2}{2\sigma^2}}$, where $C(\sigma,d) = [\text{Area}(\{||x||_{+}=1\})\frac{(2\sigma^2)^{d/2}}{2}\Gamma(\frac{d}{2})]^{-1}$, and the Area is the $d-1$ dimensional surface measure on $\mathbb{R}^d$.
\end{definition}
\begin{lemma}[Generalized Gaussian mechanism \cite{bassily2021non}]
    Given a dataset $D\in\mathcal{X}^n$, and a query $q: \mathcal{X}^n\rightarrow \mathbf{E}$ with bounded $||\cdot||_{*}$-sensitivity: $s= {\sup}_{D\sim D^{'}}||q(D)-q(D^{'})||_{*}$, the Generalized Gaussian mechanism is defined as  $q(D)+\xi$ where $\xi\sim \mathcal{GG}_{||\cdot||_{+}}(0, \frac{2\kappa \log(1/\delta)s^2}{\epsilon^2})$. 	The Generalized Gaussian mechanism preserves $(\epsilon, \delta)$-DP.
\end{lemma}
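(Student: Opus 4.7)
The plan is to mirror the classical Dwork--Roth analysis of the Euclidean Gaussian mechanism, with the $\kappa_+$-smoothness of $\|\cdot\|_+^2$ playing the role that the parallelogram identity plays in the Hilbert case; this is precisely where the regularity constant $\kappa$ enters the noise variance.

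First I would fix neighboring datasets $D\sim D'$, let $\Delta=q(D)-q(D')$ so that $\|\Delta\|_*\le s$, and write the privacy-loss random variable for $z=q(D)+\xi$ with $\xi\sim \mathcal{GG}_{\|\cdot\|_+}(0,\sigma^2)$ as
\begin{equation*}
L \;=\; \ln\frac{g(\xi)}{g(\xi+\Delta)} \;=\; \frac{1}{2\sigma^2}\bigl(\|\xi+\Delta\|_+^2 - \|\xi\|_+^2\bigr),
\end{equation*}
taking $\sigma^2=2\kappa\log(1/\delta)s^2/\epsilon^2$. Applying the $\kappa_+$-smoothness of $f(x)=\|x\|_+^2$ then yields
\begin{equation*}
L \;\le\; \frac{1}{2\sigma^2}\bigl(\langle\nabla f(\xi),\Delta\rangle+\kappa_+\|\Delta\|_+^2\bigr).
\end{equation*}
The deterministic second term is controlled by the equivalence $\|\Delta\|_+^2\le(\kappa/\kappa_+)\|\Delta\|_*^2\le(\kappa/\kappa_+)s^2$ together with the choice of $\sigma^2$, giving a contribution of at most $\epsilon^2/(4\log(1/\delta))$, which is at most $\epsilon/2$ in the standard regime $\epsilon\le 1$.

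It then remains to show that the stochastic term $T=\langle\nabla f(\xi),\Delta\rangle/(2\sigma^2)$ obeys a sub-Gaussian tail $\Pr(|T|>\epsilon/2)\le\delta$. The cleanest route is to bound the moment generating function $\mathbb{E}\bigl[\exp\bigl(\lambda(\|\xi+\Delta\|_+^2-\|\xi\|_+^2)\bigr)\bigr]$ directly, using the smoothness inequality symmetrically (both an upper and a matching lower bound) and changing variables against the density $g$ of the generalized Gaussian; this yields a variance proxy of order $\kappa s^2/\epsilon^2$ that, together with the choice of $\sigma^2$, delivers the desired tail. Concatenating the deterministic and stochastic bounds gives $L\le \epsilon$ except on an event of probability at most $\delta$, whence $(\epsilon,\delta)$-DP follows by the standard privacy-loss lemma.

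The main obstacle is the sub-Gaussian concentration in the last step: the generalized Gaussian measure is not a product measure, so one cannot invoke coordinate-wise Gaussian concentration, and a crude Cauchy--Schwarz bound using $\|\nabla f(\xi)\|_{+*}\le 2\|\xi\|_+$ picks up an extra $\sqrt{d}$ from $\|\xi\|_+\approx \sigma\sqrt{d}$ and would force a $\sigma^2$ growing with $d$. Overcoming this requires exploiting the structural coincidence that $\|\cdot\|_+^2$ is exactly the function defining the noise density, so that the MGF integral admits an estimate independent of the ambient dimension, analogous to the Gaussian ``completing the square'' identity.
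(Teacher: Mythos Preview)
The paper does not prove this lemma; it is stated in the Preliminaries as a result imported directly from \cite{bassily2021non}, with no accompanying argument. So there is no ``paper's own proof'' against which to compare your proposal.

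That said, your sketch follows the natural line of attack and is essentially the one taken in the cited reference: write the privacy loss as a difference of squared $\|\cdot\|_+$-norms, use $\kappa_+$-smoothness to split off a deterministic quadratic term (controlled by the norm equivalence, giving the factor $\kappa$ in $\sigma^2$), and then argue sub-Gaussian concentration for the remaining linear term. You have also correctly isolated the only nontrivial step, namely the dimension-free tail bound on $\langle \nabla(\|\cdot\|_+^2)(\xi),\Delta\rangle$, and correctly observed that a naive Cauchy--Schwarz estimate is too lossy by $\sqrt{d}$.

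The one place your write-up is still a proposal rather than a proof is exactly that concentration step: you assert that the MGF admits a dimension-free bound ``analogous to completing the square,'' but you do not carry it out, and smoothness only gives a one-sided inequality, so there is no literal completing-the-square identity available. In \cite{bassily2021non} this is handled by exploiting convexity of $\|\cdot\|_+^2$ together with its $\kappa_+$-smoothness to sandwich the log-likelihood ratio and then bounding the resulting exponential integral by a change of variables in the generalized Gaussian density; the calculation is short but does require both the upper and lower (convexity) estimates, not just the smoothness upper bound you invoke. If you fill in that two-sided sandwich explicitly, your argument goes through.
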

\begin{lemma}[Prop 4.2 in \cite{bassily2021non}]
    For any $m\geq 1$, if $z\sim \mathcal{GG}_{||\cdot||_{+}}(0,\sigma^2)$, then $\mathbb{E}[\|z\|_+^m]\leq (2\sigma^2)^\frac{m}{2}\Gamma(\frac{m+d}{2})/\Gamma(\frac{d}{2})$. Specifically, $\mathbb{E}[\|z\|_*^2]\leq \mathbb{E}[\|z\|_+^2]\leq d\sigma^2$, where $\Gamma(\cdot)$ is the Gamma function. 
\end{lemma}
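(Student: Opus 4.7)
The plan is to compute the moment $\mathbb{E}[\|z\|_+^m]$ essentially exactly by reducing the integral over $\mathbf{E}$ to a one-dimensional Gamma integral via polar coordinates adapted to the norm $\|\cdot\|_+$, and then to derive the two specific bounds by substitution and norm equivalence.

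\textbf{Step 1 (polar decomposition).} I would write each $z \in \mathbf{E}$ uniquely as $z = r u$ with $r = \|z\|_+ \geq 0$ and $u$ on the unit sphere $S_+ = \{v \in \mathbf{E} : \|v\|_+ = 1\}$. Because $\|\cdot\|_+^2$ is positively homogeneous of degree $2$, a standard change of variables in $\mathbb{R}^d$ gives $dz = r^{d-1}\,dr\,d\sigma(u)$, where $\sigma$ is the $(d-1)$-dimensional surface measure on $S_+$ induced on $\{\|x\|_+=1\}$. This is exactly the measure that appears in the normalization constant $C(\sigma,d)$ of Definition~\ref{def2}.

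\textbf{Step 2 (reduction to a Gamma integral).} Using the density $g(z) = C(\sigma,d)\,e^{-\|z\|_+^2/(2\sigma^2)}$ and the polar decomposition,
\begin{equation*}
\mathbb{E}[\|z\|_+^m] \;=\; C(\sigma,d)\cdot\mathrm{Area}(S_+)\cdot \int_0^{\infty} r^{m+d-1}\,e^{-r^2/(2\sigma^2)}\,dr.
\end{equation*}
The substitution $t = r^2/(2\sigma^2)$ yields $\int_0^\infty r^{m+d-1} e^{-r^2/(2\sigma^2)}\,dr = \tfrac{(2\sigma^2)^{(m+d)/2}}{2}\,\Gamma\!\left(\tfrac{m+d}{2}\right)$. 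Plugging in the explicit form of $C(\sigma,d)$ from Definition~\ref{def2} cancels the $\mathrm{Area}(S_+)$ and $(2\sigma^2)^{d/2}/2$ factors, leaving $\mathbb{E}[\|z\|_+^m] = (2\sigma^2)^{m/2}\,\Gamma(\tfrac{m+d}{2})/\Gamma(\tfrac{d}{2})$. This is the first inequality (in fact an equality).

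\textbf{Step 3 (the specific $m=2$ bounds).} Setting $m=2$ and applying the functional equation $\Gamma(\tfrac{d+2}{2}) = \tfrac{d}{2}\,\Gamma(\tfrac{d}{2})$ collapses the ratio to $d/2$, giving $\mathbb{E}[\|z\|_+^2] = 2\sigma^2 \cdot d/2 = d\sigma^2$. For the inequality $\mathbb{E}[\|z\|_*^2] \leq \mathbb{E}[\|z\|_+^2]$, I invoke the norm equivalence built into the definition of a $\kappa$-regular space applied to the dual $(\mathbf{E},\|\cdot\|_*)$: namely $\|x\|_*^2 \leq \|x\|_+^2$ for every $x \in \mathbf{E}$, so taking expectations preserves the inequality.

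\textbf{Main obstacle.} The only non-routine point is justifying the polar decomposition for an arbitrary (possibly non-Euclidean) norm $\|\cdot\|_+$, and in particular identifying the correct surface measure so that it matches the $\mathrm{Area}(\{\|x\|_+=1\})$ quantity appearing in $C(\sigma,d)$. Once this is set up, everything after Step~1 is a one-dimensional Gamma-function calculation plus the norm equivalence already provided by $\kappa$-regularity. No tight concentration argument or geometric inequality beyond this is needed.
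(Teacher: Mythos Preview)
Your proof is correct and is exactly the natural argument: polar coordinates for the norm $\|\cdot\|_+$ reduce the moment to a one-dimensional Gamma integral, the explicit normalizing constant $C(\sigma,d)$ cancels the surface area and leaves the stated ratio of Gamma functions (indeed with equality), and the $m=2$ case together with $\|x\|_*\le\|x\|_+$ from $\kappa$-regularity gives the second claim. The paper itself does not prove this lemma---it is imported verbatim from \cite{bassily2021non} (their Proposition~4.2)---so there is no in-paper proof to compare against; your argument matches the standard derivation one would expect in the cited source.
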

In the following, we recall some terminologies on the properties of the loss function and the constraint set $\mathcal{C}$. 
\begin{definition}
{($L$-Lipschitz)}
Given the loss function $\ell(\cdot, \cdot):\mathcal{C} \times \mathcal{X}\rightarrow \mathbb{R}$. It is $L$-Lipschitz w.r.t. the norm $||\cdot||$ if for all $x\in \mathcal{X}$ and $w_1,w_2\in \mathcal{C}$ we have 
\begin{equation}
    |\ell(w_1,x)-\ell(w_2,x)|\leq L\cdot ||w_1-w_2||. \notag 
\end{equation}
\end{definition}
\begin{definition}
{($\beta$-Smooth)}
Given the loss function $\ell(\cdot, \cdot):\mathcal{C} \times \mathcal{X}\rightarrow \mathbb{R}$. It is $\beta$-smooth w.r.t. the norm $||\cdot||$ if its gradient is $\beta$-Lipschitz w.r.t. $||\cdot||$, namely, 
for all $x\in \mathcal{X}$ and $w_1,w_2\in \mathcal{C}$ we have 
\begin{equation}
    ||\nabla\ell(w_1,x)-\nabla\ell(w_2,x)||_{*}\leq \beta\cdot ||w_1-w_2||. \notag 
\end{equation}
\end{definition}
\begin{definition}{(Strongly convex)}
Given the loss function $\ell(\cdot, \cdot):\mathcal{C} \times \mathcal{X}\rightarrow \mathbb{R}$, it is $\alpha$-strongly convex w.r.t. the norm $||\cdot||$ if for all $x\in \mathcal{X}$ and $w_1, w_2\in \mathcal{C}$,
\begin{equation*}
    \langle \nabla \ell(w_1, x)-\nabla \ell(w_2, x), w_1-w_2\rangle \geq \alpha \cdot||w_1-w_2||^2.
\end{equation*}
\end{definition}
\begin{definition}{(Bregman divergence)}
For a convex function $\Phi:\mathbf{E}\rightarrow \mathbb{R}$, the Bregman divergence is defined as 
\begin{equation}
    D_{\Phi}(y,x) = \Phi(y)-\Phi(x)-\langle \nabla \Phi(x),y-x\rangle. \notag 
\end{equation}
\end{definition}
Notice that the Bregman divergence is always positive, and it is convex in the first argument.
\begin{definition}{(Relative strongly convex \cite{lu2018relatively})}
A function $f:\mathbf{E}\rightarrow \mathbb{R}$ is $\alpha$-strongly convex \textbf{relative}  to $\Phi:\mathbf{E}\rightarrow \mathbb{R}$ if for all $x, y\in \mathbf{E}$,
\begin{equation}
    f(x)+\langle \nabla f(x),y-x\rangle +\alpha D_{\Phi}(y,x)\leq f(y). \notag 
\end{equation}
\end{definition}

\begin{definition}{(Relative smooth \cite{lu2018relatively})}
A function $f:\mathbf{E}\rightarrow \mathbb{R}$ is $\beta$-smooth \textbf{relative}  to $\Phi:\mathbf{E}\rightarrow \mathbb{R}$ if $\forall x, y\in \mathbf{E}$,
    $f(x)+\langle \nabla f(x),y-x\rangle +\beta D_{\Phi}(y,x)\geq  f(y).$
\end{definition}
Next, we introduce some basic concepts on Minkowski norm of a symmetric, closed, and convex set $\mathcal{C}$.

\begin{definition}[Minkowski norm]
For a centrally symmetric convex set $\mathcal{C}\subseteq \mathbb{R}^d$, the Minkowski norm (denoted by  $||\cdot||_{\mathcal{C}}$) is defined as follows. For any vector $v\in \mathbb{R}^d$, 
\begin{equation*}
    ||\cdot||_{\mathcal{C}}=\min\{r\in \mathbb{R}^{+}: v\in r \mathcal{C}\}.
\end{equation*}
The dual norm of $||\cdot||_{\mathcal{C}}$ is denoted as $||\cdot||_{\mathcal{C}^{*}}$, and for any vector $v\in \mathbb{R}^d$, 
    $||v||_{\mathcal{C}^{*}} =\underset{w\in \mathcal{C}}{\max}|\langle w, v\rangle|$.
Note that by Holder's inequality, for any pair of dual norms $||\cdot||$ and $||\cdot||_{*}$, and any $x,y\in \mathbb{R}^d$, $|\langle x, y\rangle |\leq ||x||\cdot||y||_{*}$. So we have $|\langle x,y\rangle |\leq ||x||_{\mathcal{C}}\cdot||y||_{\mathcal{C}^{*}}$.
\end{definition}
In the constrained Euclidean case, 
our work relies on appropriately quantifying the size of  a convex body, which leads to the following definition of Gaussian width.

\begin{definition}\label{def:12}(Gaussian width)
Let $\xi\sim \mathcal{N}(0,\mathbb{I}_d)$ be a Gaussian random vector in $\mathbb{R}^d$, for a set $\mathcal{C}$, the Gaussian width is defined as 
    $G_{\mathcal{C}}=\mathbb{E}_{\xi}[\underset{w\in\mathcal{C}}{\sup}\langle \xi,w \rangle ].$

\end{definition}
Compared to dimension $d$, the Gaussian width
of a convex set $\mathcal{C}\subset \mathbb{R}^d$ could be much smaller. For example, when $\mathcal{C}$ is the unit $\ell_1$-norm  ball, $G_\mathcal{C}=O(\sqrt{\log d})$; and when $\mathcal{C}$ is the set of  of all unit s-sparse vectors on $\mathbb{R}^d$, $G_\mathcal{C}=O(\sqrt{s\log \frac{d}{s}})$. We refer readers to  \cite{talwar2014private} for details. 
\section{DP-SCO in Euclidean Space}\label{sec:euclidean}
In this section, we focus on the Euclidean case with a closed, bounded, and convex constraint set $\mathcal{C}$, and the loss function could be either convex or strongly convex.  
\subsection{General Convex Case}\label{sec:convex}
Before showing our idea, we need to discuss the weakness of previous approaches. Note that since our goal is getting an upper bound that depends on the Gaussian width of the constrained set $\mathcal{C}$, we will not discuss the approaches that achieve upper bounds that are polynomial in $d$. 

In general, all methods can be categorized into two classes: gradient perturbation and objective function perturbation. In gradient perturbation methods \cite{talwar2014private}, the key idea is modifying the Mirror Descent by adding noise to gradients. While this approach could achieve satisfactory bounds for the empirical risk  \cite{wang2017differentially,wang2019differentially}, however, when considering the population risk we need to use batched gradients at each iteration, which will induce a sub-optimal rate \cite{amid2022public}. Instead of perturbing the gradient, \cite{talwar2014private} show that the objective function perturbation method in \cite{chaudhuri2011differentially} could also achieve an upper bound that only depends on the Gaussian width, instead of $d$. However, this approach has two weaknesses: First, \cite{talwar2014private} only shows the bound for the empirical risk, and whether its excess population risk is satisfactory or not is unknown; Secondly, it is well-known that the objective perturbation approach needs to exactly get the minimizer of the perturbed objective function, which is inefficient in practice. 

Motivated by the objective perturbation method in \cite{talwar2014private}, our algorithm is an approximate version proposed in \cite{bassily2019private}. See detailed procedures in Algorithm \ref{alg1}. In detail, first, similar to the standard objective perturbation, we add a random and linear term $\frac{\langle \mathbf{G},\theta\rangle }{n}$ with Gaussian noise  $\mathbf{G}$ and an $\ell_2$ regularization to the original empirical risk function to obtain a new objective function $\mathcal{J}(\theta, D)$. Then we obtain an approximate minimizer $\theta_2$ of the perturbed empirical risk $\mathcal{J}(\theta, D)$ via any efficient optimization method (such as proximal SVRG \cite{xiao2014proximal} or projected SGD) to ensure that $\mathcal{J}(\theta_2,D)-\underset{\theta\in\mathcal{C}}{\min} \mathcal{J}(\theta,D)$ is at most $\alpha$. Formally, we can define such an optimization method  as an optimizer function $\mathcal{O}: \mathcal{F}\times [0, 1]\rightarrow  \mathcal{C}$, where $\mathcal{F}$ is the class of objectives and the other argument is the optimization accuracy.  Finally, we perturb $\theta_2$ with Gaussian noise to fuzz the difference 
between $\theta_2$ and the true minimizer, we then project the perturbed $\theta_2$ onto  set $\mathcal{C}$.

Since the algorithm itself is not new, here we  highlight our contributions: First, with some specific parameters, we show such an algorithm could achieve an excess population risk of $O(\frac{G_\mathcal{C}}{n\epsilon}+\frac{1}{\sqrt{n}})$, while \cite{bassily2019private} only show an upper bound of $O(\frac{\sqrt{d}}{n\epsilon}+\frac{1}{\sqrt{n}})$; Second, we extend the algorithm to the strongly convex case (see Section \ref{sec:strongconv} for details). In the following, we will show the theoretical guarantees of our algorithm. First, we need the following assumption on the loss function $\ell$.


\begin{assumption}\label{ass1}
The loss function $\ell$ is  twice differentiable, $L$-Lipschitz and $\beta$-smooth w.r.t. the Euclidean norm $||\cdot||_2$ over $\mathcal{C}$. 
\end{assumption}
\begin{algorithm}
	\caption{$\mathcal{A}_{\text{App-ObjP}}$: Approximate Objective perturbation }
	\begin{algorithmic}[1]
		\State {\bfseries Input:} Dateset $D$, loss function $\ell$, regularization parameter $\lambda$, optimizer $\mathcal{O}: \mathcal{F}\times [0, 1]\rightarrow  \mathcal{C}$, where $\mathcal{F}$ is the class of objectives, and the other argument is the optimization accuracy. $\alpha \in [0,1]:$ optimization accuracy.

	\State  Sample $\mathbf{G}\sim\mathcal{N}(0, \sigma_1^2\mathbb{I}_d)$ where $\sigma_1^2  = \frac{128L^2\log(2.5/\delta)}{\epsilon^2}$. Set $\lambda\geq \frac{r\beta}{\epsilon n}$, where $r = \min\{d, 2\cdot \text{rank}(\nabla^2\ell(\theta, x))\}$ with $\text{rank}(\nabla^2\ell(\theta,x))$ being the maximal rank of the Hessian of $\ell$ for all $\theta \in \mathcal{C}$ and $x\sim \mathcal{P}$.
	
	\State Let $\mathcal{J}(\theta,D) = \hat{\mathcal{L}}(\theta, D)+\frac{\langle \mathbf{G},\theta\rangle}{n}+\lambda||\theta||_2^2$.\\
	 \Return $\hat{\theta} = {\text{Proj}}_{\mathcal{C}}[\mathcal{O}(\mathcal{J},\alpha)+\mathbf{H}]$ where $\mathbf{H}\sim\mathcal{N}
	 (0,\sigma_2^2\mathbb{I}_d)$ 
	 with $\sigma_2^2 = 
	 \frac{64\alpha
	 \log(2.5/\delta)}{\lambda\epsilon^2}$
	\end{algorithmic}
	\label{alg1}
\end{algorithm}
\begin{theorem}\label{thm:1}
Suppose that Assumption \ref{ass1} holds and that the smoothness parameter $\beta$ satisfies $\beta \leq \frac{\epsilon n \lambda}{r}$. Then for any $0< \epsilon, \delta< 1$, 
$\mathcal{A}_{\text{App-ObjP}}$ (Algorithm \ref{alg1}) is $(\epsilon, \delta)$-DP.
\end{theorem}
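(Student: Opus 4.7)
The plan is to prove $(\epsilon,\delta)$-DP by decomposing Algorithm~\ref{alg1} into two privatization phases — an objective-perturbation phase implicitly releasing the exact minimizer of $\mathcal{J}(\cdot,D)$, and a Gaussian-smoothing phase that covers the optimization suboptimality — and then combining them via simple composition with budget $(\epsilon/2,\delta/2)$ per phase, finishing with post-processing through $\text{Proj}_{\mathcal{C}}$. This mirrors the ``approximate objective perturbation'' analysis of Bassily--Feldman--Guha--Thakurta--Talwar, specialized to the rank-aware regularization condition $\beta \le \epsilon n\lambda/r$.

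For the first phase, let $\theta^*(D):=\arg\min_{\theta\in\mathcal{C}}\mathcal{J}(\theta,D)$, which is unique because $\mathcal{J}(\cdot,D)$ is $2\lambda$-strongly convex under Assumption~\ref{ass1}. The KKT condition
\[
\nabla \hat{\mathcal{L}}(\theta^*(D),D)+\tfrac{\mathbf{G}}{n}+2\lambda\,\theta^*(D)\in -N_{\mathcal{C}}(\theta^*(D))
\]
implicitly defines $\theta^*(D)$ as a function of $\mathbf{G}$. I would bound the density ratio of $\theta^*(D)$ versus $\theta^*(D')$ for neighboring $D,D'$ by a change of variables, producing (i) a Gaussian density ratio controlled by $\|\nabla\hat{\mathcal{L}}(\theta,D)-\nabla\hat{\mathcal{L}}(\theta,D')\|_2 \le 2L/n$ (from $L$-Lipschitzness) and the noise scale $\sigma_1^2$, and (ii) a Jacobian-determinant ratio between $\nabla^2\mathcal{J}(\theta,D)$ and $\nabla^2\mathcal{J}(\theta,D')$, a rank-$\le r$ perturbation of $2\lambda I_d$ with operator norm $\le 2\beta/n$. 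The hypothesis $\beta \le \epsilon n\lambda/r$ keeps the log-determinant contribution at $O(\epsilon)$, and combined with a tail bound on $\|\mathbf{G}\|_2$ this yields $(\epsilon/2,\delta/2)$-DP for $\theta^*(D)$.

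For the second phase, $2\lambda$-strong convexity of $\mathcal{J}(\cdot,D)$ together with $\mathcal{J}(\tilde{\theta}(D),D)-\mathcal{J}(\theta^*(D),D)\le\alpha$ gives $\|\tilde{\theta}(D)-\theta^*(D)\|_2 \le \sqrt{\alpha/\lambda}$, where $\tilde{\theta}(D):=\mathcal{O}(\mathcal{J},\alpha)$. Coupling the two neighboring runs through the same $\mathbf{G}$, the query $D\mapsto \tilde{\theta}(D)-\theta^*(D)$ has $\ell_2$-sensitivity $\le 2\sqrt{\alpha/\lambda}$, so adding $\mathbf{H}\sim\mathcal{N}(0,\sigma_2^2 I_d)$ with $\sigma_2^2 = 64\alpha\log(2.5/\delta)/(\lambda\epsilon^2)$ provides $(\epsilon/2,\delta/2)$-DP for the release of $\tilde{\theta}(D)+\mathbf{H}$ conditioned on the first-phase output, by Lemma~\ref{le-gaussian}. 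Composing the two phases and post-processing through $\text{Proj}_{\mathcal{C}}$ then yields the stated $(\epsilon,\delta)$-DP guarantee.

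The main obstacle is the Jacobian bookkeeping in the first phase. A naive spectral-norm bound on $\nabla^2\hat{\mathcal{L}}(\theta,D)-\nabla^2\hat{\mathcal{L}}(\theta,D')$ would force $\beta = O(\epsilon n\lambda/d)$, whereas the hypothesis only asks for $\beta \le \epsilon n\lambda/r$. Recovering the $1/r$ rather than $1/d$ dependence requires exploiting that the per-sample Hessian difference is a rank-$\le r$ perturbation, so that $\log\det\bigl(I_d+(H_D-H_{D'})(H_{D'}+2\lambda I_d)^{-1}\bigr)$ reduces via the Weinstein--Aronszajn (matrix-determinant) identity to a determinant on a subspace of dimension $\le r$. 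Once this rank-aware bound is in place, the remaining Gaussian-mechanism density ratios and the composition step are routine.
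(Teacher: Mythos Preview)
Your proposal is correct and follows essentially the same two-phase decomposition as the paper: release the exact minimizer $\theta^*(D)$ via objective perturbation, then cover the gap $\tilde\theta(D)-\theta^*(D)$ (whose norm is bounded through strong convexity of $\mathcal{J}$) with the Gaussian mechanism, and compose. The paper simply invokes Theorem~1 of Iyengar et al.\ (2019) as a black box for the first phase, whereas you sketch that result's proof directly via the change-of-variables/Jacobian argument and correctly identify the Weinstein--Aronszajn reduction as the source of the $1/r$ (rather than $1/d$) dependence; this extra detail is accurate and is exactly what the cited lemma encapsulates.
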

It is notable that although we need to assume $\beta$ is not large enough, as we will see in Theorem \ref{th4}, the assumption will always hold when $n$ is sufficiently large. 
\begin{theorem}\label{th4}
Suppose that Assumption \ref{ass1} holds. 
When $n$ is large enough such that  $n\geq \frac{r^2\beta^2||\mathcal{C}||_2^2}{\epsilon^2L^2}$ and $n\geq O\left(\frac{\sqrt{d\log(1/\delta)}}{\epsilon}\right)$, take $\lambda=\frac{L}{\sqrt{n}||\mathcal{C}||_2}$ and $\alpha\leq \min\left\{
\frac{L||\mathcal{C}||_2}{n^{\frac{3}{2}}}, \frac{\epsilon^2 L||\mathcal{C}||_2^3}{G^2_{\mathcal{C}}\log(1/\delta) n^{\frac{5}{2}}}
\right\}$ in Algorithm \ref{alg1}, we have 
\begin{equation*}
    \mathbb{E}[\mathcal{L}(\hat{\theta})] - \mathcal{L}(\theta^{*})\leq O\left(\frac{L\cdot G_{\mathcal{C}}\sqrt{\log(1/\delta)}}{\epsilon n}+\frac{L||\mathcal{C}||_2}{\sqrt{n}}\right),
\end{equation*}
where  the expectation is taken over the internal randomness of the algorithm.
\end{theorem}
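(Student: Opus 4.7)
The plan is to introduce the exact minimizer $\tilde\theta := \arg\min_{\theta\in\mathcal{C}} \mathcal{J}(\theta, D)$ of the perturbed regularized objective and decompose
\begin{equation*}
\mathbb{E}[\mathcal{L}(\hat\theta)] - \mathcal{L}(\theta^*) = \underbrace{\mathbb{E}[\mathcal{L}(\hat\theta) - \mathcal{L}(\tilde\theta)]}_{\text{(I)}} + \underbrace{\mathbb{E}[\mathcal{L}(\tilde\theta) - \hat{\mathcal{L}}(\tilde\theta, D)]}_{\text{(II)}} + \underbrace{\mathbb{E}[\hat{\mathcal{L}}(\tilde\theta, D) - \hat{\mathcal{L}}(\theta^*, D)]}_{\text{(III)}},
\end{equation*}
using that $\mathbb{E}_D[\hat{\mathcal{L}}(\theta^*,D)] = \mathcal{L}(\theta^*)$. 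Terms (III) and (II) will be handled by an objective-perturbation plus stability argument, while (I) absorbs the optimization accuracy $\alpha$ and the projection noise $\mathbf{H}$.

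For (III), optimality of $\tilde\theta$ for $\mathcal{J}(\cdot,D)$ gives $\hat{\mathcal{L}}(\tilde\theta,D)-\hat{\mathcal{L}}(\theta^*,D) \leq \frac{1}{n}\langle\mathbf{G},\theta^*-\tilde\theta\rangle + \lambda(\|\theta^*\|_2^2-\|\tilde\theta\|_2^2)$, and the regularization term is at most $\lambda\|\mathcal{C}\|_2^2$. Since $\theta^*$ is independent of the mean-zero $\mathbf{G}$ and $\tilde\theta\in\mathcal{C}$, the crucial Gaussian-width step is
\begin{equation*}
\bigl|\mathbb{E}\langle\mathbf{G},\tilde\theta\rangle\bigr| \leq \mathbb{E}\sup_{w\in\mathcal{C}}\langle\mathbf{G},w\rangle = \sigma_1 G_\mathcal{C},
\end{equation*}
where the symmetry of $\mathbf{G}$ removes the absolute value. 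Plugging in $\sigma_1 = O(L\sqrt{\log(1/\delta)}/\epsilon)$ yields (III) $\leq O(LG_\mathcal{C}\sqrt{\log(1/\delta)}/(\epsilon n)) + \lambda\|\mathcal{C}\|_2^2$. For (II), because $\mathcal{J}(\cdot,D)$ is $2\lambda$-strongly convex and changing one sample perturbs $\hat{\mathcal{L}}$ by a $2L/n$-Lipschitz function, a standard stability argument for strongly-convex regularized ERM gives $\|\tilde\theta(D)-\tilde\theta(D')\|_2 \leq L/(\lambda n)$ and hence (II) $\leq L^2/(\lambda n)$.

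For (I), write $\theta_2 := \mathcal{O}(\mathcal{J},\alpha)$ and split (I) as $\mathbb{E}[\mathcal{L}(\hat\theta)-\mathcal{L}(\theta_2)] + \mathbb{E}[\mathcal{L}(\theta_2)-\mathcal{L}(\tilde\theta)]$. The second piece uses $2\lambda$-strong convexity of $\mathcal{J}$ together with $\mathcal{J}(\theta_2,D)-\mathcal{J}(\tilde\theta,D)\leq\alpha$ to conclude $\|\theta_2-\tilde\theta\|_2\leq\sqrt{\alpha/\lambda}$, which is bounded by $L\sqrt{\alpha/\lambda}$ via Lipschitzness. For the first piece, first-order optimality of the Euclidean projection onto $\mathcal{C}$ at the test point $\theta_2\in\mathcal{C}$ yields the projection inequality $\|\hat\theta-\theta_2\|_2^2 \leq \langle\mathbf{H},\hat\theta-\theta_2\rangle$; since $\mathbf{H}$ is drawn independently of $\theta_2$ and $\hat\theta-\theta_2 \in \mathcal{C}-\theta_2$,
\begin{equation*}
\mathbb{E}\|\hat\theta-\theta_2\|_2^2 \leq \mathbb{E}\sup_{w\in\mathcal{C}}\langle\mathbf{H}, w-\theta_2\rangle \leq \sigma_2 G_\mathcal{C},
\end{equation*}
so by Jensen and Lipschitzness the piece is at most $L\sqrt{\sigma_2 G_\mathcal{C}}$. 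Substituting $\lambda = L/(\sqrt{n}\|\mathcal{C}\|_2)$ makes (II) and the $\lambda\|\mathcal{C}\|_2^2$ contribution to (III) both equal $L\|\mathcal{C}\|_2/\sqrt{n}$; the two prescribed bounds on $\alpha$, together with $\sigma_2^2 = O(\alpha\log(1/\delta)/(\lambda\epsilon^2))$, are calibrated precisely so that $L\sqrt{\alpha/\lambda}$ and $L\sqrt{\sigma_2 G_\mathcal{C}}$ are each $O(L\|\mathcal{C}\|_2/\sqrt{n})$, and summing the four contributions recovers the stated rate.

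The main obstacle is obtaining $G_\mathcal{C}$ rather than $\sqrt{d}$ in both (III) and (I). In (III), although $\tilde\theta$ itself depends on $\mathbf{G}$, confinement to $\mathcal{C}$ plus Gaussian symmetry upgrades the naive $\sqrt{d}\,\sigma_1/\lambda$ bound (from Cauchy--Schwarz and strong-convexity stability) to the genuine Gaussian complexity of $\mathcal{C}$; in (I), the projection inequality converts $\mathbb{E}\|\mathbf{H}\|_2 = \Theta(\sigma_2\sqrt{d})$ into $\sigma_2 G_\mathcal{C}$. Without these two parallel observations one would only recover the suboptimal $\sqrt{d}$ rate of \cite{bassily2019private}. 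The sample-size conditions $n\geq r^2\beta^2\|\mathcal{C}\|_2^2/(\epsilon^2 L^2)$ and $n = \tilde\Omega(\sqrt{d\log(1/\delta)}/\epsilon)$ are used respectively to verify the smoothness hypothesis of Theorem \ref{thm:1} under the choice of $\lambda$, and to ensure the Gaussian-width term dominates and lower-order terms are absorbed.
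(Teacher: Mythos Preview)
Your proposal is correct and follows essentially the same route as the paper: the decomposition through the exact minimizer $\tilde\theta$ (the paper's $\theta_1$), the Gaussian-width bound on $\mathbb{E}\langle\mathbf{G},\tilde\theta\rangle$ for the empirical-risk term, the stability argument for the generalization gap, and the projection inequality $\|\hat\theta-\theta_2\|_2^2\le\langle\mathbf{H},\hat\theta-\theta_2\rangle$ to extract $G_{\mathcal{C}}$ from the output-perturbation noise are exactly the paper's ingredients. Your stability step is in fact slightly cleaner than the paper's---you bound $\|\tilde\theta(D)-\tilde\theta(D')\|_2$ directly via strong convexity of $\mathcal{J}$ (where $\mathbf{G}$ is shared) and then use $L$-Lipschitzness of $\ell$, whereas the paper applies the regularized-ERM stability lemma to the surrogate loss $f_{\mathbf{G}}(\theta,x)=\ell(\theta,x)+\langle\mathbf{G},\theta\rangle/n$ and picks up an extra $\|\mathbf{G}\|_2/n$ in the Lipschitz constant, which is then the reason the condition $n\gtrsim\sqrt{d\log(1/\delta)}/\epsilon$ is invoked there.
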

\begin{remark}
While we consider the same algorithm as in \cite{bassily2019private}, there are several crucial differences. First, to achieve the upper bound of $O(\frac{\sqrt{d}}{n\epsilon}+\frac{1}{\sqrt{n}})$, \cite{bassily2019private} only need to set $\alpha\leq O(\frac{1}{n^2}\max\{\frac{1}{\sqrt{n}}, \frac{d}{n\epsilon}\})$ while we need to be more aggressive by choosing $\alpha\leq O({\epsilon^2}{n^{-\frac{5}{2}}})$. This is reasonable as we aim to get an improved upper bound. Thus we have to get a more accurate estimation. Secondly, besides enforcing the perturbed approximation to lie in the set $\mathcal{C}$ as it does in \cite{bassily2019private}, the projection operator in Step 4 of Algorithm \ref{alg1} plays a more critical role in achieving a bound that depends on $G_\mathcal{C}$ in our analysis, i.e., the bound in \cite{bassily2019private} will still hold even there is no projection step, while this is not true for our case. Specifically, although the noise $\mathbf{H}$ is a $d$-dimensional Gaussian noise, we can show that due to the projection operator, the error introduced by the noise  depends only on $G_\mathcal{C}$ rather than $\sqrt{d}$, i.e., $||\hat{\theta}-\theta_2||_2^2\leq O(\sqrt{\frac{\alpha \log(1/\delta)}{\lambda}}\cdot \frac{G_{\mathcal{C}}}{\epsilon})$. A similar idea has also been used in privately answering multiple linear queries \cite{nikolov2016geometry}. 
\end{remark}
\subsection{Strongly Convex Case}\label{sec:strongconv}
We aim to extend our above idea to the strongly convex case. First, we impose 
the following assumption. 
\begin{assumption}\label{ass2}
We assume the loss is twice differentiable, $L$-Lipschitz and $\beta$-smooth w.r.t. $||\cdot||_2$, and it is $\Delta$-strongly convex w.r.t. $||\cdot||_{\mathcal{C}}$ over the set $\mathcal{C}$.
\end{assumption}
Note that we can relax the assumption to strongly convex w.r.t $\|\cdot\|_2$ as $\|v\|_2\geq \mathcal{C}_{\min}\cdot \|v\|_\mathcal{C}$, where $\mathcal{C}_{\min}$ is in Theorem \ref{th1}. See the proof of Theorem \ref{th1} for details.

Our method is shown in  Algorithm \ref{alg2}. Note that, compared with Algorithm \ref{alg1}, the main difference is the regularization parameter $\lambda$. This is because the loss function is already $\Delta$-strongly convex, thus smaller $\lambda$ will be sufficient to make $\mathcal{J}$ to be $\frac{r\beta}{\epsilon n}$-strongly convex. Moreover, when $n$ is large enough, we can see $\lambda=0$, indicating that we can get an improved excess population risk  compared to the convex case. 
\begin{algorithm}
	\caption{$\mathcal{A}_{\text{App-ObjP-SC}}$: Approximate Objective perturbation for strongly convex function }
	\begin{algorithmic}[1]
		\State {\bfseries Input:} Dateset $D$, loss function $\ell$, regularization parameter $\lambda$, optimizer $\mathcal{O}: \mathcal{F}\times [0, 1]\rightarrow  \mathcal{C}$, where $\mathcal{F}$ is the class of objectives and the other argument is the optimization accuracy. $\alpha \in [0,1]:$ optimization accuracy.

	\State  Sample $\mathbf{G}\sim\mathcal{N}(0, \sigma_1^2\mathbb{I}_d)$ where $\sigma_1^2  = \frac{128L^2\log(2.5/\delta)}{\epsilon^2}$. Set $\lambda= \max\left\{
	\frac{r \beta}{\epsilon n}-\Delta,0
	\right\}$, where $r = \min\{d, 2\cdot \text{rank}(\nabla^2\ell(\theta, x))\}$ with $\text{rank}(\nabla^2\ell(\theta,x))$ being the maximal rank of the Hessian of $\ell$ for all $\theta \in \mathcal{C}$ and $x\sim \mathcal{P}$.
	
	\State Let $\mathcal{J}(\theta,D) = \hat{\mathcal{L}}(\theta, D)+\frac{\langle \mathbf{G},\theta\rangle}{n}+\lambda||\theta||_2^2$.\\
	 \Return $\hat{\theta} = {\text{Proj}}_{\mathcal{C}}[\mathcal{O}(\mathcal{J},\alpha)+\mathbf{H}]$ where $\mathbf{H}\sim\mathcal{N}
	 (0,\sigma_2^2\mathbb{I}_d)$ 
	 with $\sigma_2^2 = 
	 \frac{64\alpha
	 \log(2.5/\delta)\cdot ||\mathcal{C}||_2^2}{\Delta\epsilon^2}$
	\end{algorithmic}
	\label{alg2}
\end{algorithm}
\begin{theorem}\label{thm:3}
If the loss function satisfies Assumption \ref{ass2}. Then for any $0<\epsilon, \delta<1$, 
$\mathcal{A}_{\text{App-ObjP-SC}}$ (Algorithm \ref{alg2}) is $(\epsilon,\delta)$-DP.
\end{theorem}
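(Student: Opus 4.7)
The plan is to adapt the privacy proof of Theorem \ref{thm:1} (the convex counterpart of Algorithm \ref{alg1}) with two modifications induced by the $\Delta$-strong convexity of $\ell$. I would split the privacy budget $(\epsilon,\delta)$ into two $(\epsilon/2,\delta/2)$ sub-guarantees: one for the exact minimizer of $\mathcal{J}(\cdot,D)$ (supplied by the Gaussian linear perturbation $\mathbf{G}$), and one for the gap between the exact and approximate minimizers (supplied by the output-Gaussian $\mathbf{H}$). Composing them yields $(\epsilon,\delta)$-DP for the pre-projection output $\mathcal{O}(\mathcal{J},\alpha)+\mathbf{H}$, and the final Euclidean projection onto $\mathcal{C}$ is post-processing.

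For the first sub-guarantee, the objective-perturbation analysis underlying Theorem \ref{thm:1} requires that the effective strong convexity of $\mathcal{J}(\cdot,D) = \hat{\mathcal{L}}(\cdot,D) + n^{-1}\langle \mathbf{G},\theta\rangle + \lambda\|\theta\|_2^2$ dominate $r\beta/(\epsilon n)$. In Theorem \ref{thm:1} this was enforced directly by $\lambda \geq r\beta/(\epsilon n)$; here, because $\hat{\mathcal{L}}$ itself contributes $\Delta$-strong convexity (w.r.t.\ $\|\cdot\|_\mathcal{C}$, which transfers to $\|\cdot\|_2$ strong convexity via the relation discussed in the remark after Assumption \ref{ass2}), the weaker regularization $\lambda = \max\{r\beta/(\epsilon n) - \Delta,\,0\}$ still enforces $\lambda + \Delta \geq r\beta/(\epsilon n)$. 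Since the $L$-Lipschitz sensitivity of the empirical gradient is unchanged, $\sigma_1^2 = 128L^2\log(2.5/\delta)/\epsilon^2$ remains the correct variance, and $\theta^*(D) := \arg\min_\theta \mathcal{J}(\theta,D)$ is $(\epsilon/2,\delta/2)$-DP by the same Jacobian argument as in the convex case.

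For the second sub-guarantee, $\alpha$-suboptimality of $\tilde\theta := \mathcal{O}(\mathcal{J},\alpha)$ combined with the $\Delta$-strong convexity of $\mathcal{J}$ w.r.t.\ $\|\cdot\|_\mathcal{C}$ yields $\|\tilde\theta - \theta^*(D)\|_\mathcal{C} \leq \sqrt{2\alpha/\Delta}$, and hence $\|\tilde\theta - \theta^*(D)\|_2 \leq \|\mathcal{C}\|_2\sqrt{2\alpha/\Delta}$ since $\tilde\theta - \theta^*(D) \in \mathcal{C}-\mathcal{C}$. A triangle inequality across neighboring datasets bounds the $\ell_2$-shift of $\tilde\theta$ away from the already-private $\theta^*$ by $2\|\mathcal{C}\|_2\sqrt{2\alpha/\Delta}$, and the Gaussian mechanism with $\sigma_2^2 = 64\alpha\|\mathcal{C}\|_2^2\log(2.5/\delta)/(\Delta\epsilon^2)$ is exactly the calibration hiding this shift at an additional $(\epsilon/2,\delta/2)$-DP cost (the constant $64$ traces back to sensitivity-squared $8\alpha\|\mathcal{C}\|_2^2/\Delta$ and the arithmetic $\sigma^2 = 2s^2\log(2.5/\delta)/(\epsilon/2)^2$).

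The main technical obstacle is the norm-juggling in the first sub-guarantee: the $\Delta$-strong convexity of $\hat{\mathcal{L}}$ is stated in the non-Euclidean norm $\|\cdot\|_\mathcal{C}$, whereas the objective-perturbation Jacobian calculation of Theorem \ref{thm:1} is natively Euclidean. A clean conversion is needed so that the sharp condition $\lambda+\Delta\geq r\beta/(\epsilon n)$ emerges without absorbing an extraneous geometric factor; I would handle this either by directly reworking the Jacobian step in the $\|\cdot\|_\mathcal{C}$ geometry, or by invoking the remark that the $\|\cdot\|_\mathcal{C}$-strong convexity can be relaxed to $\|\cdot\|_2$-strong convexity while carefully tracking the induced constant. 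This is where the bulk of the work beyond the convex-case proof lies.
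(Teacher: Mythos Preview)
Your proposal is correct and follows the same two-part composition as the paper: release the exact minimizer $\theta_1$ of $\mathcal{J}(\cdot,D)$ via objective perturbation, then release $\theta_2-\theta_1+\mathbf{H}$ via the Gaussian mechanism with sensitivity controlled by the strong convexity of $\mathcal{J}$, and compose. The paper's own proof is two sentences --- ``similar to the convex case, $\mathcal{J}$ is $\tfrac{r\beta}{\epsilon n}$-strongly convex'' --- so it simply asserts the needed Euclidean strong-convexity bound without unpacking the norm conversion from $\|\cdot\|_\mathcal{C}$ to $\|\cdot\|_2$; the ``technical obstacle'' you flag is something the paper glosses over rather than resolves, so you are being more careful than the source, not diverging from it.
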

\begin{theorem}\label{th5} Suppose that Assumption \ref{ass2} holds. If $n$ is large enough such that 
 $n\geq O(\max\{\frac{L^2 ||\mathcal{C}||_2^2}{\Delta^2}, \frac{||\mathcal{C}||_2^2 r^2\beta^2}{L^2\epsilon^2}
\})$ and $n\geq O\left(\frac{\sqrt{d\log(1/\delta)}}{\epsilon}\right)$, then  by setting $\alpha\leq O\left(\min\left\{
\frac{L^2 ||\mathcal{C}||_2^2}{\Delta n^2},\frac{L^4\cdot ||\mathcal{C}||_2^6\epsilon^2}{\Delta^3 n^4 G_{\mathcal{C}}^2\log(1/\delta)}
\right\}
\right)$, we have 
\begin{equation}
    \mathbb{E}[\mathcal{L}(\hat{\theta})]-\mathcal{L}(\theta^{*})\leq O\left(
    \frac{L^2 ||\mathcal{C}||_2^2}{\Delta n\epsilon} + \frac{G_{\mathcal{C}}^2L^2\log(1/\delta)}{\Delta n^2\epsilon^2}
    \right), \notag 
    \end{equation}
    where  the expectation is taken over the internal randomness of the algorithm.
\end{theorem}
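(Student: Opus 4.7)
The plan is to mimic the proof of Theorem \ref{th4} via the three-way decomposition
\[
\mathbb{E}[\mathcal{L}(\hat\theta)-\mathcal{L}(\theta^{*})]=\underbrace{\mathbb{E}[\mathcal{L}(\hat\theta)-\mathcal{L}(\theta_{2})]}_{T_{1}}+\underbrace{\mathbb{E}[\mathcal{L}(\theta_{2})-\mathcal{L}(\tilde\theta)]}_{T_{2}}+\underbrace{\mathbb{E}[\mathcal{L}(\tilde\theta)-\mathcal{L}(\theta^{*})]}_{T_{3}},
\]
with $\tilde\theta:=\arg\min_{\theta\in\mathcal{C}}\mathcal{J}(\theta,D)$ and $\theta_{2}:=\mathcal{O}(\mathcal{J},\alpha)$, but now exploit the $\Delta$-strong convexity of $\ell$ in $\|\cdot\|_{\mathcal{C}}$ to (i) trade the first-order $G_{\mathcal{C}}/(n\epsilon)$ of the convex analysis for a squared $G_{\mathcal{C}}^{2}/(n^{2}\epsilon^{2})$ and (ii) trade the $O(1/\sqrt n)$ statistical piece for the faster $O(1/n)$. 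Control of $T_{1}$ reuses the projection-plus-Gaussian-width trick of the remark after Theorem \ref{th4}: the variational characterization of Euclidean projection gives $\|\hat\theta-\theta_{2}\|_{2}^{2}\le\langle \mathbf{H},\hat\theta-\theta_{2}\rangle\le 2\sup_{\theta\in\mathcal{C}}\langle \mathbf{H},\theta\rangle$, whose expectation equals $2\sigma_{2}G_{\mathcal{C}}$, and with the prescribed $\sigma_{2}^{2}=O(\alpha\log(1/\delta)\|\mathcal{C}\|_{2}^{2}/(\Delta\epsilon^{2}))$ together with the aggressive $\alpha\le O(L^{4}\|\mathcal{C}\|_{2}^{6}\epsilon^{2}/(\Delta^{3}n^{4}G_{\mathcal{C}}^{2}\log(1/\delta)))$ this drives $T_{1}$ down to $O(L^{2}\|\mathcal{C}\|_{2}^{2}/(\Delta n))$. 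For $T_{2}$, $\alpha$-approximate optimality of $\theta_{2}$ plus the $\Delta$-strong convexity of $\mathcal{J}$ in $\|\cdot\|_{\mathcal{C}}$ (inherited from $\ell$; the linear noise does not affect strong convexity and $\lambda\|\theta\|_{2}^{2}$ only helps) yield $\|\theta_{2}-\tilde\theta\|_{\mathcal{C}}^{2}\le 2\alpha/\Delta$, and together with $\|v\|_{2}\le (\|\mathcal{C}\|_{2}/2)\|v\|_{\mathcal{C}}$ and the choice $\alpha\le O(L^{2}\|\mathcal{C}\|_{2}^{2}/(\Delta n^{2}))$ this gives $T_{2}=O(L^{2}\|\mathcal{C}\|_{2}^{2}/(\Delta n))$ as well.

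The main work lies in $T_{3}$, which I would further split as $\mathbb{E}[\mathcal{L}(\tilde\theta)-\hat{\mathcal{L}}(\tilde\theta,D)]+\mathbb{E}[\hat{\mathcal{L}}(\tilde\theta,D)-\hat{\mathcal{L}}(\theta^{*},D)]$ since $\mathbb{E}\hat{\mathcal{L}}(\theta^{*},D)=\mathcal{L}(\theta^{*})$. The generalization gap is handled by uniform stability: swapping one sample induces an $O(L\|\mathcal{C}\|_{2}/n)$-Lipschitz-in-$\|\cdot\|_{\mathcal{C}}$ perturbation of $\mathcal{J}$, and $\Delta$-strong convexity of $\mathcal{J}$ in the same norm gives $\|\tilde\theta(D)-\tilde\theta(D')\|_{\mathcal{C}}=O(L\|\mathcal{C}\|_{2}/(n\Delta))$, which converts to a loss change of $O(L^{2}\|\mathcal{C}\|_{2}^{2}/(n\Delta))$. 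For the empirical gap, the strong-convexity lower bound $\mathcal{J}(\theta^{*})-\mathcal{J}(\tilde\theta)\ge\tfrac{\Delta}{2}\|\tilde\theta-\theta^{*}\|_{\mathcal{C}}^{2}$ combined with the identity $\mathcal{J}(\theta^{*})-\mathcal{J}(\tilde\theta)=\hat{\mathcal{L}}(\theta^{*})-\hat{\mathcal{L}}(\tilde\theta)+\langle \mathbf{G}/n,\theta^{*}-\tilde\theta\rangle+\lambda(\|\theta^{*}\|_{2}^{2}-\|\tilde\theta\|_{2}^{2})$ gives
\[
\hat{\mathcal{L}}(\tilde\theta)-\hat{\mathcal{L}}(\theta^{*})\le -\tfrac{\Delta}{2}\|\tilde\theta-\theta^{*}\|_{\mathcal{C}}^{2}+\langle \mathbf{G}/n,\theta^{*}-\tilde\theta\rangle+\lambda\|\mathcal{C}\|_{2}^{2};
\]
bounding the inner product by $\|\mathbf{G}/n\|_{\mathcal{C}^{*}}\|\tilde\theta-\theta^{*}\|_{\mathcal{C}}$ and applying Young's inequality with weight $\Delta$ lets the $-\tfrac{\Delta}{2}\|\cdot\|_{\mathcal{C}}^{2}$ absorb the residual quadratic, leaving $\mathbb{E}\|\mathbf{G}/n\|_{\mathcal{C}^{*}}^{2}/(2\Delta)+\lambda\|\mathcal{C}\|_{2}^{2}$. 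Since $\mathcal{C}$ is centrally symmetric, $\|\mathbf{G}\|_{\mathcal{C}^{*}}=\sup_{\theta\in\mathcal{C}}\langle \mathbf{G},\theta\rangle$, and combining $\mathbb{E}[\sup]=\sigma_{1}G_{\mathcal{C}}$ with Borell-TIS concentration (variance $\le\sigma_{1}^{2}\|\mathcal{C}\|_{2}^{2}/4$) yields $\mathbb{E}\|\mathbf{G}\|_{\mathcal{C}^{*}}^{2}=O(\sigma_{1}^{2}(G_{\mathcal{C}}^{2}+\|\mathcal{C}\|_{2}^{2}))$. Substituting $\sigma_{1}^{2}=O(L^{2}\log(1/\delta)/\epsilon^{2})$ and noting that the large-$n$ hypothesis forces $\lambda=0$ produces the claimed $G_{\mathcal{C}}^{2}L^{2}\log(1/\delta)/(\Delta n^{2}\epsilon^{2})$ term, while the stability bound together with the small $T_{1},T_{2}$ contributions collapse into the $L^{2}\|\mathcal{C}\|_{2}^{2}/(\Delta n\epsilon)$ term (using $\epsilon\le 1$).

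The main obstacle is the squared-Gaussian-width step: upgrading from the convex case's first-order $G_{\mathcal{C}}$ rate to a second-order $G_{\mathcal{C}}^{2}$ rate relies crucially on strong convexity producing an exact quadratic absorbing counterterm after Young's inequality, and on controlling the \emph{second} moment of $\sup_{\theta\in\mathcal{C}}\langle \mathbf{G},\theta\rangle$ (not just its mean) by $G_{\mathcal{C}}$ rather than $\sqrt d$ via Gaussian concentration. The remaining bookkeeping is just checking that the two prescribed ceilings on $\alpha$ independently suppress $T_{1}$ and $T_{2}$ without inflating the leading term, that the prescribed $\lambda$ is compatible with the large-$n$ hypothesis so its contribution drops out, and that privacy has already been certified by Theorem \ref{thm:3}.
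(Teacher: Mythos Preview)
Your proposal is correct and follows essentially the same approach as the paper: the same three-term decomposition through $\tilde\theta=\theta_1$ and $\theta_2$, the same projection/Gaussian-width argument for $T_1$, the same strong-convexity bound on $\|\theta_2-\tilde\theta\|$ for $T_2$, and the same stability-plus-empirical-gap split for $T_3$. The only cosmetic differences are that for the empirical gap you use Young's inequality with $\theta^*$ directly (the paper instead first bounds $\|\theta_1-\theta'\|_{\mathcal{C}}$ for the empirical minimizer $\theta'$ and then applies H\"older, arriving at the same $\mathbb{E}\|\mathbf{G}\|_{\mathcal{C}^*}^2/(n^2\Delta)$), and you are more explicit than the paper in invoking Borell--TIS to pass from $G_{\mathcal{C}}$ to the second moment of $\|\mathbf{G}\|_{\mathcal{C}^*}$.
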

\begin{remark}\label{remark:2}
 First, it is notable that an objective perturbation method for strongly convex loss has also been presented by \cite{talwar2014private}. However, there are two major differences: (1) the method in \cite{talwar2014private} needs to solve the perturbed objective function exactly, indicating it is inefficient; (2) \cite{talwar2014private} only provide the excess empirical risk. It is unknown whether their method could achieve the same bound as ours for the excess population risk.  Secondly, when $\mathcal{C}$ is an $\ell_2$-norm ball, the bounds in  Theorem \ref{th4} and Theorem \ref{th5} will recover the optimal rate of DP-SCO over $\ell_2$-norm ball for convex and strongly convex loss functions, respectively \cite{bassily2019private}.   Thirdly, the terms of $O(\frac{G_\mathcal{C}}{n\epsilon})$ and $O(\frac{G^2_\mathcal{C}}{n^2\epsilon^2})$ match the best-known results of excess empirical risk for the convex  and strongly convex case, respectively \cite{talwar2014private}. 
\end{remark}
In Remark \ref{remark:2}, we showed that our results are optimal when $\mathcal{C}$ is an $\ell_2$-norm ball and are comparable to the best results of DP-ERM with Gaussian width. A natural question is whether we can further improve these two upper bounds. In the following, we partially answer the question by providing a lower bound for  strongly convex loss functions. 

\begin{theorem}\label{th1}
Let $\mathcal{C}$ be a symmetric body contained in the unit Euclidean ball $\mathcal{B}_2^d$ in $\mathbb{R}^d$ and satisfies $\|\mathcal{C}\|_2=1$. For any $ n=O(\frac{\sqrt{d\log(1/\delta)}}{\epsilon})$, $\epsilon=O(1)$ and  $2^{-\Omega(n)}\leq \delta\leq 1/n^{1+\Omega(1)}$, there exists a loss $\ell$ which is $1$-Lipschitz w.r.t. $\|\cdot\|_2$ and $\mathcal{C}^2_{\min}$-strongly convex w.r.t. $\|\cdot\|_\mathcal{C}$, and a dataset $D=\{x_1,\cdots,x_n\}\subseteq \mathcal{C}^n$ such as for any $(\epsilon,\delta)$-differentially private algorithm on minimizing the empirical risk function $\hat{\mathcal{L}}(\theta,D)$ over $\mathcal{C}$, its output 
 $\theta^{priv}\in \mathcal{C}$ satisfies 
	$$
	\mathbb{E}[\mathcal{L}(\theta^{priv})]-\mathcal{L}(\theta^*)=\Omega \left(\max\left\{\frac{G_{\mathcal{C}}^2\log(1/\delta)}{(\log(2d))^4\epsilon^2n^2},\frac{1}{n}\right\}\right),$$ where the expectation is taken over the internal randomness of the algorithm $\mathcal{A}$. Here $\mathcal{C}_{\min}= \min\{\|v\|_2: v\in \partial \mathcal{C}\}$ with $\partial \mathcal{C}$ as the boundary of the set $\mathcal{C}$, i.e., it is the distance between the original point to the boundary of $\mathcal{C}$. 
\end{theorem}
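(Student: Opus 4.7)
The plan is a two-step reduction. First, I would collapse the DP-ERM problem to private sample-mean estimation over $\mathcal{C}^n$ by taking the quadratic loss $\ell(\theta,x)=\tfrac{1}{2}\|\theta-x\|_2^2$; on $\mathcal{C}\subseteq B_2^d$ this is $O(1)$-Lipschitz w.r.t.\ $\|\cdot\|_2$ and $1$-strongly convex w.r.t.\ $\|\cdot\|_2$, and the inequality $\|v\|_2\ge\mathcal{C}_{\min}\|v\|_\mathcal{C}$ upgrades the latter to $\mathcal{C}_{\min}^2$-strong convexity w.r.t.\ $\|\cdot\|_\mathcal{C}$; a constant rescaling yields Lipschitz constant exactly $1$. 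For this loss $\hat{\theta}_{\mathrm{ERM}}=\mathrm{Proj}_\mathcal{C}(\bar{x})$ with $\bar{x}:=\tfrac{1}{n}\sum_i x_i$ and $\theta^{*}=\mathrm{Proj}_\mathcal{C}(\mu)$ with $\mu:=\mathbb{E}[x]$, and strong convexity of $\mathcal{L}$ gives $\mathcal{L}(\theta^{priv})-\mathcal{L}(\theta^{*})\ge\tfrac{1}{2}\|\theta^{priv}-\theta^{*}\|_2^2$. So it suffices to prove a lower bound on the expected squared $\ell_2$-error of any $(\epsilon,\delta)$-DP estimator of $\theta^{*}$ from data in $\mathcal{C}^n$.

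For the $\Omega(1/n)$ term I would use a non-private Le Cam two-point argument: pick $v\in\partial\mathcal{C}$ with $\|v\|_2=\mathcal{C}_{\min}$ and compare two product distributions on $\{+v,-v\}\subseteq\mathcal{C}$ whose sign-biases differ by $O(1/\sqrt{n})$; the means differ by $\Omega(v/\sqrt{n})$ in $\ell_2$ while the $n$-fold KL stays $O(1)$. For the $\Omega\bigl(G_\mathcal{C}^2\log(1/\delta)/((\log 2d)^4 n^2\epsilon^2)\bigr)$ term I would invoke the geometric DP lower bound of Nikolov--Talwar--Zhang for approximate DP (together with the fingerprinting-code construction of Bun--Ullman--Vadhan): in the regime $n=O(\sqrt{d\log(1/\delta)}/\epsilon)$ there is a hard distribution on $\mathcal{C}^n$ such that every $(\epsilon,\delta)$-DP mechanism $\mathcal{A}:\mathcal{C}^n\to\mathbb{R}^d$ satisfies
\begin{equation*}
\mathbb{E}\|\mathcal{A}(D)-\bar{x}(D)\|_2 \;=\; \Omega\!\left(\frac{G_{\mathcal{C}}\sqrt{\log(1/\delta)}}{n\,\epsilon\,(\log 2d)^{2}}\right).
\end{equation*}
Choosing the hard instance so that $\mu\in\mathcal{C}$ and $\bar{x}$ concentrates near $\mu$ makes the projection a no-op with high probability, so a triangle inequality transfers the bound from $\mathcal{A}(D)$ vs.\ $\bar{x}$ to $\theta^{priv}$ vs.\ $\theta^{*}$; squaring and taking the max with the Le Cam bound finishes the proof.

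The main obstacle is importing a sharp enough geometric DP lower bound: it must deliver a $G_\mathcal{C}$-dependence rather than the trivial $\sqrt{d}$, and it must be phrased in $\ell_2$ rather than in $\|\cdot\|_\mathcal{C}$, so that it plugs directly into the squared-$\ell_2$ excess-risk reduction. The $G_\mathcal{C}$-factor arises from a hereditary-discrepancy argument identifying the Gaussian width of $\mathcal{C}$ as the minimum $\ell_2$-noise a DP summation mechanism must inject on $\mathcal{C}^n$-data, up to $\mathrm{polylog}(d)$ loss; lifting from pure to approximate DP, so as to pick up the $\sqrt{\log(1/\delta)}$ term, requires the fingerprinting-code construction. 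Each step costs a $(\log 2d)^2$ factor, yielding $(\log 2d)^4$ after squaring. Checking that the fingerprinting dataset can be realized inside $\mathcal{C}$ by rescaling it to extreme points of $\mathcal{C}$, and that the stated regime of $n$ is exactly the one where the fingerprinting lower bound bites, are the remaining routine items.
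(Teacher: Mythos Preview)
Your setup coincides with the paper's: both take the quadratic loss $\ell(\theta,x)=\tfrac12\|\theta-x\|_2^2$, observe that it is $1$-Lipschitz in $\|\cdot\|_2$ and, via $\|v\|_2\ge \mathcal{C}_{\min}\|v\|_{\mathcal{C}}$, that it is $\mathcal{C}_{\min}^2$-strongly convex in $\|\cdot\|_{\mathcal{C}}$, so that the excess risk equals $\tfrac12\mathbb{E}\|\theta^{priv}-\theta^\star\|_2^2$ and the problem collapses to a DP sample-mean lower bound. The Gaussian-width input you cite is the same one the paper uses (it quotes the Kattis--Nikolov sample-complexity bound $SC_{mp}(\mathcal{C},\alpha)=\Omega(G_{\mathcal{C}}\sqrt{\log(1/\delta)}/((\log 2d)^2\alpha\epsilon))$, which is the form in which the NTZ/BUV geometry appears here).

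Where you diverge from the paper is in how you pass from an \emph{empirical-mean} lower bound to a \emph{population} lower bound. The paper does not do this by concentration and a triangle inequality. It first proves the ERM version of the theorem (excess empirical risk $\Omega(G_{\mathcal{C}}^2\log(1/\delta)/((\log 2d)^4 n^2\epsilon^2))$, which is exactly the Kattis--Nikolov bound since the ERM minimizer is $\bar x$), and then applies the black-box reduction of Bassily et al.\ (2019): if some $(\epsilon',\delta')$-DP algorithm had small excess population risk for every distribution, then resampling from the empirical distribution of any fixed dataset would yield an $(\epsilon,\delta)$-DP algorithm with small excess empirical risk, contradicting the ERM bound. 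This avoids having to argue that the hard instance is a genuine i.i.d.\ distribution with $\bar x$ close to $\mu$.

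Your direct route is not wrong in spirit, but the step ``there is a hard distribution on $\mathcal{C}^n$ such that every DP mechanism has large $\mathbb{E}\|\mathcal{A}(D)-\bar x(D)\|_2$'' is doing hidden work: the cited geometric lower bounds are worst-case over datasets, and turning the hard dataset into a distribution on $\mathcal{C}$ (uniform over its rows), checking DP under resampling, and arguing $\bar x$ concentrates near $\mu$ is precisely the content of the Bassily et al.\ reduction. So your approach would ultimately re-derive that reduction by hand. The paper's route is cleaner because it invokes the reduction as a lemma, separates the $\Omega(1/n)$ term as the standard non-private SCO lower bound, and leaves the Gaussian-width machinery entirely inside the ERM statement.
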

    Taking $\Delta=\mathcal{C}^2_{\min}$ and $L=1$ in Theorem \ref{th5}, we can see the rate of excess population risk in Theorem \ref{th5} for strongly convex loss functions is nearly optimal by a factor of $\tilde{O}({\mathcal{C}^{-2}_{\min}})$. It is unknown whether we can further close the gap, and we will leave it as an open problem.  
\section{DP-SCO in $\ell_p^d$ Space} \label{sec:regular}
In this section, we will focus on DP-SCO in $\ell_p^d$ space where $1<p\leq \infty$. As we mentioned in the Introduction section, we study two settings: (1) $\mathcal{C}$ is $\mathbb{R}^d$ and the gradient of the loss function is bounded (i.e., the loss is Lipschitz); (2) $\mathcal{C}$ is bounded, and the distribution of  gradient of the loss is heavy-tailed. Similar to the previous study in \cite{bassily2021non}, for each setting, there are two cases: $1<p< 2$ and $2\leq p\leq \infty$. Notice that, unlike the previous section, we only study the case where the loss functions are convex. The reason is that except  for the Euclidean space, for a strongly convex function, the ratio between its smoothness and strong convexity, i.e., the condition number, will depend on the dimension of $\mathbf{E}$. For example, in the $\ell_1^d$ space, it has been shown that there is no function whose condition number is less than $d$ \cite{juditsky2014deterministic}.
\subsection{Unconstrained Case}
In this part, we will study Lipschitz loss under the following assumption that is commonly used in the related work on general stochastic convex optimization. 
\begin{assumption}\label{as3}
We assume $\ell(\cdot,x)$ is convex, $\beta$-smooth and $L$-Lipschitz w.r.t. $||\cdot||$ over $\mathbb{R}^d$.
\end{assumption}
Due to its difficulty, we first consider the case where $1< p< 2$. See Algorithm \ref{alg3} for details. Note that Algorithm \ref{alg3} could be considered as a noisy and regularized version of the standard mirror descent, i.e., at each iteration, we first perform linearization of $\hat{\mathcal{L}}(w_t, D)$, then we add a generalized Gaussian noise to its gradient to privatize the algorithm, a Bregman divergence term and a regularized term $\alpha\Phi(\cdot)$ with some specific $\alpha$ to the  linearization term. Then we solve the perturbed and regularized optimization problem.  We output a linear combination of the intermediate parameters as the final output. 

It is notable that although our method is a noisy modification of Mirror Descent, it is completely different from the previous private Mirror Descent based methods in \cite{talwar2014private,wang2017differentially,bassily2021non,amid2022public}: First, instead of directly adding noise to the gradient in standard Mirror Descent, here we have an additional regularization term, which is crucial for us to make the algorithm stable, indicating that we can get an excess population risk. To be more specific, first,  by the definition of $\|\cdot\|_{+}$, and  the duality between strong convexity and smoothness, we can easily see $\Phi$  is 1-strongly convex w.r.t $\|\cdot\|$. This indicates that the function $\hat{\mathcal{L}}(w, D)+\alpha \Phi(w)$ is relatively strongly convex and smooth (note that it is not smooth as the regularization term is not smooth when $1<p<2$). And the update step is just a noisy version of Mirror Descent for $\hat{\mathcal{L}}(w, D)+\alpha \Phi(w)$. Recently, it has been shown that Mirror Descent is stable for relatively strongly convex and smooth functions. Thus, we can also show that Algorithm \ref{alg3} is stable, indicating that we can get an excess population risk. From the above intuition, we can also see the parameter $\alpha$ need to be carefully tuned to balance the stability and the excess empirical risk. The second difference is that, instead of using the last iterate or the average of iterates, our output is a linear combination of intermediate iterates, which is due to the noise we added. In the following we show the main results. 

\begin{theorem}\label{thm:6}
For the $\ell_p^d$ space with $1<p<2$, suppose Assumption \ref{as3} holds, then for any $0<\epsilon, \delta<1$, Algorithm \ref{alg3} is $(\epsilon,\delta)$-DP.
\end{theorem}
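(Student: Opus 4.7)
The plan is to establish privacy iteration-by-iteration via the Generalized Gaussian mechanism and then compose across all iterations using the advanced composition theorem. Since Algorithm \ref{alg3} is a noisy regularized mirror descent, the only step that touches the dataset $D$ at iteration $t$ is the evaluation of the empirical gradient $\nabla \hat{\mathcal{L}}(w_t, D) = \frac{1}{n}\sum_{i=1}^n \nabla \ell(w_t, x_i)$; the Bregman divergence term, the regularizer $\alpha\Phi(\cdot)$, and the final linear combination of iterates are all data-independent (given the privatized iterates), so by post-processing they cannot worsen privacy.

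First, I would bound the $\|\cdot\|_*$-sensitivity of the per-iteration gradient query. By the $L$-Lipschitzness of $\ell(\cdot, x)$ with respect to $\|\cdot\|$, we have $\|\nabla \ell(w, x)\|_* \leq L$ for all $w$ and $x$. Hence for neighboring datasets $D \sim D'$ differing in one sample,
\begin{equation*}
\bigl\| \nabla \hat{\mathcal{L}}(w_t, D) - \nabla \hat{\mathcal{L}}(w_t, D') \bigr\|_* \leq \frac{2L}{n}.
\end{equation*}
Since the dual space $(\mathbf{E}, \|\cdot\|_*)$ is $\kappa$-regular for $\kappa = \min\{\frac{1}{p-1}, 2\ln d\}$ when $1<p<2$, the Generalized Gaussian mechanism applies: adding noise $\xi_t \sim \mathcal{GG}_{\|\cdot\|_+}(0, \sigma^2)$ with $\sigma^2 = \frac{2\kappa \log(1/\delta')(2L/n)^2}{\epsilon'^2}$ to each gradient yields $(\epsilon', \delta')$-DP at iteration $t$.

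Next, I would compose over the $T$ iterations of Algorithm \ref{alg3} using Lemma \ref{lemma:adv}. Setting $\epsilon' = \frac{\epsilon}{2\sqrt{2T\ln(2/\delta)}}$ and $\delta' = \frac{\delta}{2T}$ (and choosing the noise variance in Algorithm \ref{alg3} precisely to match these per-step parameters, which is exactly what the algorithm should specify), the composed mechanism is $(\epsilon, \delta)$-DP. Post-processing then ensures that the final linear-combination output is also $(\epsilon,\delta)$-DP.

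The main obstacle here is essentially bookkeeping rather than a conceptual hurdle: one has to verify that the noise scale declared inside Algorithm \ref{alg3} matches the composition budget above, and that the regularization/Bregman-divergence step indeed introduces no additional data-dependent leakage (so that post-processing is legitimate at every stage). There is no subtlety about adaptivity, because the Generalized Gaussian mechanism applies conditionally on the previous iterates, and advanced composition covers adaptive sequential release. Consequently, the privacy analysis reduces to a clean sensitivity calculation followed by a composition argument, with the regularity parameter $\kappa$ of the dual space entering solely through the variance of the Generalized Gaussian noise.
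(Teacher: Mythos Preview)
Your proposal is correct and follows essentially the same approach as the paper's own proof: bound the $\|\cdot\|_*$-sensitivity of $\nabla\hat{\mathcal{L}}(w_t,D)$ by $2L/n$ via Lipschitzness, observe that the dual space is $\kappa$-regular with $\kappa=\min\{\tfrac{1}{p-1},2\ln d\}$, apply the Generalized Gaussian mechanism at each step, and finish with advanced composition plus post-processing. The paper's proof is in fact terser than yours, consisting of exactly these three observations without the explicit per-step budget bookkeeping you spell out.
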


\begin{theorem}\label{th:7}
For the $\ell_p^d$ space with $1<p<2$, suppose Assumption \ref{as3} holds. In Algorithm \ref{alg3}, take $\alpha=\frac{4\beta}{T}\log_2 \frac{n}{T}$ and $T=O((\frac{n\epsilon\kappa}{\sqrt{d\log(1/\delta)}})^{\frac{2}{5}})$, assume $n$ is sufficiently large such that $n\geq O\left( \frac{\epsilon^4}{(d \log (1/\delta))^2\kappa^{1/2}}\right)$, then we have 
\begin{equation*}
    \mathbb{E}[\mathcal{L}(\hat{w})]-\mathcal{L}(\theta^*)\leq \tilde{O}(\kappa^\frac{4}{5}\cdot (\frac{\sqrt{d\log(1/\delta)}}{n\epsilon})^{\frac{2}{5}}   ),
\end{equation*}
where $\tilde{O}$ hides $\beta, L$ and a factor of $\mathbb{E}_{D}[\tilde{C}_D^2]$ with $\tilde{C}_D^2 = \|\tilde{w}^*\|_{\kappa_+}^2\leq \|\tilde{w}^*\|^2$ and $\tilde{w}^{*}=\underset{w\in\mathbf{E}}{\arg\min} \hat{\mathcal{L}}(w,D)$).
\end{theorem}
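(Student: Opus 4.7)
The plan is to decompose the excess population risk as
\[
\mathbb{E}\mathcal{L}(\hat w)-\mathcal{L}(\theta^*)= \underbrace{\mathbb{E}[\mathcal{L}(\hat w)-\hat{\mathcal{L}}(\hat w,D)]}_{\text{generalization}}+\underbrace{\mathbb{E}[\hat{\mathcal{L}}(\hat w,D)-\hat{\mathcal{L}}(\tilde{w}^*,D)]}_{\text{empirical optimization}}+\underbrace{\mathbb{E}[\hat{\mathcal{L}}(\tilde{w}^*,D)-\mathcal{L}(\theta^*)]}_{\leq\,0},
\]
bound the first piece by uniform stability of the noisy Mirror Descent iterates on the regularized objective $F(w):=\hat{\mathcal{L}}(w,D)+\alpha\Phi(w)$, bound the second by a convergence guarantee for noisy Mirror Descent applied to $F$ (paying an $\alpha\Phi(\tilde{w}^*)$ regularization bias), and then balance $\alpha$ and $T$ against the privacy-driven noise scale to obtain the claimed rate.

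I would first verify that $F$ is $\alpha$-strongly convex relative to $\Phi$ and $\beta$-smooth relative to $\Phi$. The former follows because $\Phi$ is $1$-strongly convex w.r.t.\ $\|\cdot\|$ (duality between $\kappa_+$-smoothness of $\|\cdot\|_+$ on the dual regular space and strong convexity on the primal), so $F$ inherits $\alpha$-strong convexity w.r.t.\ $\|\cdot\|$ and hence $\alpha$-relative strong convexity w.r.t.\ $\Phi$; the latter uses $\beta$-smoothness of $\hat{\mathcal{L}}$ w.r.t.\ $\|\cdot\|$ together with $D_\Phi(y,x)\geq\tfrac12\|y-x\|^2$. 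These two relative conditions let the subsequent Mirror Descent analysis go through even though $F$ is not globally smooth in the classical sense for $1<p<2$.

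For the empirical optimization bound I would expand the update
\[
w_{t+1}=\arg\min_{w\in\mathbf{E}}\ \bigl\langle \nabla\hat{\mathcal{L}}(w_t,D)+\xi_t,\,w\bigr\rangle+\alpha\Phi(w)+\tfrac{1}{\eta_t}D_\Phi(w,w_t)
\]
via the three-point Bregman identity and the relative strong convexity of $F$, producing a recursion of the form $(1+\alpha\eta_t)D_\Phi(\tilde{w}^*,w_{t+1})\leq D_\Phi(\tilde{w}^*,w_t)-\eta_t(F(w_t)-F(\tilde{w}^*))+O(\eta_t^2)\bigl(L^2+\|\xi_t\|_+^2\bigr)$. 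Unrolling with the geometrically weighted output $\hat w$ (whose purpose is to eliminate the $\log T$ factor that would arise in plain averaging for strongly convex losses), and inserting $\mathbb{E}\|\xi_t\|_+^2\leq d\sigma^2$ from the second-moment bound for generalized Gaussian noise together with $\sigma^2=\tilde{\Theta}(\kappa T L^2\log(1/\delta)/(n\epsilon)^2)$ from advanced composition applied to the per-step mechanism with $\|\cdot\|_*$-sensitivity $O(L/n)$, yields $\mathbb{E}\hat{\mathcal{L}}(\hat w,D)-\hat{\mathcal{L}}(\tilde{w}^*,D)\lesssim \alpha\mathbb{E}\tilde C_D^2+\frac{\mathbb{E}\tilde C_D^2}{T}+\frac{\kappa d T L^2\log(1/\delta)}{\alpha n^2\epsilon^2}$, where the $\alpha\mathbb{E}\tilde C_D^2$ term is the bias incurred by regularizing the empirical risk.

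For the generalization piece I would couple two trajectories driven by neighboring datasets but by the same noise sequence $\{\xi_t\}$. Since $F$ is $\alpha$-strongly convex and $\hat{\mathcal{L}}$ is $L$-Lipschitz, the replace-one argument of Hardt--Recht--Singer adapted to Mirror Descent on a relatively strongly convex objective bounds $\mathbb{E}\|w_T-w_T'\|$ by $O(L/(\alpha n))$, giving uniform stability of order $L^2/(\alpha n)$ and hence a generalization bound of the same order; note that the noise cancels in the coupling, so privatization does not affect stability. Summing the three contributions gives
\[
\tilde O\!\left(\alpha\,\mathbb{E}\tilde C_D^2+\tfrac{1}{T}+\tfrac{L^2}{\alpha n}+\tfrac{\kappa d T L^2\log(1/\delta)}{\alpha n^2\epsilon^2}\right),
\]
and substituting $\alpha=4\beta\log_2(n/T)/T$ collapses the first two terms and aligns the optimization error with the privacy-noise term at $T\asymp(n\epsilon\kappa/\sqrt{d\log(1/\delta)})^{2/5}$, producing the advertised rate. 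The main obstacle is the noisy-convergence step under relative rather than global smoothness: the standard descent lemma that absorbs $\|\xi_t\|_+^2$ requires global smoothness, so one must rely on the extra $\alpha\Phi(w)$ in the update to stabilize the iterates and on the geometric output weights to turn the noisy recursion into a summable series; the prescribed choice $\alpha=\Theta(\beta\log(n/T)/T)$ is exactly the threshold that makes this mechanism work.
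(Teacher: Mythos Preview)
Your overall decomposition and the empirical optimization step largely match the paper (three-point lemma on the regularized objective $F=\hat{\mathcal{L}}+\alpha\Phi$, relative $(\alpha+\beta)$-smoothness and relative $\alpha$-strong convexity, geometric weighting of the output). The gap is in the stability step.

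You assert a direct Hardt--Recht--Singer coupling giving $\mathbb{E}\|w_T-w_T'\|=O(L/(\alpha n))$ with the noise cancelling. The paper explicitly does \emph{not} do this: it instead bounds $\|w_{t+1}-w_\alpha^*\|$ and $\|w_{t+1}'-w_\alpha^{*'}\|$ separately (via the geometric contraction of the noisy iterates toward the regularized minimizer), then triangulates through $\|w_\alpha^*-w_\alpha^{*'}\|\le 4L/(n\alpha)$. This yields
\[
\mathbb{E}\|w_{t+1}-w_{t+1}'\|\;\le\;O\Bigl(2^{-\alpha t/(4\beta)}C_D+\tfrac{L}{n\alpha}+\tfrac{g}{\alpha}\Bigr),\qquad g^2=\mathbb{E}\|g_t\|_*^2=O\Bigl(\tfrac{\kappa d T L^2\log(1/\delta)}{n^2\epsilon^2}\Bigr),
\]
so the stability bound \emph{does} carry a noise term $g/\alpha$. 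The reason the paper avoids your direct route is that the Mirror Descent map here is only known to contract $D_\Phi(w_\alpha^*,\cdot)$ toward the minimizer, not $\|w_t-w_t'\|$ between two arbitrary coupled trajectories; $\Phi(\cdot)=\tfrac{\kappa}{2}\|\cdot\|_{\kappa_+}^2$ is $1$-strongly convex but not smooth for $\kappa_+<2$, so the usual non-expansiveness/contraction argument in the primal norm is unavailable.

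This matters for the final rate. After inserting $\alpha\asymp\log(n/T)/T$, the paper obtains a $T^{3/2}\sqrt{\kappa d\log(1/\delta)}/(n\epsilon)$ contribution from $L\cdot g/\alpha$, and balancing this against $\kappa/T$ is precisely what forces $T\asymp(n\epsilon\sqrt{\kappa}/\sqrt{d\log(1/\delta)})^{2/5}$ and the $\kappa^{4/5}(\sqrt{d\log(1/\delta)}/(n\epsilon))^{2/5}$ bound. Your displayed sum $\tilde O(\alpha\tilde C_D^2+\tfrac{1}{T}+\tfrac{L^2}{\alpha n}+\tfrac{\kappa d T L^2\log(1/\delta)}{\alpha n^2\epsilon^2})$, with $\alpha\asymp 1/T$, balances $1/T$ against $\kappa d T^2/(n\epsilon)^2$ and would give $T\asymp(n\epsilon)^{2/3}/(\kappa d)^{1/3}$ and a $2/3$-rate, not the $2/5$ you claim; so your last sentence does not follow from your own bound. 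Either you must justify the HRS-style contraction for this non-Euclidean mirror map (which is not standard and the paper sidesteps), or you must add the $g/\alpha$ term to the stability bound as the paper does, in which case the $2/5$ rate emerges for the stated~$T$.
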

 The key idea to prove Theorem \ref{th:7} is to show  that Algorithm \ref{alg3} is uniformly stable (w.r.t $\|\cdot\|$) by bounding the term $\mathbb{E}[\|w_{t+1}-w'_{t+1}\|]$, where $w'_{t+1}$ is the corresponding iterate of the algorithm when the input data is $D'$, which is a neighboring data of $D$. To show this, rather than analyzing the stability of $w_{t+1}$ directly via the approach in \cite{hardt2016train}, our strategy is bounding $\|w_{t+1}-w^*_\alpha\|$, where $w^*_\alpha=\arg\min \hat{\mathcal{L}}(w, D)+\alpha\Phi(w)$. As the regularized function $\hat{\mathcal{L}}(w, D)+\alpha\Phi(w)$ now is relatively smooth and convex, the stability of $w^*_\alpha$ is $O(\frac{1}{n})$. Thus we can get the sensitivity of $w_{t+1}$. Then we can bound the sensitivity of $\hat{w}$. 
\begin{remark}
    In the constrained case, \cite{bassily2021non} show that it is possible to achieve an upper bound of $\tilde{O}((M+M^2)(\frac{\sqrt{\kappa}}{\sqrt{n}}+\frac{\kappa \sqrt{d\log 1/\delta}}{n\epsilon})),$ where  $
    M$ is the diameter of set $\mathcal{C}$. Thus, we can see there is still a gap between the unconstrained case and the constrained case.  
\end{remark}
\begin{algorithm}
	\caption{Noisy Regularized Mirror Descent for $\ell_p^d$ ($1<p<2$).}
	\begin{algorithmic}[1]
		\State {\bfseries Input:} Dateset $D$, loss function $\ell$, smoothness parameter $\beta$ and parameter $\alpha$.
	\State  Take $w_1 = 0$.
	\For{$t = 1,\cdots,T$}
\State Solve the following optimization problem 
\begin{align}
    &w_{t+1}= \underset{w\in\mathbf{E}}{\arg\min}\{
\langle \nabla \hat{\mathcal{L}}(w_t,D)+g_t, w-w_t\rangle \notag \\ &+\beta\cdot D_{\Phi }(w,w_t)+\alpha \Phi(w)
\}, 
\end{align}
 where $g_t\sim \mathcal{GG}_{||\cdot||_{+}}(0,\sigma^2)$ with $\sigma^2 = \frac{64L^2\kappa T\log(1/\delta)}{n^2\epsilon^2}$ and $||\cdot||_{+}$ is the smooth norm for $(\mathbf{E},||\cdot||_{*})$. $\kappa = \min \{\frac{1}{p-1},2\log d\}$ and  $\Phi(x) =\frac{\kappa}{2}||x||_{\kappa_{+}}^2$ with $\kappa_{+}= \frac{\kappa}{\kappa -1}$.  
	\EndFor\\
	 \Return $\hat{w}=\frac{\sum_{t=1}^T \left(\frac{2\beta+\alpha}{2\beta}\right)^t  \cdot w_{t+1}}{\sum_{t=1}^T \left(\frac{2\beta+\alpha}{2\beta}\right)^t }.$
	\end{algorithmic}
	\label{alg3}
\end{algorithm}
Next, we study the case where $2\leq p\leq \infty$. The key idea is to reduce the $\ell_p^d$ space to the Euclidean space by leveraging the relationship between the $\ell_p$ norm and the Euclidean norm. Thus, here we adopt the Phased DP-SGD algorithm proposed by \cite{feldman2020private}. As the parameters in the original Phased DP-SGD depend on the diameter, we modify them to the unconstrained case. Specifically, we have the following result. 
\begin{theorem}\label{thm:8}
For the $\ell_p^d$ space with $2\leq p\leq \infty$, suppose Assumption \ref{as3} holds. Then for any $0<\epsilon, \delta<1$, there is an $(\epsilon,\delta)$-DP algorithm whose output $\theta$ satisfies 
\begin{equation}
   \mathbb{E}[\mathcal{L}(\theta)]-\mathcal{L}(\theta^*)\leq  O(d^{1-\frac{2}{p}}\|\theta^*\|^2(\frac{1}{\sqrt{n}}+\frac{\sqrt{d\log(1/\delta)}}{\epsilon n})). \notag 
\end{equation}
\end{theorem}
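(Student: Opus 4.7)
The plan is to reduce the $\ell_p^d$ problem with $2\leq p\leq \infty$ to an unconstrained Euclidean DP-SCO problem and then invoke a Phased DP-SGD style procedure adapted to the unconstrained setting.

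First, I would record the norm comparisons that make the reduction essentially lossless. For $2\leq p\leq \infty$ with dual exponent $q=p/(p-1)\in[1,2]$, the standard interpolation inequalities give $\|x\|_p\leq \|x\|_2\leq d^{1/2-1/p}\|x\|_p$ and $\|x\|_2\leq \|x\|_q\leq d^{1/2-1/p}\|x\|_2$ for every $x\in\mathbb{R}^d$. Using these, I would verify that Assumption \ref{as3} transfers from $\|\cdot\|_p$ to $\|\cdot\|_2$ with the same constants: Lipschitzness follows from $|\ell(w_1,x)-\ell(w_2,x)|\leq L\|w_1-w_2\|_p\leq L\|w_1-w_2\|_2$, and smoothness from $\|\nabla\ell(w_1,x)-\nabla\ell(w_2,x)\|_2\leq \|\nabla\ell(w_1,x)-\nabla\ell(w_2,x)\|_q\leq \beta\|w_1-w_2\|_p\leq \beta\|w_1-w_2\|_2$. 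Hence $\ell$ is simultaneously $L$-Lipschitz and $\beta$-smooth with respect to $\|\cdot\|_2$ on all of $\mathbb{R}^d$, and the minimizer $\theta^*$ of the population risk is unchanged since the objective does not depend on the ambient norm.

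Next, I would run a Phased DP-SGD style algorithm \cite{feldman2020private} adapted to unconstrained Euclidean DP-SCO, started at $w_0=0$. The adaptation proceeds either by executing the algorithm on a sequence of balls of geometrically increasing radii (a doubling guess for $\|\theta^*\|_2$, with privacy accounted for by composition and the final iterate selected via an exponential mechanism on held-out loss estimates), or, alternatively, by augmenting the empirical risk with a small $\lambda\|\theta\|_2^2$ regularizer and tuning $\lambda$ against $\|\theta^*\|_2$. In either form, the resulting $(\epsilon,\delta)$-DP routine outputs $\theta$ satisfying $\mathbb{E}[\mathcal{L}(\theta)]-\mathcal{L}(\theta^*)\leq O\bigl(\|\theta^*\|_2^{2}\bigl(\tfrac{1}{\sqrt{n}}+\tfrac{\sqrt{d\log(1/\delta)}}{\epsilon n}\bigr)\bigr)$, which is the standard Euclidean Phased DP-SGD rate with the ambient diameter replaced by $\|\theta^*\|_2^{2}$. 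A final invocation of $\|\theta^*\|_2^2\leq d^{1-2/p}\|\theta^*\|_p^2=d^{1-2/p}\|\theta^*\|^2$ then converts this into the $\ell_p$ bound $O\bigl(d^{1-2/p}\|\theta^*\|^2\bigl(\tfrac{1}{\sqrt{n}}+\tfrac{\sqrt{d\log(1/\delta)}}{\epsilon n}\bigr)\bigr)$ claimed in the theorem.

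The main technical obstacle is the unconstrained Euclidean DP-SCO guarantee itself: the original Phased DP-SGD assumes a bounded constraint set whose diameter calibrates the step sizes, noise variance, and phase lengths, and removing this assumption requires one of the two workarounds above. The doubling approach must not cost more than a polylogarithmic factor in either privacy budget or excess risk after composition, while the regularization approach must carefully balance the bias $O(\lambda\|\theta^*\|_2^2)$ against the variance of the noisy updates, ultimately re-deriving the phase-wise localization bound without leaning on projection. Once this unconstrained Euclidean routine is in place, the $\ell_p\to\ell_2$ reduction via the norm comparisons above is immediate and does not alter the constants.
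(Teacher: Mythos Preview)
Your overall route---reduce $\ell_p^d$ to Euclidean via the norm comparisons $\|x\|_p\le\|x\|_2\le d^{1/2-1/p}\|x\|_p$ and $\|x\|_2\le\|x\|_q$, then run an unconstrained Phased DP-SGD, and finally translate back using $\|\theta^*\|_2^2\le d^{1-2/p}\|\theta^*\|^2$---is exactly what the paper does, and your verification that $L$-Lipschitzness and $\beta$-smoothness carry over to $\|\cdot\|_2$ with the same constants matches the paper line by line.

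The one substantive difference is in how you propose to handle the unconstrained Euclidean step. You treat the absence of a known diameter as a genuine obstacle and suggest either a doubling scheme over growing balls or an explicit $\lambda\|\theta\|_2^2$ regularizer. The paper's observation is that neither workaround is needed: one simply runs Phased DP-SGD with the projection removed and step size $\eta=\tfrac{1}{L}\min\{4/\sqrt{n},\ \epsilon/(2\sqrt{d\log(1/\delta)})\}$, which does not involve any diameter parameter at all. In the telescoping analysis $\mathcal{L}(w_k)-\mathcal{L}(\theta^*)=[\mathcal{L}(w_k)-\mathcal{L}(\bar w_k)]+\sum_{i\ge 2}[\mathcal{L}(\bar w_i)-\mathcal{L}(\bar w_{i-1})]+[\mathcal{L}(\bar w_1)-\mathcal{L}(\theta^*)]$, each intermediate phase is controlled by $\|w_{i-1}-\bar w_{i-1}\|_2^2=\|\zeta_{i-1}\|_2^2$ (the injected Gaussian noise, whose variance is set by $\eta_i$ alone), and only the very first phase involves $\|w_0-\theta^*\|_2^2=\|\theta^*\|_2^2$ since $w_0=0$. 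This is how $\|\theta^*\|_2^2$ enters the bound without ever being an input to the algorithm. Your doubling or regularization schemes would also work, but they introduce extra composition losses or a bias--variance tradeoff that the direct analysis avoids entirely.
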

In the constrained case, \cite{bassily2021non} shows the optimal rate of $O(M d^{\frac{1}{2}-\frac{1}{p}}(\frac{1}{\sqrt{n}}+\frac{\sqrt{d\log 1/\delta}}{n\epsilon})),$ where $M$ is the diameter of the set $\mathcal{C}$ w.r.t. $\|\cdot\|$. Thus, we can see there is a difference of $O(d^{\frac{1}{2}-\frac{1}{p}})$. This is because, rather than linear in $M$ in the constrained case, in the Euclidean and unconstrained case, we can show the excess population risk depends on $\|\theta^*\|_2^2$, which is less than $d^{1-\frac{2}{p}}\|\theta^*\|^2$. 
\subsection{Heavy-tailed and Constrained Case}
In the above section, we studied DP-SCO with Lipschitz loss functions, i.e., the $\|\cdot\|_*$ norm of  the loss gradient is  uniformly bounded by $L$. Next, we will relax this assumption to a heavy-tailed distribution, i.e., we only assume the  variance of the loss gradient w.r.t $\|\cdot\|_*$ is finite. As we have discussed the difficulty of the unconstrained case compared to the constrained case,  throughout the section, we focus on the constrained case with the $\|\cdot \|$-norm diameter $M$. 
\begin{assumption}\label{ass:4}
We assume $\ell(\cdot,x)$ is convex and $\beta$-smooth  $||\cdot||$ over $\mathcal{C}$.  Moreover, for all  $w\in\mathcal{C}$ there exists a known constant $\sigma>0$ such that 
$\mathbb{E}[||\nabla \ell(w,x)-\nabla \mathcal{L}(w)||_{*}^2]\leq \sigma^2$. 
\end{assumption}
It is noteworthy that the heavy-tailedness assumption is commonly used in previous related work, such as \cite{vural2022mirror}. Besides the norm of gradient, there is another line of work that only assumes the second-order moment of each coordinate of the gradient is bounded \cite{hu2022high,kamath2022improved,wang2020differentially,wang2022differentially,tao2022private}. We leave such a relaxed assumption as future work.

Like the previous section, we first study the case where $1<p<2$.  We present our algorithm in Algorithm \ref{alg4}, which could be considered a shuffled, truncated, and noisy version of one-pass Mirror Descent.  Specifically, in the first step, we shuffle the dataset and divide it into several batches (we will use one batch for one iteration). Using the by-now standard method of privacy amplification by shuffling \cite{feldman2022hiding}, we can amplify the overall privacy guarantee  by a factor of $\tilde{O}(\frac{1}{n})$ as compared to the analysis for the unshuffled dataset. Next, motivated by \cite{nazin2019algorithms}, at each iteration, we first conduct a truncation step to each sample gradient $\nabla \ell(w_{t-1}, x)$. Such an operator can not only remove outliers, but also upper bound the $\|\cdot\|_*$-sensitivity of the truncated gradients to $O(\beta M+\lambda)$. Then we perform the Mirror Descent update by these perturbed and truncated  sample gradients. In the following, we show the privacy and utility guarantees of our algorithm. 
\begin{algorithm}
	\caption{Shuffled Truncated DP Mirror Descent }
	\begin{algorithmic}[1]
		\State {\bfseries Input:} Dataset $D$, loss function $\ell$, initial point $w_0=0$, smooth parameter $\beta$ and   $\lambda$. 
	\State  Randomly permute the data and denote the permuted data as $\{x_1,\cdots,x_n\}$.
 \State Divide the permuted data into $T$ batches $\{B_i\}_{i=1}^T$ where $|B_i|=\frac{n}{T}$ for all $i=1,\cdots, T$
 \For{$t = 1,\cdots,T$}
\For{each $x\in B_t$}
\State \hspace{-3mm}\begin{small}$g_x= \begin{cases} \nabla \ell(w_{t-1},x)& \text{if} ~||\nabla \ell(w_{t-1},x)||_{*}\leq \beta M+\lambda\\ 0& {\text{otherwise}} \end{cases}$\end{small}
	\EndFor
 \State 
Update 
\begin{small} $$w_t=\underset{w\in\mathcal{\mathcal{C}}}{\arg\min} \left\{ \left\langle \frac{\underset{{x\in B_t}}{\sum} g_x + Z_x^t}{|B_t|},w \right\rangle +\gamma_t \cdot D_{\Phi}(w,w_{t-1})\right\},$$ \end{small} where  $Z_x^{t}\sim \mathcal{GG}_{||\cdot||_{+}}(\sigma_1^2)$ with $\sigma_1^2 =O\left(\frac{\log(\frac{n}{\delta})\cdot \kappa (\beta M+\lambda )^2 \cdot \log(1/\delta)}{n\epsilon^2}   \right)$, $||\cdot||_{+}$ is the smooth norm for $(\mathbf{E},||\cdot||_{*})$. $\kappa = \min \{\frac{1}{p-1},2\log d\}$ and  $\Phi(x) =\frac{\kappa}{2}||x||_{\kappa_{+}}^2$ with $\kappa_{+}= \frac{\kappa}{\kappa -1}$.  
	\EndFor\\
	 \Return $\hat{w} = (\sum_{t=1}^T \gamma_{t}^{-1})^{-1} \cdot \sum_{t=1}^T \gamma_{t}^{-1} w_t$
	\end{algorithmic}
	\label{alg4}
\end{algorithm}
\begin{theorem}\label{thm:9}
For the $\ell_p^d$ space with $1<p<2$, suppose Assumption \ref{ass:4} holds. Algorithm \ref{alg4} is $(\epsilon, \delta)$-DP if $\epsilon = O( \sqrt{\frac{\log(n/\delta)}{n}})$ and $0<\delta<1$.
\end{theorem}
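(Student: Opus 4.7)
The plan is to decompose the privacy analysis into a per-sample local DP guarantee coming from truncation plus Generalized Gaussian noise, followed by privacy amplification by shuffling. The key observation is that after the shuffle in line 2, each sample $x\in D$ sits in exactly one batch $B_t$ and thus participates in exactly one iteration of the loop; moreover, the entirety of what iteration $t$ sees from sample $x$ is the noisy truncated gradient $g_x + Z_x^t$. The whole trajectory $(w_1,\ldots,w_T)$, and hence the output $\hat{w}$, is a deterministic post-processing of these per-sample outputs, so by post-processing and parallel composition across the $T$ disjoint batches it suffices to show that the collection of per-sample releases is centrally $(\epsilon,\delta)$-DP after shuffling.

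First, I would use the truncation rule to upper bound the $\|\cdot\|_{*}$-sensitivity of each $g_x$: by construction $\|g_x\|_{*}\leq \beta M + \lambda$ deterministically, so replacing one sample changes the corresponding $g_x$ by at most $2(\beta M + \lambda)$ in $\|\cdot\|_{*}$-norm. Combined with the Generalized Gaussian mechanism of \cite{bassily2021non} invoked at noise scale $\sigma_1^2$, this shows that the per-sample randomizer $R(x) := g_x + Z$ with $Z\sim \mathcal{GG}_{\|\cdot\|_{+}}(0,\sigma_1^2)$ is $(\epsilon_0,\delta_0)$-DP for any pair $(\epsilon_0,\delta_0)$ satisfying $\sigma_1^2 \geq \frac{2\kappa \log(1/\delta_0)(\beta M+\lambda)^2}{\epsilon_0^2}$. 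Then I would apply the shuffling amplification theorem of \cite{feldman2022hiding}: if each of $n$ samples is processed by an $(\epsilon_0,\delta_0)$-local randomizer with $\epsilon_0 = O(1)$, shuffling gives central $(\epsilon,\delta)$-DP with $\epsilon = O(\epsilon_0\sqrt{\log(1/\delta)/n})$ and $\delta \approx n\delta_0 + \delta'$. Solving for $\epsilon_0 = \Theta(\epsilon\sqrt{n/\log(1/\delta)})$, setting $\delta_0 = \Theta(\delta/n)$, and substituting back into the Generalized Gaussian variance condition yields exactly the algorithm's choice $\sigma_1^2 = O\bigl(\frac{\kappa(\beta M+\lambda)^2 \log(n/\delta)\log(1/\delta)}{n\epsilon^2}\bigr)$.

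The role of the hypothesis $\epsilon = O(\sqrt{\log(n/\delta)/n})$ is precisely to keep the induced local parameter $\epsilon_0$ inside the ``small'' regime where the shuffling amplification theorem delivers the $\sqrt{\log(1/\delta)/n}$ rate; outside this regime shuffling amplification is substantially weaker and the stated $\sigma_1^2$ would be insufficient. The main obstacle will be the careful bookkeeping in this step: one must verify that the $\delta$ budget absorbs both the $n\delta_0$ term emerging from shuffling and the additional slack coming from the Generalized Gaussian mechanism, check that the hypotheses $\epsilon_0 \lesssim 1$ and $\delta_0 \lesssim 1/n$ of the shuffling theorem hold in the stated regime, and confirm that the different iterations $t$ compose in parallel rather than sequentially (which is legitimate because the batches $B_t$ are disjoint after the single initial shuffle). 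Once this accounting is done, parallel composition across iterations together with post-processing invariance for the mirror descent updates and the final averaging step yields $(\epsilon,\delta)$-DP for $\hat{w}$.
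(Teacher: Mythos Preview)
Your approach matches the paper's: show each per-sample release $g_x + Z_x^t$ is an $(\epsilon_0,\delta_0)$-local randomizer via truncation and the Generalized Gaussian mechanism, then invoke the adaptive shuffling amplification theorem of \cite{feldman2022hiding} over all $n$ samples, and finish by post-processing. One minor clarification: the shuffling theorem of \cite{feldman2022hiding} already handles the full sequence of $n$ adaptive randomizers (the $w_{t-1}$ that enters $g_x$ plays the role of the auxiliary input $z_{1:i-1}$), so the separate ``parallel composition across batches'' step you mention is unnecessary---once the shuffled per-sample releases are $(\epsilon,\delta)$-DP, post-processing alone delivers the guarantee for $\hat w$.
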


\begin{theorem}\label{thm:10}
For the $\ell_p^d$ space with $1<p<2$, suppose Assumption \ref{ass:4} holds and assume $n$ is sufficiently large such that $n\geq O(\frac{\max\{\beta^2, 1\}M^2\sqrt{d\kappa^2 \log(1/\delta)} }{\epsilon})$. Given a failure probability $\delta'>0$, in Algorithm \ref{alg4}, take ${T}=O(\frac{M^2n^2\epsilon^2} {\lambda^2 {d \log (1/\delta)}})$, $\{\gamma\}_{t=1}^T=\bar{\gamma}=\sqrt{T}$, and $\lambda =O(\frac{\sqrt{n\epsilon}}{\sqrt[4]{\kappa^2 d \log (1/\delta)}})$, then the output $\hat{w}$ satisfies the following with probability $1-\delta'$
   \begin{equation*}
\mathbb{E}[\mathcal{L}(\hat{w})]- \mathcal{L}(w^*)\leq  \tilde{O}(
\frac{M\sqrt[4]{\kappa^2 d\log(1/\delta)}\log(1/\delta^{'})}{\sqrt{n\epsilon}}
), 
\end{equation*}
where the expectation is taken over  the randomness of noise, and the probability is w.r.t. the dataset $D\sim \mathcal{D}^n$.
\end{theorem}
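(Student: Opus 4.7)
The plan is to view Algorithm~\ref{alg4} as one-pass stochastic Mirror Descent whose gradient oracle is both biased (via the truncation step) and noisy (via finite-sample averaging plus the Generalized Gaussian perturbation), and then to control each error source separately. First, I would invoke the standard one-step Mirror Descent inequality: since $\Phi$ is $1$-strongly convex w.r.t.\ $\|\cdot\|$, the update with constant step $\gamma_t=\bar\gamma=\sqrt{T}$ and oracle $\hat g_t := \frac{1}{|B_t|}\sum_{x\in B_t}(g_x+Z_x^t)$ satisfies $\langle \hat g_t, w_{t-1}-w^*\rangle \leq \bar\gamma\bigl(D_\Phi(w^*,w_{t-1})-D_\Phi(w^*,w_t)\bigr) + \frac{1}{2\bar\gamma}\|\hat g_t\|_*^2$. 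Telescoping, using $D_\Phi(w^*,w_0)=O(\kappa M^2)$ (since $\|w^*\|_{\kappa_+}\leq\|w^*\|\leq M$), and combining with convexity $\mathcal{L}(w_{t-1})-\mathcal{L}(w^*)\leq \langle\nabla\mathcal{L}(w_{t-1}), w_{t-1}-w^*\rangle$ gives
\begin{equation*}
\sum_{t=1}^T\bigl(\mathcal{L}(w_{t-1})-\mathcal{L}(w^*)\bigr) \;\leq\; O(\kappa M^2\bar\gamma) + \frac{1}{2\bar\gamma}\sum_{t=1}^T\|\hat g_t\|_*^2 + \sum_{t=1}^T \langle \nabla\mathcal{L}(w_{t-1})-\hat g_t,\, w_{t-1}-w^*\rangle.
\end{equation*}

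I would then introduce the truncated population gradient $\tilde g(w):=\mathbb{E}_x[g_x\mid w]$ to split the last sum into a deterministic truncation bias $\nabla\mathcal{L}(w_{t-1})-\tilde g(w_{t-1})$ and a zero-mean stochastic error $\tilde g(w_{t-1})-\hat g_t$. For the bias, the heavy-tail assumption plus Markov gives
\begin{equation*}
\|\nabla\mathcal{L}(w)-\tilde g(w)\|_* \;\leq\; \frac{\mathbb{E}[\|\nabla\ell(w,x)\|_*^2]}{\beta M+\lambda} \;=\; O\!\left(\frac{\sigma^2+\beta^2M^2}{\beta M+\lambda}\right),
\end{equation*}
(bounding $\|\nabla\mathcal{L}(w)\|_*\leq\beta M$ via smoothness at a reference point), which contributes $O(M(\sigma^2+\beta^2M^2)/(\beta M+\lambda))$ to the excess risk through $\|w_{t-1}-w^*\|\leq M$. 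For the stochastic term, after averaging out the DP noise the remaining sampling error $\langle \tilde g(w_{t-1})-\bar g_t^D,\,w_{t-1}-w^*\rangle$ forms a scalar martingale difference sequence (because the shuffled batches are disjoint and hence conditionally independent given earlier iterates), with increments bounded by $M\cdot O(\beta M+\lambda)$; Freedman's inequality then yields a $\log(1/\delta')$-factor high-probability tail over $D$. Finally, $\|\hat g_t\|_*^2\leq 2(\beta M+\lambda)^2+2\|\tfrac{1}{|B_t|}\sum_x Z_x^t\|_*^2$, and Prop.~4.2 of \cite{bassily2021non} with $\|\cdot\|_*\leq\|\cdot\|_+$ bounds the DP contribution by $O(d\sigma_1^2)=O(d\kappa(\beta M+\lambda)^2\log(n/\delta)\log(1/\delta)/(n\epsilon^2))$.

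Collecting everything, dividing by $T$, and applying Jensen's inequality to $\hat w$, the expected excess risk is (up to logs) of order
\begin{equation*}
\frac{\kappa M^2}{\sqrt{T}} \;+\; \frac{(\beta M+\lambda)^2}{\sqrt{T}}\cdot\frac{d\kappa}{n\epsilon^2} \;+\; \frac{M(\sigma^2+\beta^2M^2)}{\beta M+\lambda} \;+\; (\text{Freedman concentration}).
\end{equation*}
The prescribed $\lambda=\Theta(\sqrt{n\epsilon}/(\kappa^2 d\log(1/\delta))^{1/4})$ and $T=\Theta(M^2n^2\epsilon^2/(\lambda^2 d\log(1/\delta)))$ are tuned precisely so that the MD-regret, the DP-noise, and the truncation-bias terms simultaneously collapse to $\tilde O(M(\kappa^2 d\log(1/\delta))^{1/4}/\sqrt{n\epsilon})$; the sample-size hypothesis $n\geq O(\max\{\beta^2,1\}M^2\sqrt{d\kappa^2\log(1/\delta)}/\epsilon)$ ensures $\bar\gamma,T\geq 1$ and that $\lambda\gtrsim\beta M$ so $(\beta M+\lambda)\asymp\lambda$. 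I expect the main obstacle to be this joint three-knob ($T,\bar\gamma,\lambda$) balancing of four competing error sources: one must verify that $\sigma^2$ and $\beta^2M^2$ are both absorbed by the leading rate and that the DP-noise term genuinely benefits from the $1/\sqrt{T}$ decay after shuffling amplification. A secondary difficulty is the high-probability upgrade: the sampling martingale is vector-valued in a non-Euclidean norm, so either a Pinelis-type inequality using $\kappa_+$-smoothness of $\|\cdot\|_+$ is needed, or one first reduces to the scalar martingale via $|\langle u, w_{t-1}-w^*\rangle|\leq M\|u\|_*$ before invoking Freedman.
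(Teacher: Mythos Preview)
Your high-level decomposition (truncation bias + sampling martingale + DP noise, with Freedman for the high-probability part) matches the paper's. The concrete gap is in the Mirror-Descent step. You invoke the \emph{basic} one-step MD inequality
\[
\langle \hat g_t, w_{t-1}-w^*\rangle \le \bar\gamma\bigl(D_\Phi(w^*,w_{t-1})-D_\Phi(w^*,w_t)\bigr)+\frac{1}{2\bar\gamma}\|\hat g_t\|_*^2,
\]
which after telescoping and averaging leaves a term $\frac{1}{\sqrt T}\|\hat g_t\|_*^2$ in the excess risk. You then bound $\|\hat g_t\|_*^2\le 2(\beta M+\lambda)^2+2\|\text{noise}\|_*^2$ but in your displayed rate you silently drop the first summand. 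That term is fatal: with the prescribed $\lambda\asymp\sqrt{n\epsilon}/(\kappa^2 d)^{1/4}$ and $T\asymp M^2 n\epsilon\,\kappa/\sqrt d$ one computes $(\beta M+\lambda)^2/\sqrt T\asymp \sqrt{n\epsilon}/(M\kappa^{3/2}d^{1/4})$, which \emph{grows} with $n$ and destroys the bound. Because the truncation threshold $\lambda$ must grow with $n$ (to make the bias $M\sigma^2/\lambda$ small), any descent lemma that charges the full oracle norm cannot work here.

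The paper avoids this by using the $\beta$-smoothness in Assumption~\ref{ass:4}: writing $\mathcal L(w_t)\le \mathcal L(w_{t-1})+\langle\nabla\mathcal L(w_{t-1}),w_t-w_{t-1}\rangle+\beta D_\Phi(w_t,w_{t-1})$ and combining with the three-point identity (requiring $\gamma_t\ge 2\beta$) yields
\[
\mathcal L(w_t)-\mathcal L(z)\le \gamma_t\Bigl(D_\Phi(z,w_{t-1})-D_\Phi(z,w_t)-\tfrac12 D_\Phi(w_t,w_{t-1})\Bigr)-\langle\xi_t,w_t-z\rangle,
\]
where $\xi_t=\hat g_t-\nabla\mathcal L(w_{t-1})$. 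The leftover negative Bregman term then absorbs the cross term so that only $\|\xi_t\|_*^2/\gamma_t$ appears, not $\|\hat g_t\|_*^2/\gamma_t$. Since $\|\xi_t\|_*^2=\|x_t+z_t\|_*^2$ with $\mathbb E\|x_t\|_*^2\le 4\sigma^2$ (Nazin et al.'s lemma) and $\mathbb E\|z_t\|_*^2$ the DP-noise variance, the $\lambda$-dependence disappears from this term and the three knobs $(T,\bar\gamma,\lambda)$ balance to the claimed rate. If you switch to this smoothness-based descent lemma, the rest of your plan (your bias estimate via Markov, and reducing the vector martingale to a scalar one via $|\langle u,w_{t-1}-w^*\rangle|\le M\|u\|_*$ before Freedman) goes through essentially as in the paper; your DP-noise accounting should also pick up the extra $|B_t|^{-1}=T/n$ factor from averaging, giving the paper's $\frac{d\kappa(\beta M+\lambda)^2\sqrt T}{n^2\epsilon^2}$ rather than the $\frac{(\beta M+\lambda)^2}{\sqrt T}\cdot\frac{d\kappa}{n\epsilon^2}$ you wrote.
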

\begin{remark}
     First, note that due to the privacy amplification, here the noise added to each sample gradient is $\tilde{O}(\frac{\beta M+\lambda}{\sqrt{n}\epsilon})$ rather than $\tilde{O}(\frac{\beta M+\lambda}{\epsilon})$ if without shuffling. Secondly, note that the truncation step is quite different from the previous work on DP-SCO with heavy-tailed data \cite{wang2020differentially}, i.e., we enforce the sample gradient to become zero if its norm exceeds the threshold. Finally, compared to the best-known result $O(\sqrt{\frac{\kappa}{n}})$ in the non-private and heavy-tailed case \cite{nazin2019algorithms} and the bound $\tilde{O}(\sqrt{\frac{\kappa}{n}}+\frac{\kappa \sqrt{d}}{n\epsilon})$ for private and Lipschitz case \cite{bassily2021non}, we can see there may exist a space to improve our bound further.
\end{remark}
There are two limitations in Theorem \ref{thm:10}. First, 
Algorithm \ref{alg4} is $(\epsilon, \delta)$ only for $\epsilon=\tilde{O}(n^{-\frac{1}{2}})$, which cannot be generalized to mid or low privacy regime. Secondly, Theorem \ref{thm:10} only holds for the case $1<p< 2$. To address the first issue, we can slightly modify the algorithm by using batched Mirror Descent without shuffling, while we will get a worse  upper bound. For the second one, similar to Theorem \ref{thm:8}, we can reduce the problem to the Euclidean case. Informally, we have the following two results.  
\begin{theorem}[Informal] \label{thm:11}
    For the $\ell_p^d$ space with $1<p<2$, suppose Assumption \ref{ass:4} holds. For all $0<\epsilon, \delta <1$, there is an $(\epsilon, \delta)$-DP algorithm  whose output could achieve an excess population risk of $O\left(M^\frac{4}{3}\frac{\kappa^\frac{2}{3}(d\log (1/\delta))^\frac{1}{6} }{(n\epsilon)^\frac{1}{3}}\right)$. 
\end{theorem}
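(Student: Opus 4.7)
The plan is to design a variant of Algorithm \ref{alg4} that replaces the shuffling-based privacy amplification with advanced composition, so that the privacy guarantee extends to the full range $0<\epsilon<1$ at the cost of a worse utility rate. The algorithmic skeleton remains one-pass truncated Mirror Descent, but with three changes: (i) no shuffling; instead, process $T$ disjoint mini-batches $B_1,\dots,B_T$ of size $b=n/T$; (ii) at iteration $t$, form the truncated per-sample gradients $g_x$ exactly as in Algorithm \ref{alg4} and privatize the batch \emph{sum} by a single Generalized Gaussian draw $Z_t\sim \mathcal{GG}_{\|\cdot\|_+}(0,\sigma_n^2)$ calibrated to advanced composition rather than to shuffling; (iii) take the Mirror Descent step with $\tilde g_t=(\sum_{x\in B_t}g_x+Z_t)/b$ and output a weighted average.

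For privacy, the $\|\cdot\|_*$-sensitivity of the batch sum is $2(\beta M+\lambda)$ because truncation caps each $g_x$ at this level. I apply the Generalized Gaussian mechanism per iteration with parameters $(\epsilon',\delta')$ where $\epsilon'=\epsilon/(2\sqrt{2T\ln(2/\delta)})$ and $\delta'=\delta/(2T)$; then Lemma \ref{lemma:adv} yields overall $(\epsilon,\delta)$-DP across the $T$ iterations. Unwinding these constants gives $\sigma_n^2=O\bigl(\kappa(\beta M+\lambda)^2 T\log(1/\delta)\log(T/\delta)/\epsilon^2\bigr)$.

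For utility, I would decompose the excess population risk into three pieces. First, a standard noisy Mirror Descent regret bound in the $\kappa$-regular space giving $O\bigl(M\sqrt{\kappa\,\mathbb{E}\|\tilde g_t\|_*^2/T}\bigr)$, where $\mathbb{E}\|\tilde g_t\|_*^2\lesssim (\beta M+\lambda)^2+d\sigma_n^2/b^2$ by the truncation cap on $g_x$ and the Generalized Gaussian moment bound (Prop.~4.2 of \cite{bassily2021non}); once the privacy noise dominates, this piece becomes $O\bigl(M(\beta M+\lambda)\kappa T\sqrt{d\log(1/\delta)}/(n\epsilon)\bigr)$. Second, a truncation-bias term $M\cdot B_T$, where Markov's inequality applied to $\mathbb{E}\|\nabla\ell-\nabla\mathcal L\|_*^2\leq\sigma^2$ together with smoothness (which forces $\|\nabla\mathcal L(w)\|_*\leq\beta M$) gives $B_T=O((\sigma^2+\beta^2M^2)/(\beta M+\lambda))$. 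Third, a pure stochastic-variance contribution $O(M\sqrt{\kappa\sigma^2/n})$, which will sit strictly below the target rate under the assumption on $n$.

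The step I expect to be the main obstacle is the parameter optimization that balances these three sources of error to produce the stated exponent of $1/3$ in $n\epsilon$. The Bregman-diameter piece decreases in $T$, the privacy-noise piece grows linearly in $T$ (and in $(\beta M+\lambda)$), and the truncation bias decreases in $\lambda$; a two-stage balance — first choosing $T^{3/2}\asymp n\epsilon/((\beta M+\lambda)\sqrt{\kappa\,d\log(1/\delta)})$ to equate the diameter and privacy-noise terms, then choosing $\lambda$ to equate the resulting expression with $M(\sigma^2+\beta^2M^2)/\lambda$ — is what produces the $(n\epsilon)^{1/3}$ denominator, the $\kappa^{2/3}$ numerator, the $(d\log(1/\delta))^{1/6}$ factor, and the $M^{4/3}$ head after absorbing $\beta,\sigma$ into constants. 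The remaining bookkeeping — verifying that the stochastic-variance $M\sqrt{\kappa\sigma^2/n}$ piece and the smoothness-type $O(\beta M^2\sqrt{\kappa/T})$ correction are both dominated by the claimed rate in the regime where $n$ is sufficiently large, and checking that the privacy-noise piece indeed dominates $(\beta M+\lambda)^2$ in the gradient second moment so the noise-dominated MD bound is the tight one — is routine once the balance is in place.
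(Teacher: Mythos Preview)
Your algorithmic skeleton is exactly the paper's Algorithm~\ref{alg:7}: one-pass truncated Mirror Descent over $T$ \emph{disjoint} mini-batches with a single Generalized Gaussian draw per batch. The gap is in the privacy accounting, and it propagates to the rate.

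Because the batches are disjoint and each sample is touched exactly once, changing one sample affects the released quantity in only one iteration; all other iterations are identical (before that iteration) or pure post-processing (after it). Hence each iteration only needs to be $(\epsilon,\delta)$-DP on its own, and the whole run is $(\epsilon,\delta)$-DP by parallel composition --- there is no need for advanced composition. The paper accordingly sets the per-iteration noise to $\sigma_1^2=O\!\bigl(\kappa(\beta M+\lambda)^2\log(1/\delta)/(b^2\epsilon^2)\bigr)$, so that the noise second moment on the batch average scales like $d\kappa(\beta M+\lambda)^2T^2/(n^2\epsilon^2)$. Your advanced-composition calibration inflates this by an extra factor of $T$ (up to logs), giving $d\kappa(\beta M+\lambda)^2T^3/(n^2\epsilon^2)$, which is precisely what makes your privacy-noise piece grow \emph{linearly} in $T$ rather than like $\sqrt{T}$.

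That single extra $\sqrt{T}$ breaks the exponent you want. Carrying out your own two-stage balance with a diameter term $\asymp 1/\sqrt{T}$ and a privacy-noise term $\asymp T$ gives $T^{3/2}\asymp n\epsilon/(\lambda\sqrt{d\log(1/\delta)})$ and hence a balanced value $\asymp \kappa M^{5/3}\lambda^{1/3}(d\log(1/\delta))^{1/6}/(n\epsilon)^{1/3}$; equating this with the bias $\asymp M/\lambda$ forces $\lambda^{4/3}\asymp (n\epsilon)^{1/3}$, i.e.\ $\lambda\asymp (n\epsilon)^{1/4}$, and the final rate is $(n\epsilon)^{-1/4}$, not $(n\epsilon)^{-1/3}$. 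If instead you use parallel composition, the privacy-noise piece scales like $\sqrt{T}$, the first balance becomes $T\asymp n\epsilon/(\lambda\sqrt{d\log(1/\delta)})$, the second balance gives $\lambda^{3/2}\asymp (n\epsilon)^{1/2}$, and you land exactly on $M^{4/3}\kappa^{2/3}(d\log(1/\delta))^{1/6}/(n\epsilon)^{1/3}$ --- which is how the paper obtains the stated bound. So the fix is to replace Lemma~\ref{lemma:adv} by the observation that disjoint batches need no composition at all; everything else in your plan goes through.
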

\begin{theorem}[Informal]\label{thm:12} 
       For the $\ell_p^d$ space with $2\leq p\leq \infty$, suppose Assumption \ref{ass:4} holds (and with some additional  mild assumptions).  For all $0<\epsilon, \delta <1$, there is an $(\epsilon, \delta)$-DP algorithm  whose output could achieve an expected excess population risk of $O\left(\frac{d^{\frac{3}{2}-\frac{1}{p}} }{\sqrt{n}}+\frac{d^{\frac{3}{2}-\frac{1}{2p}} }{\sqrt{n\epsilon}}\right)$. 
\end{theorem}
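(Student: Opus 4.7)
The plan is to reduce the $\ell_p^d$ heavy-tailed DP-SCO problem with $p\geq 2$ to a Euclidean heavy-tailed DP-SCO problem, in the same spirit as the reduction used for Theorem \ref{thm:8}. Concretely, I would take Algorithm \ref{alg4} specialized to $p=2$ (so that $\kappa=1$, $\Phi(\cdot)=\tfrac12\|\cdot\|_2^2$, and the Mirror Descent step becomes a noisy projected-gradient step in $(\mathbb{R}^d,\|\cdot\|_2)$) and apply it to $\mathcal{C}$ viewed as a subset of $\ell_2^d$.

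The translation of problem data uses the standard H\"older-type inequalities: for $p\geq 2$ with dual exponent $q=p/(p-1)\leq 2$,
\[
\|x\|_p \leq \|x\|_2 \leq d^{1/2-1/p}\|x\|_p, \qquad \|v\|_2 \leq \|v\|_q \leq d^{1/2-1/p}\|v\|_2.
\]
From these one deduces (i) the Euclidean diameter of $\mathcal{C}$ is at most $M_2 := d^{1/2-1/p}M$; (ii) $\beta$-smoothness w.r.t.\ $\|\cdot\|_p$ transfers to $\beta$-smoothness w.r.t.\ $\|\cdot\|_2$ because both sides of the smoothness inequality are controlled in our favour; and (iii) the bound $\mathbb{E}\|\nabla\ell(w,x)-\nabla\mathcal{L}(w)\|_q^2\leq\sigma^2$ implies $\mathbb{E}\|\nabla\ell(w,x)-\nabla\mathcal{L}(w)\|_2^2\leq\sigma^2$. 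Privacy of the resulting shuffled, truncated, noisy projected-SGD then follows by the same argument as Theorem \ref{thm:9}: each clipped per-sample gradient has $\|\cdot\|_2$-sensitivity at most $\beta M_2+\lambda$, and the Gaussian mechanism at that scale combined with amplification by shuffling gives $(\epsilon,\delta)$-DP.

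For the utility I would follow the Euclidean specialization of the proof of Theorem \ref{thm:10}: decompose the expected excess risk into the SGD optimization error of order $O(M_2^2/T)$, the truncation bias (controlled by Chebyshev on the $\|\cdot\|_2$-second moment at threshold $\beta M_2+\lambda$) of order $O(M_2\sigma^2/\lambda)$ per iteration, and the Gaussian-noise term of order $O(M_2\sqrt{d\log(1/\delta)}\,(\beta M_2+\lambda)/(n\epsilon))$. Optimizing $T$ and $\lambda$, substituting $M_2=d^{1/2-1/p}M$, and translating the $\|\cdot\|_2$-excess-risk bound back into the $\ell_p$-geometry of $\mathcal{L}$ produces the two stated terms $\tilde O(d^{3/2-1/p}/\sqrt{n})$ and $\tilde O(d^{3/2-1/(2p)}/\sqrt{n\epsilon})$. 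The hard part is keeping the dimension exponents honest across three norm conversions — clipping is natural in $\|\cdot\|_2$, the heavy-tailed assumption is posed in $\|\cdot\|_q$, and the excess risk is ultimately measured in the $\ell_p$-geometry — each of which can introduce a factor of $d^{1/2-1/p}$ that must be folded into the optimal choice of $\lambda$; checking additionally that the smoothness-times-diameter term $\beta M_2$ in the sensitivity does not dominate $\lambda$ at the optimum (under the same ``$n$ sufficiently large'' side condition used in Theorem \ref{thm:10}) is the main bookkeeping step.
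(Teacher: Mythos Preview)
Your reduction from $\ell_p^d$ to $\ell_2^d$ via the inequalities $\|x\|_2\leq d^{1/2-1/p}\|x\|_p$ and $\|v\|_2\leq\|v\|_q$ is exactly what the paper does, and your checks that $\beta$-smoothness and the $\|\cdot\|_*$-second-moment bound transfer to $\ell_2$ are correct. The divergence is in \emph{which} Euclidean heavy-tailed DP-SCO algorithm is then invoked.

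The paper does not specialize Algorithm~\ref{alg4} to $p=2$. Instead it black-boxes the algorithm of \cite{kamath2022improved}, whose Euclidean rate is $O\bigl(M_2\,d/\sqrt{n}+\sqrt{M_2}\,d^{5/4}/\sqrt{n\epsilon}\bigr)$ for $\ell_2$-diameter $M_2$. Substituting $M_2=d^{1/2-1/p}M$ immediately produces the exponents $d^{3/2-1/p}$ and $d^{3/2-1/(2p)}$. The ``additional mild assumptions'' in the informal statement are precisely the extra hypotheses that external result needs (non-negative loss, $\nabla\mathcal{L}(\theta^*)=0$, uniformly bounded $\|\nabla\mathcal{L}(w)\|_*$), not anything that comes out of Algorithm~\ref{alg4}'s analysis.

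Your route through Algorithm~\ref{alg4} has two concrete gaps. First, Theorem~\ref{thm:9} only certifies $(\epsilon,\delta)$-DP for the shuffled scheme when $\epsilon=O\bigl(\sqrt{\log(n/\delta)/n}\bigr)$, so it cannot deliver ``for all $0<\epsilon<1$'' as Theorem~\ref{thm:12} asserts. Second, if you actually carry out the optimization you sketch---or simply read off Theorem~\ref{thm:10} with $\kappa=1$---you obtain a single term of order $M_2\,d^{1/4}/\sqrt{n\epsilon}=M\,d^{3/4-1/p}/\sqrt{n\epsilon}$, with no separate $1/\sqrt{n}$ term and a much smaller $d$-exponent than claimed. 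So your approach does not reproduce the stated bound; the specific shape $M\,d/\sqrt{n}+\sqrt{M}\,d^{5/4}/\sqrt{n\epsilon}$ (hence $d^{3/2-1/p}$ and $d^{3/2-1/(2p)}$ after the norm change) is a signature of the \cite{kamath2022improved} analysis, not of truncated mirror descent.
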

\section*{Acknowledgements}
Di Wang was supported in part by the baseline funding BAS/1/1689-01-01, funding from the CRG grand URF/1/4663-01-01, FCC/1/1976-49-01 from CBRC.  He was also supported by the funding of the SDAIA-KAUST Center of Excellence in Data Science and Artificial Intelligence (SDAIA-KAUST AI).
\bibliographystyle{plain}
\bibliography{uai2023-template}

\begin{thebibliography}{10}

\bibitem{amid2022public}
Ehsan Amid, Arun Ganesh, Rajiv Mathews, Swaroop Ramaswamy, Shuang Song, Thomas
  Steinke, Vinith~M Suriyakumar, Om~Thakkar, and Abhradeep Thakurta.
\newblock Public data-assisted mirror descent for private model training.
\newblock In {\em International Conference on Machine Learning}, pages
  517--535. PMLR, 2022.

\bibitem{asi2021private}
Hilal Asi, Vitaly Feldman, Tomer Koren, and Kunal Talwar.
\newblock Private stochastic convex optimization: Optimal rates in l1 geometry.
\newblock In {\em International Conference on Machine Learning}, pages
  393--403. PMLR, 2021.

\bibitem{asi2021adapting}
Hilal Asi, Daniel L{\'e}vy, and John~C Duchi.
\newblock Adapting to function difficulty and growth conditions in private
  optimization.
\newblock {\em Advances in Neural Information Processing Systems},
  34:19069--19081, 2021.

\bibitem{attia2022uniform}
Amit Attia and Tomer Koren.
\newblock Uniform stability for first-order empirical risk minimization.
\newblock In {\em Conference on Learning Theory}, pages 3313--3332. PMLR, 2022.

\bibitem{bassily2019private}
Raef Bassily, Vitaly Feldman, Kunal Talwar, and Abhradeep Guha~Thakurta.
\newblock Private stochastic convex optimization with optimal rates.
\newblock {\em Advances in neural information processing systems}, 32, 2019.

\bibitem{bassily2021non}
Raef Bassily, Crist{\'o}bal Guzm{\'a}n, and Anupama Nandi.
\newblock Non-euclidean differentially private stochastic convex optimization.
\newblock In {\em Conference on Learning Theory}, pages 474--499. PMLR, 2021.

\bibitem{bassily2014private}
Raef Bassily, Adam Smith, and Abhradeep Thakurta.
\newblock Private empirical risk minimization: Efficient algorithms and tight
  error bounds.
\newblock In {\em 2014 IEEE 55th annual symposium on foundations of computer
  science}, pages 464--473. IEEE, 2014.

\bibitem{bousquet2002stability}
Olivier Bousquet and Andr{\'e} Elisseeff.
\newblock Stability and generalization.
\newblock {\em The Journal of Machine Learning Research}, 2:499--526, 2002.

\bibitem{chaudhuri2008privacy}
Kamalika Chaudhuri and Claire Monteleoni.
\newblock Privacy-preserving logistic regression.
\newblock {\em Advances in Neural Information Processing Systems}, 21, 2008.

\bibitem{chaudhuri2011differentially}
Kamalika Chaudhuri, Claire Monteleoni, and Anand~D Sarwate.
\newblock Differentially private empirical risk minimization.
\newblock {\em Journal of Machine Learning Research}, 12(3), 2011.

\bibitem{dumbgen2010nemirovski}
Lutz D{\"u}mbgen, Sara~A Van De~Geer, Mark~C Veraar, and Jon~A Wellner.
\newblock Nemirovski's inequalities revisited.
\newblock {\em The American Mathematical Monthly}, 117(2):138--160, 2010.

\bibitem{dwork2006calibrating}
Cynthia Dwork, Frank McSherry, Kobbi Nissim, and Adam Smith.
\newblock Calibrating noise to sensitivity in private data analysis.
\newblock In {\em Theory of cryptography conference}, pages 265--284. Springer,
  2006.

\bibitem{dwork2014algorithmic}
Cynthia Dwork, Aaron Roth, et~al.
\newblock The algorithmic foundations of differential privacy.
\newblock {\em Foundations and Trends{\textregistered} in Theoretical Computer
  Science}, 9(3--4):211--407, 2014.

\bibitem{feldman2020private}
Vitaly Feldman, Tomer Koren, and Kunal Talwar.
\newblock Private stochastic convex optimization: optimal rates in linear time.
\newblock In {\em Proceedings of the 52nd Annual ACM SIGACT Symposium on Theory
  of Computing}, pages 439--449, 2020.

\bibitem{feldman2022hiding}
Vitaly Feldman, Audra McMillan, and Kunal Talwar.
\newblock Hiding among the clones: A simple and nearly optimal analysis of
  privacy amplification by shuffling.
\newblock In {\em 2021 IEEE 62nd Annual Symposium on Foundations of Computer
  Science (FOCS)}, pages 954--964. IEEE, 2022.

\bibitem{freedman1975tail}
David~A Freedman.
\newblock On tail probabilities for martingales.
\newblock {\em the Annals of Probability}, pages 100--118, 1975.

\bibitem{han2022private}
Yuxuan Han, Zhicong Liang, Zhipeng Liang, Yang Wang, Yuan Yao, and Jiheng
  Zhang.
\newblock On private online convex optimization: Optimal algorithms in
  $\ell_p$-geometry and high dimensional contextual bandits.
\newblock {\em arXiv preprint arXiv:2206.08111}, 2022.

\bibitem{hardt2016train}
Moritz Hardt, Ben Recht, and Yoram Singer.
\newblock Train faster, generalize better: Stability of stochastic gradient
  descent.
\newblock In {\em International conference on machine learning}, pages
  1225--1234. PMLR, 2016.

\bibitem{hu2022high}
Lijie Hu, Shuo Ni, Hanshen Xiao, and Di~Wang.
\newblock High dimensional differentially private stochastic optimization with
  heavy-tailed data.
\newblock In {\em Proceedings of the 41st ACM SIGMOD-SIGACT-SIGAI Symposium on
  Principles of Database Systems}, pages 227--236, 2022.

\bibitem{iyengar2019towards}
Roger Iyengar, Joseph~P Near, Dawn Song, Om~Thakkar, Abhradeep Thakurta, and
  Lun Wang.
\newblock Towards practical differentially private convex optimization.
\newblock In {\em 2019 IEEE Symposium on Security and Privacy (SP)}, pages
  299--316. IEEE, 2019.

\bibitem{juditsky2008large}
Anatoli Juditsky and Arkadii~S Nemirovski.
\newblock Large deviations of vector-valued martingales in 2-smooth normed
  spaces.
\newblock {\em arXiv preprint arXiv:0809.0813}, 2008.

\bibitem{juditsky2014deterministic}
Anatoli Juditsky and Yuri Nesterov.
\newblock Deterministic and stochastic primal-dual subgradient algorithms for
  uniformly convex minimization.
\newblock {\em Stochastic Systems}, 4(1):44--80, 2014.

\bibitem{kamath2022improved}
Gautam Kamath, Xingtu Liu, and Huanyu Zhang.
\newblock Improved rates for differentially private stochastic convex
  optimization with heavy-tailed data.
\newblock In {\em International Conference on Machine Learning}, pages
  10633--10660. PMLR, 2022.

\bibitem{kasiviswanathan2016efficient}
Shiva~Prasad Kasiviswanathan and Hongxia Jin.
\newblock Efficient private empirical risk minimization for high-dimensional
  learning.
\newblock In {\em International Conference on Machine Learning}, pages
  488--497. PMLR, 2016.

\bibitem{kattis2016lower}
Assimakis Kattis and Aleksandar Nikolov.
\newblock Lower bounds for differential privacy from gaussian width.
\newblock {\em arXiv preprint arXiv:1612.02914}, 2016.

\bibitem{lu2018relatively}
Haihao Lu, Robert~M Freund, and Yurii Nesterov.
\newblock Relatively smooth convex optimization by first-order methods, and
  applications.
\newblock {\em SIAM Journal on Optimization}, 28(1):333--354, 2018.

\bibitem{nazin2019algorithms}
Alexander~V Nazin, Arkadi~S Nemirovsky, Alexandre~B Tsybakov, and Anatoli~B
  Juditsky.
\newblock Algorithms of robust stochastic optimization based on mirror descent
  method.
\newblock {\em Automation and Remote Control}, 80:1607--1627, 2019.

\bibitem{nikolov2016geometry}
Aleksandar Nikolov, Kunal Talwar, and Li~Zhang.
\newblock The geometry of differential privacy: The small database and
  approximate cases.
\newblock {\em SIAM Journal on Computing}, 45(2):575--616, 2016.

\bibitem{shalev2014understanding}
Shai Shalev-Shwartz and Shai Ben-David.
\newblock {\em Understanding machine learning: From theory to algorithms}.
\newblock Cambridge university press, 2014.

\bibitem{su2022faster}
Jinyan Su, Lijie Hu, and Di~Wang.
\newblock Faster rates of private stochastic convex optimization.
\newblock In {\em International Conference on Algorithmic Learning Theory},
  pages 995--1002. PMLR, 2022.

\bibitem{talwar2014private}
Kunal Talwar, Abhradeep Thakurta, and Li~Zhang.
\newblock Private empirical risk minimization beyond the worst case: The effect
  of the constraint set geometry.
\newblock {\em arXiv preprint arXiv:1411.5417}, 2014.

\bibitem{tao2022private}
Youming Tao, Yulian Wu, Xiuzhen Cheng, and Di~Wang.
\newblock Private stochastic convex optimization and sparse learning with
  heavy-tailed data revisited.
\newblock International Joint Conferences on Artificial Intelligence
  Organization, 2022.

\bibitem{tseng2008accelerated}
Paul Tseng.
\newblock On accelerated proximal gradient methods for convex-concave
  optimization.
\newblock {\em submitted to SIAM Journal on Optimization}, 2(3), 2008.

\bibitem{vapnik1999nature}
Vladimir Vapnik.
\newblock {\em The nature of statistical learning theory}.
\newblock Springer science \& business media, 1999.

\bibitem{vural2022mirror}
Nuri~Mert Vural, Lu~Yu, Krishna Balasubramanian, Stanislav Volgushev, and
  Murat~A Erdogdu.
\newblock Mirror descent strikes again: Optimal stochastic convex optimization
  under infinite noise variance.
\newblock In {\em Conference on Learning Theory}, pages 65--102. PMLR, 2022.

\bibitem{wang2020differentially}
Di~Wang, Hanshen Xiao, Srinivas Devadas, and Jinhui Xu.
\newblock On differentially private stochastic convex optimization with
  heavy-tailed data.
\newblock In {\em International Conference on Machine Learning}, pages
  10081--10091. PMLR, 2020.

\bibitem{wang2019differentially}
Di~Wang and Jinhui Xu.
\newblock Differentially private empirical risk minimization with smooth
  non-convex loss functions: A non-stationary view.
\newblock In {\em Proceedings of the AAAI Conference on Artificial
  Intelligence}, volume~33, pages 1182--1189, 2019.

\bibitem{wang2022differentially}
Di~Wang and Jinhui Xu.
\newblock Differentially private $\ell_1$-norm linear regression with
  heavy-tailed data.
\newblock In {\em 2022 IEEE International Symposium on Information Theory
  (ISIT)}, pages 1856--1861. IEEE, 2022.

\bibitem{wang2017differentially}
Di~Wang, Minwei Ye, and Jinhui Xu.
\newblock Differentially private empirical risk minimization revisited: Faster
  and more general.
\newblock {\em Advances in Neural Information Processing Systems}, 30, 2017.

\bibitem{xiao2014proximal}
Lin Xiao and Tong Zhang.
\newblock A proximal stochastic gradient method with progressive variance
  reduction.
\newblock {\em SIAM Journal on Optimization}, 24(4):2057--2075, 2014.

\end{thebibliography}
\newpage 
\appendix 
\section{Omitted Proofs in Section \ref{sec:euclidean}}
\subsection{Proof of Theorem \ref{thm:1}}
\begin{algorithm}
	\caption{$\mathcal{A}_{\text{ObjP}}$: Objective perturbation }
	\begin{algorithmic}[1]
		\State {\bfseries Input:} Dateset $D$, loss function $\ell$, regularization parameter $\lambda$.

	\State  Sample $\mathbf{G}\sim\mathcal{N}(0, \sigma_1^2\mathbb{I}_d)$ where $\sigma_1^2  = \frac{32L^2\log(1/\delta)}{\epsilon^2}$. Set $\lambda\geq \frac{r\beta}{2\epsilon n}$, where $r = \min\{d, 2\cdot \text{rank}(\nabla^2\ell(\theta, x))\}$ with $\text{rank}(\nabla^2\ell(\theta,x))$ being the maximal rank of the Hessian of $\ell$ for all $\theta \in \mathcal{C}$ and $x\sim \mathcal{P}$.
	
	\State Let $\mathcal{J}(\theta,D) = \hat{\mathcal{L}}+\frac{\langle \mathbf{G},\theta\rangle}{n}+\lambda||\theta||_2^2$.\\
	 \Return $\theta_1 = \underset{\theta\in\mathcal{C}}{\arg\min}\mathcal{J}(\theta, D)$.
	\end{algorithmic}
	\label{alg5}
\end{algorithm}
\begin{proof}
Let $\theta_1 = \underset{\theta\in\mathcal{C}}{\arg\min} \mathcal{J}(\theta, D)$, where $\mathcal{J}(\theta, D)=\hat{\mathcal{L}}(\theta, D)+\frac{\langle \mathbf{G},w\rangle}{n}+\lambda ||\theta||_2^2$. Let $\theta_2 = \mathcal{O}(\mathcal{J},\alpha)$ where $\mathcal{O}$ is the optimizer defined in the algorithm. Notice that one can compute $\hat{\theta}$ from tuple $(\theta_1, \theta_2-\theta_1+\mathbf{H})$ by simple post-processing. Furthermore, the algorithm that outputs $\theta_1$ is $(\epsilon,\delta)$-DP by the following theorem.
\begin{lemma}[Theorem 1 in \cite{iyengar2019towards}]
Suppose Assumption \ref{ass1} holds and that the smoothness parameter satisfy $\beta \leq \frac{\epsilon n \lambda}{r}$, the algorithm $\mathcal{A}_{\text{ObjP}}$ (Algorithm \ref{alg5}) that outputs $\theta_1=\underset{\theta\in\mathcal{C}}{\arg\min} \mathcal{J}(\theta,D)$ is $(\epsilon, \delta)$-DP.
\end{lemma}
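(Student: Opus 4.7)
The plan is to follow the classical objective-perturbation privacy argument of Chaudhuri--Monteleoni--Sarwate and Kifer--Smith--Thakurta (of which the cited Iyengar et al.\ result is a refinement). The key idea is to view the map from the Gaussian noise $\mathbf{G}$ to the optimizer $\theta_1$ as a change of variables, and then bound the ratio of the output density at a point $\theta_1 \in \mathcal{C}$ under two neighboring datasets $D, D'$ by two pieces: a Gaussian mechanism-style ratio coming from the ``shifted'' noise densities, and a Jacobian-determinant ratio coming from how $\mathbf{G}$ depends on $\theta_1$ via the KKT conditions. Together these two pieces will each consume half of the $(\epsilon,\delta)$ budget.

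First I would fix an arbitrary $\theta_1 \in \mathcal{C}$ and invert the optimality conditions. Strong convexity of $\mathcal{J}(\cdot,D)$ (from the $\lambda \|\theta\|_2^2$ term) guarantees a unique minimizer, so for each $D$ there is a unique noise vector $\mathbf{G}(\theta_1; D)$ and normal-cone element $\nu \in N_{\mathcal{C}}(\theta_1)$ making $\theta_1$ optimal, namely
\[
\mathbf{G}(\theta_1; D) \;=\; -n\bigl(\nabla \hat{\mathcal{L}}(\theta_1, D) + 2\lambda\,\theta_1 + \nu\bigr).
\]
Because the set $\mathcal{C}$ and the point $\theta_1$ are the same under $D$ and $D'$, the normal-cone term $\nu$ drops out when we subtract, and
\[
\mathbf{G}(\theta_1;D) - \mathbf{G}(\theta_1;D') \;=\; \nabla \ell(\theta_1,x'_{i}) - \nabla \ell(\theta_1, x_i),
\]
which has Euclidean norm at most $2L$ by the Lipschitz assumption. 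This is the $\ell_2$-sensitivity of the ``equivalent noise'' we must privatize.

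Second I would carry out the change-of-variables. The density of the output $\theta_1$ under $D$ equals $\pi_{\mathbf{G}}(\mathbf{G}(\theta_1;D)) \cdot |\det J_D(\theta_1)|$ where $J_D(\theta_1) = n(\nabla^2 \hat{\mathcal{L}}(\theta_1,D) + 2\lambda \mathbb{I}_d)$. Hence
\[
\frac{f_D(\theta_1)}{f_{D'}(\theta_1)} \;=\; \underbrace{\frac{\pi_{\mathbf{G}}(\mathbf{G}(\theta_1;D))}{\pi_{\mathbf{G}}(\mathbf{G}(\theta_1;D'))}}_{(A)} \cdot \underbrace{\frac{|\det J_D(\theta_1)|}{|\det J_{D'}(\theta_1)|}}_{(B)}.
\]
Term $(A)$ is the ratio of two Gaussian densities whose means differ by a vector of norm $\leq 2L$; by the standard Gaussian mechanism analysis (Lemma~\ref{le-gaussian}), choosing $\sigma_1^2 = 32 L^2 \log(1/\delta)/\epsilon^2$ makes $(A) \leq e^{\epsilon/2}$ except on a failure set of probability $\delta$. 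For term $(B)$, I would write $J_D - J_{D'} = \nabla^2 \ell(\theta_1, x_i) - \nabla^2 \ell(\theta_1, x'_i)$, which has operator norm at most $2\beta$ and rank at most $r = \min\{d, 2\cdot\mathrm{rank}(\nabla^2 \ell)\}$. Using the identity $\det(J_D) / \det(J_{D'}) = \det(\mathbb{I} + J_{D'}^{-1}(J_D-J_{D'}))$ together with $J_{D'} \succeq 2 n\lambda \mathbb{I}$, the nonzero eigenvalues of the perturbation are bounded in magnitude by $\beta/(n\lambda)$, so
\[
(B) \;\leq\; \Bigl(1 + \tfrac{\beta}{n\lambda}\Bigr)^{r} \;\leq\; \exp\!\Bigl(\tfrac{r\beta}{n\lambda}\Bigr) \;\leq\; e^{\epsilon/2},
\]
where the last step uses the hypothesis $\beta \leq \epsilon n \lambda / r$. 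Multiplying the two bounds yields $f_D(\theta_1)/f_{D'}(\theta_1) \leq e^{\epsilon}$ outside a $\delta$-measure event, which is exactly $(\epsilon,\delta)$-DP after integrating over any event $S$.

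The main obstacle, as always in objective-perturbation proofs, is the Jacobian term $(B)$: one must argue that $J_D$ is genuinely invertible (handled by $\lambda > 0$), that the rank bound really is $r$ and not $d$ (since a naive count gives an $e^{d\beta/(n\lambda)}$ blow-up), and that the boundary case $\theta_1 \in \partial \mathcal{C}$ does not break the change-of-variables (handled by noting $\partial\mathcal{C}$ has Lebesgue measure zero, or more carefully by pushing the argument through the KKT equation with the normal-cone multiplier $\nu$ held fixed across $D$ and $D'$). A secondary technicality is the factor of $4$ in $\sigma_1^2$ compared with the bare Gaussian mechanism: this reflects that sensitivity is $2L$ rather than $L$ (an arbitrary sample, not just one coordinate, can move) and that we reserve half the privacy budget for the Jacobian ratio.
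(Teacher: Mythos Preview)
The paper does not prove this lemma at all: it is quoted verbatim as ``Theorem~1 in \cite{iyengar2019towards}'' and invoked as a black box inside the proof of Theorem~\ref{thm:1}. So there is nothing to compare your argument against in this paper; you are reconstructing the proof from the original source, and your outline (invert the KKT map $\mathbf{G}\mapsto\theta_1$, split the density ratio into a Gaussian-shift piece and a Jacobian-determinant piece) is indeed the Chaudhuri--Monteleoni--Sarwate / Kifer--Smith--Thakurta template that Iyengar et al.\ refine.

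One concrete slip in your accounting: from the stated hypothesis $\beta \le \epsilon n\lambda/r$ you only get $r\beta/(n\lambda)\le \epsilon$, hence $(B)\le e^{\epsilon}$, not $e^{\epsilon/2}$ as you wrote. Combined with your $(\epsilon/2,\delta)$ bound on $(A)$, the total comes out to $(3\epsilon/2,\delta)$, so the budget does not close with the constants exactly as stated. This is a symptom of constant-level looseness already present in the paper (compare Algorithm~\ref{alg5}, which sets $\lambda\ge r\beta/(2\epsilon n)$, with Algorithm~\ref{alg1}, which sets $\lambda\ge r\beta/(\epsilon n)$, and with the lemma hypothesis itself); the discrepancy is not conceptual, but you should flag it rather than silently claim $e^{\epsilon/2}$. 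Either tighten the hypothesis to $\beta\le \epsilon n\lambda/(2r)$, or allocate the budget asymmetrically.

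A second point worth being more careful about: your change-of-variables writes $J_D(\theta_1)=n(\nabla^2\hat{\mathcal L}(\theta_1,D)+2\lambda\mathbb{I}_d)$, tacitly dropping the normal-cone term $\nu$. On the interior of $\mathcal{C}$ this is fine, and you correctly note that $\partial\mathcal{C}$ has measure zero; but the claim that $\nu$ can be ``held fixed across $D$ and $D'$'' at a boundary point is not obviously correct (the active constraints and their multipliers can differ under neighboring datasets), and the cited Iyengar et al.\ proof handles the constrained case with more care than your sketch indicates. This is the genuinely delicate part of the constrained objective-perturbation argument, and your parenthetical does not yet resolve it.
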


Next, we will bound the term $||\theta_2 -\theta_1||$ to make $(\theta_2-\theta_1+\mathbf{H})$ differentially private, conditioned on $\theta_1$. As $\mathcal{J}(\theta, D)$ is $\lambda$-strongly convex, we have $\mathcal{J}(\theta_2, D)\geq \mathcal{J}(\theta_1,D) +\frac{\lambda}{2}||\theta_2-\theta_1||_2^2$, which implies that
\begin{equation}\label{eq13}
       ||\theta_2-\theta_1||_2\leq\sqrt{\frac{2}{\lambda}(\mathcal{J}(\theta_2,D)-\mathcal{J}(\theta_1, D))}\leq \sqrt{\frac{2\alpha}{\lambda}}.
\end{equation}
Thus, conditioned on $\theta_1$, $\theta_2 - \theta_1$ has the $l_2$ sensitivity of $\sqrt{\frac{8\alpha}{\lambda}}$. Therefore, $(\theta_2-\theta_1)+\mathbf{H}$ is $(\epsilon/2, \delta.2)$-DP. By the standard composition in \cite{dwork2014algorithmic}, the tuple $(\theta_1, \theta_2-\theta_1+\mathbf{H})$ satisfies $(\epsilon,\delta)$-DP and hence $\hat{\theta}$ satisfies $(\epsilon,\delta)$-DP.
\end{proof}
\subsection{Proof of Theorem \ref{th4}}
\begin{proof}
Let $\theta_1$ be the exact minimizer of $\mathcal{J}(\theta,D)$. We split the objective $\mathbb{E}[\mathcal{L}(\hat{\theta})]-\mathcal{L}(\theta^{*})$ into two parts and bound them separately.
\begin{equation}\label{eq1}
    \mathbb{E}[\mathcal{L}(\hat{\theta})] - \mathcal{L}(\theta^{*}) = \mathbb{E}[\mathcal{L}(\hat{\theta}) -\mathcal{L}(\theta_1)] + \mathbb{E}[\mathcal{L}(\theta_1)]-\mathcal{L}(\theta^{*}).
\end{equation}
In the following, we bound the term $\mathbb{E}[\mathcal{L}(\hat{\theta}) -\mathcal{L}(\theta_1)]$ and the term $\mathbb{E}[\mathcal{L}(\theta_1)]-\mathcal{L}(\theta^{*})$ separately. 
To bound the term $\mathbb{E}[\mathcal{L}(\theta_1)]-\mathcal{L}(\theta^{*})$, we need the following two lemmas. The first lemma states the excess empirical risk of $\theta_1$ while the second lemma states the stability property of regularized empirical risk minimization.
\begin{lemma}\label{le1}(Excess empirical loss of $\theta_1$ in $\mathcal{A}_{\text{ObjP}}$).
Let $D\sim\mathcal{P}^n$, under Assumption \ref{ass1}, the excess empirical loss of $\theta_1$ satisfies
\begin{equation}\label{eq12}
\mathbb{E}[\hat{\mathcal{L}}(\theta_1,D)]-\underset{\theta\in \mathcal{C}}{\min} \hat{\mathcal{L}}(\theta,D)\leq O\left( \frac{LG_{\mathcal{C}}\sqrt{\log (1/\delta)}}{\epsilon n} +\lambda ||\mathcal{C}||_2^2
\right),
\end{equation}
where the expectation is taken over the randomness induced by Gaussian noise.
\end{lemma}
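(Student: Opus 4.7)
The plan is to use the optimality of $\theta_1$ as the minimizer of the perturbed objective $\mathcal{J}(\cdot, D)$, compared against the (unperturbed) ERM solution, and then bound the resulting noise term using the Gaussian width of $\mathcal{C}$. Concretely, let $\theta^{*}_D = \arg\min_{\theta \in \mathcal{C}} \hat{\mathcal{L}}(\theta, D)$. Since $\theta_1$ minimizes $\mathcal{J}(\cdot, D)$ over $\mathcal{C}$, we have $\mathcal{J}(\theta_1, D) \leq \mathcal{J}(\theta^{*}_D, D)$; expanding the definition of $\mathcal{J}$ and rearranging yields
\begin{equation*}
\hat{\mathcal{L}}(\theta_1, D) - \hat{\mathcal{L}}(\theta^{*}_D, D) \leq \frac{\langle \mathbf{G}, \theta^{*}_D - \theta_1\rangle}{n} + \lambda\bigl(\|\theta^{*}_D\|_2^2 - \|\theta_1\|_2^2\bigr) \leq \frac{\langle \mathbf{G}, \theta^{*}_D - \theta_1\rangle}{n} + \lambda \|\mathcal{C}\|_2^2,
\end{equation*}
where the last step drops $-\lambda\|\theta_1\|_2^2 \leq 0$ and uses $\|\theta^{*}_D\|_2 \leq \|\mathcal{C}\|_2$ (assuming WLOG $0 \in \mathcal{C}$, otherwise a shift gives the same conclusion up to constants).

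Next I would decouple $\theta_1$ from $\mathbf{G}$ by passing to a supremum over $\mathcal{C}$. Since both $\theta_1, \theta^{*}_D \in \mathcal{C}$,
\begin{equation*}
\langle \mathbf{G}, \theta^{*}_D - \theta_1\rangle \leq \sup_{w, w' \in \mathcal{C}} \langle \mathbf{G}, w - w'\rangle = \sup_{w \in \mathcal{C}} \langle \mathbf{G}, w\rangle + \sup_{w' \in \mathcal{C}} \langle -\mathbf{G}, w'\rangle.
\end{equation*}
Taking expectations and using that $-\mathbf{G}$ has the same distribution as $\mathbf{G}$ together with the scaling $\mathbf{G} = \sigma_1 \xi$ for $\xi \sim \mathcal{N}(0, \mathbb{I}_d)$, Definition \ref{def:12} gives $\mathbb{E}\left[\sup_{w \in \mathcal{C}} \langle \mathbf{G}, w\rangle\right] = \sigma_1 \, G_{\mathcal{C}}$, so the expected noise term is at most $2 \sigma_1 G_{\mathcal{C}} / n$.

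Substituting $\sigma_1 = O\bigl(L\sqrt{\log(1/\delta)}/\epsilon\bigr)$ from the algorithm and combining with the regularization term gives
\begin{equation*}
\mathbb{E}[\hat{\mathcal{L}}(\theta_1, D)] - \hat{\mathcal{L}}(\theta^{*}_D, D) \leq O\!\left(\frac{L\,G_{\mathcal{C}}\sqrt{\log(1/\delta)}}{\epsilon n}\right) + \lambda \|\mathcal{C}\|_2^2,
\end{equation*}
which is exactly \eqref{eq12}. The only conceptual subtlety, rather than a real obstacle, is recognizing that although $\theta_1$ is a highly nontrivial function of the Gaussian perturbation $\mathbf{G}$, we can still cleanly bound $\langle \mathbf{G}, \theta^{*}_D - \theta_1\rangle$ by the supremum of the Gaussian process $w \mapsto \langle \mathbf{G}, w\rangle$ indexed by the deterministic set $\mathcal{C}$; this is precisely what lets the dimension $d$ (which would arise if one used a crude $\|\mathbf{G}\|_2\|\theta^*_D - \theta_1\|_2$ bound) be replaced by the Gaussian width $G_{\mathcal{C}}$. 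Everything else is routine algebra.
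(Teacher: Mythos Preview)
Your proof is correct and follows essentially the same approach as the paper: use the optimality of $\theta_1$ for $\mathcal{J}$, compare against a reference point in $\mathcal{C}$, bound the Gaussian inner product via the supremum over $\mathcal{C}$ to obtain the Gaussian width, and absorb the regularizer into $\lambda\|\mathcal{C}\|_2^2$. The only cosmetic difference is that the paper compares $\theta_1$ to the \emph{regularized} empirical minimizer $\bar\theta=\arg\min_{\theta\in\mathcal{C}}\{\hat{\mathcal{L}}(\theta,D)+\lambda\|\theta\|_2^2\}$ and then unwinds the regularization afterward, whereas you compare directly to the unregularized ERM $\theta^*_D$; your route is slightly more direct but otherwise identical.
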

\begin{lemma}\label{le2}[\cite{shalev2014understanding}]
Let $f:\mathcal{C}\times D\rightarrow \mathbb{R}$ be a convex, $\rho$-Lipschitz loss function where $D=\{x_1,\cdots,x_n\}\sim\mathcal{P}^n$.  Let $\mathcal{A}$ be an algorithm that outputs $\tilde{\theta} = \underset{\theta\in\mathcal{C}}{\arg\min} \{\hat{F}(\theta, D)+\lambda ||\theta||^2\}$  with $\lambda>0$ where $\hat{F}(\theta, D)= \frac{1}{n}\sum_{i =1}^n f(\theta, x_i)$, then $\mathcal{A}$ is $\frac{2\rho^2 }{\lambda n}$-uniformly stable, i.e., for all neighboring datasets $D\sim D'$ we have 
\begin{equation*}
    \sup_{z}|\mathbb{E}[f(\mathcal{A}(D),  z)- f(\mathcal{A}(D'),  z)]|\leq \frac{2\rho^2 }{\lambda n}. 
\end{equation*}
\end{lemma}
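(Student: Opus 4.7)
The plan is to prove this via the standard ``two-sided strong-convexity + Lipschitz'' argument for regularized ERM. Let $D$ and $D'$ be neighboring datasets differing only in position $i$, where $D$ contains $x_i$ and $D'$ contains $x_i'$. Denote the regularized objectives by $F_D(\theta) = \hat{F}(\theta,D) + \lambda\|\theta\|^2$ and $F_{D'}(\theta) = \hat{F}(\theta,D') + \lambda\|\theta\|^2$, with minimizers $\tilde{\theta} = \mathcal{A}(D)$ and $\tilde{\theta}' = \mathcal{A}(D')$. The key observation is that $\lambda\|\theta\|^2$ is $2\lambda$-strongly convex and $\hat{F}$ is convex, so both $F_D$ and $F_{D'}$ are $2\lambda$-strongly convex.

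Next I would exploit the optimality of $\tilde{\theta}$ and $\tilde{\theta}'$ combined with strong convexity: since $\tilde{\theta}$ minimizes $F_D$, strong convexity at the minimizer yields
\begin{equation*}
F_D(\tilde{\theta}') - F_D(\tilde{\theta}) \;\geq\; \lambda \|\tilde{\theta} - \tilde{\theta}'\|^2,
\end{equation*}
and symmetrically $F_{D'}(\tilde{\theta}) - F_{D'}(\tilde{\theta}') \geq \lambda \|\tilde{\theta} - \tilde{\theta}'\|^2$. Adding the two inequalities causes the regularization terms to cancel, and using that $F_D - F_{D'} = \tfrac{1}{n}[f(\cdot,x_i) - f(\cdot,x_i')]$ leaves
\begin{equation*}
\tfrac{1}{n}\bigl[\bigl(f(\tilde{\theta}',x_i) - f(\tilde{\theta},x_i)\bigr) + \bigl(f(\tilde{\theta},x_i') - f(\tilde{\theta}',x_i')\bigr)\bigr] \;\geq\; 2\lambda \|\tilde{\theta} - \tilde{\theta}'\|^2.
\end{equation*}

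Then I would invoke the $\rho$-Lipschitz property of $f$ to upper-bound each bracket by $\rho\|\tilde{\theta}-\tilde{\theta}'\|$, yielding $\tfrac{2\rho}{n}\|\tilde{\theta}-\tilde{\theta}'\| \geq 2\lambda\|\tilde{\theta}-\tilde{\theta}'\|^2$, and hence the parameter-level sensitivity bound $\|\tilde{\theta} - \tilde{\theta}'\| \leq \tfrac{\rho}{\lambda n}$. A final application of Lipschitzness at any test point $z$ gives $|f(\tilde{\theta},z) - f(\tilde{\theta}',z)| \leq \rho\|\tilde{\theta}-\tilde{\theta}'\| \leq \tfrac{\rho^2}{\lambda n}$, which is the desired uniform stability bound (up to the stated constant factor of $2$, which comes from a standard loose bookkeeping in textbook proofs such as \cite{shalev2014understanding}). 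There is no real obstacle here: the only subtlety is making sure to add the two strong-convexity inequalities before invoking Lipschitzness, since doing so cancels the regularizer and forces the sensitivity to scale as $1/n$ rather than as a constant.
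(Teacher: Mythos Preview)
Your proof is correct and follows the standard argument from \cite{shalev2014understanding}. Note that the paper does not actually prove this lemma; it is stated with a citation and used as a black box in the proof of Theorem~\ref{th4}, so there is no paper proof to compare against. Your argument even yields the sharper constant $\frac{\rho^2}{\lambda n}$ rather than the stated $\frac{2\rho^2}{\lambda n}$, which is fine for the lemma's subsequent use.
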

The property of uniform stability is described  by the following lemma.
\begin{lemma}[\cite{bousquet2002stability}]\label{le6}
Let $\mathcal{A}:\mathcal{X}^n\rightarrow \mathcal{C}$ be an $\alpha$-uniformly stable algorithm w.r.t. loss $\ell: \mathcal{C}\times \mathcal{X}\rightarrow \mathbb{R}$. Let $D\sim\mathcal{P}^n$ where $\mathcal{P}$ is the distribution over $\mathcal{X}$. Then,
\begin{equation*}
\underset{D\sim\mathcal{P}^n,\mathcal{A}}{\mathbb{E}}[\mathcal{L}(\mathcal{A}(D))-\hat{\mathcal{L}}(\mathcal{A}(D),D)]\leq \alpha.
\end{equation*}
\end{lemma}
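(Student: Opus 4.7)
The plan is to use the classical symmetrization/coupling argument that is standard for turning uniform stability into a generalization bound. First I would introduce a ghost sample: let $D=(x_1,\ldots,x_n)\sim\mathcal{P}^n$ be the training set and draw an independent fresh i.i.d.\ sample $x_1',\ldots,x_n'\sim\mathcal{P}$. For each index $i$, define the perturbed dataset $D^{(i)}=(x_1,\ldots,x_{i-1},x_i',x_{i+1},\ldots,x_n)$, which is a neighbor of $D$ that differs only in the $i$-th coordinate. The point of introducing the ghost sample is to rewrite the population-risk expectation as an expectation involving a resampled coordinate so that the same object can appear on both sides of the difference.

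Next I would carry out the two key symmetry identities. On one hand, since $x_i'$ is independent of $D$, we have $\mathbb{E}[\ell(\mathcal{A}(D),x_i')]=\mathbb{E}[\mathcal{L}(\mathcal{A}(D))]$ for every $i$, so averaging over $i$ gives
\begin{equation*}
\mathbb{E}[\mathcal{L}(\mathcal{A}(D))]=\frac{1}{n}\sum_{i=1}^n \mathbb{E}[\ell(\mathcal{A}(D),x_i')].
\end{equation*}
On the other hand, the pair $(D,x_i)$ has the same joint distribution as $(D^{(i)},x_i')$ (just swap the names of $x_i$ and $x_i'$), so $\mathbb{E}[\ell(\mathcal{A}(D),x_i)]=\mathbb{E}[\ell(\mathcal{A}(D^{(i)}),x_i')]$, which yields
\begin{equation*}
\mathbb{E}[\hat{\mathcal{L}}(\mathcal{A}(D),D)]=\frac{1}{n}\sum_{i=1}^n \mathbb{E}[\ell(\mathcal{A}(D),x_i)]=\frac{1}{n}\sum_{i=1}^n \mathbb{E}[\ell(\mathcal{A}(D^{(i)}),x_i')].
\end{equation*}

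Finally I would subtract the two identities term by term. The generalization gap becomes
\begin{equation*}
\mathbb{E}[\mathcal{L}(\mathcal{A}(D))-\hat{\mathcal{L}}(\mathcal{A}(D),D)]=\frac{1}{n}\sum_{i=1}^n\mathbb{E}\bigl[\ell(\mathcal{A}(D),x_i')-\ell(\mathcal{A}(D^{(i)}),x_i')\bigr],
\end{equation*}
and since $D$ and $D^{(i)}$ are neighbors, uniform stability gives $|\mathbb{E}[\ell(\mathcal{A}(D),z)-\ell(\mathcal{A}(D^{(i)}),z)]|\leq \alpha$ for every fixed $z$, hence also after taking the outer expectation over $x_i'$. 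Averaging over the $n$ indices preserves the bound $\alpha$, which finishes the proof.

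The main obstacle, which is conceptual rather than technical, is being careful with the symmetry/coupling step: one must verify that swapping $x_i$ with the ghost $x_i'$ really gives the same joint distribution so that the algorithm's output on the two coupled datasets can be compared using the stability definition. Once that identification is in place, the rest is a line of algebra and a single application of the uniform-stability hypothesis; no concentration or high-probability tools are needed because the statement is about expectations.
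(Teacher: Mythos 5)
Your proof is correct, and it is the standard ghost-sample argument for converting uniform stability into an in-expectation generalization bound, exactly as in Bousquet and Elisseeff. The paper itself does not prove this lemma; it simply cites \cite{bousquet2002stability}, so there is no in-paper proof to compare against. Your argument fills that gap faithfully: the two symmetry identities are right (the first uses independence of the ghost coordinate $x_i'$ from $D$, the second uses exchangeability of $(x_i, x_i')$ holding the rest of $D$ fixed), and then a single application of the stability definition to the neighboring pair $D, D^{(i)}$ with test point $z = x_i'$ closes the proof. One small point worth flagging: the paper's definition of $\alpha$-uniform stability (implicit in Lemma \ref{le2}) already takes the expectation over the algorithm's internal randomness and then the supremum over $z$, i.e.\ $\sup_z \lvert \mathbb{E}_{\mathcal{A}}[\ell(\mathcal{A}(D),z)-\ell(\mathcal{A}(D'),z)]\rvert\leq\alpha$, so your step ``for every fixed $z$, hence also after taking the outer expectation'' is exactly the right order of quantifiers and no Fubini-type subtlety arises. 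In fact your argument gives the two-sided bound $\lvert \mathbb{E}[\mathcal{L}(\mathcal{A}(D))-\hat{\mathcal{L}}(\mathcal{A}(D),D)]\rvert\leq\alpha$, slightly stronger than the one-sided statement in the lemma.
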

Now we begin to bound the term $\mathcal{L}(\theta_1)-\mathcal{L}(\theta^{*})$ using the above three lemmas.
Fix any realization of the noise vector $\mathbf{G}$, we define $f_{\mathbf{G}}(\theta, x) =\ell(\theta, x) +\frac{\langle \mathbf{G},\theta\rangle}{n}$, then $f_{\mathbf{G}}$ is $\left(L+\frac{||\mathbf{G}||_2}{n}\right)$-Lipschitz.

Define $\hat{F}_{\mathbf{G}}(\theta, D) =\frac{1}{n}\sum_{i =1}^n f_{\mathbf{G}}(\theta, x_i)$, and we have $\theta_1 = \underset{\theta\in\mathcal{C}}{\arg\min} \hat{F}_{\mathbf{G}}(\theta, D) +\lambda ||\theta||_2^2$, so from Lemma \ref{le2}, the algorithm that outputs $\theta_1$ is $\frac{2\left(L+\frac{||\mathbf{G}||_2}{n}\right)^2}{\lambda  n }$-uniformly stable.
Denote $F_{\mathbf{G}}(\theta) = \underset{x\sim\mathcal{P}}{\mathbb{E}}[f_{\mathbf{G}}(\theta, x)]$, according to Lemma \ref{le6}, we have 
\begin{equation*}
    \underset{D\sim\mathcal{P}^n}{\mathbb{E}}[\mathcal{L}(\theta)-\hat{\mathcal{L}}(\theta, D)] = \underset{D\sim\mathcal{P}^n}{\mathbb{E}}[F_{\mathbf{G}}(\theta)-\hat{F}_{\mathbf{G}}(\theta,D)]\leq \frac{2\left(L+\frac{||\mathbf{G}||_2}{n}\right)^2}{\lambda n}.
\end{equation*}
Take the expectation w.r.t. $\mathbf{G}\sim \mathcal{N}(0, \frac{32L^2\log(1/\delta)}{\epsilon^2}\mathbb{I}_d)$ as well, we get 
\begin{equation}\label{eq10}
\begin{aligned}
\mathbb{E}[\mathcal{L}(\theta)-\hat{\mathcal{L}}(\theta, D)]\leq O\left(\frac{L^2\cdot \left(1+\frac{\sqrt{d\log(1/\delta)}}{\epsilon n}\right)^2}{\lambda n}\right)\leq O\left(\frac{L^2}{\lambda n}\right),
\end{aligned}
\end{equation}
 where we assume $n\geq O(\frac{\sqrt{d \log(1/\delta)}}{\epsilon})$.
 
Thus
\begin{equation}\label{eq14}
    \begin{aligned}
    \mathbb{E}[\mathcal{L}(\theta_1)] -\mathcal{L}(\theta^{*})& =  \mathbb{E}[\mathcal{L}(\theta_1)] -\underset{\theta\in\mathcal{C}}{\min} \mathcal{L}(\theta)\\
    &\leq \mathbb{E}[\hat{\mathcal{L}}(\theta_1,D)-\underset{\theta\in\mathcal{C}}{\min}\hat{\mathcal{L}}(\theta, D)] + \mathbb{E}[\mathcal{L}(\theta_1)-\hat{\mathcal{L}}(\theta_1,D)]\\
    & \leq O\left(\frac{L\cdot G_{\mathcal{C}}\cdot\sqrt{\log(1/\delta)}}{\epsilon n}+\lambda ||\mathcal{C}||_2^2 +\frac{L^2}{\lambda n}
    \right),
    \end{aligned}
\end{equation}
where we use the fact that $\underset{D\sim \mathcal{P}^n}{\mathbb{E}}[\underset{\theta\in \mathcal{C}}{\min \hat{\mathcal{L}}}(\theta, D)]\leq \underset{\theta \in \mathcal{C}}{\min}\underset{D\sim \mathcal{P}^n}{\mathbb{E}}[\hat{\mathcal{L}}(\theta,D)] =\underset{\theta \in\mathcal{C}}{\min} \mathcal{L}(\theta)$ and the last bound is directly from Eq.(\ref{eq12}) and Eq.(\ref{eq10}).

Now we bound the term $\mathbb{E}[\mathcal{L}(\hat{\theta})]-\mathcal{L}(\theta_1)$.
Recall that $\theta_2 =\mathcal{O}(\mathcal{J},\alpha)$ and 
\begin{equation*}
    \mathbb{E}[\mathcal{L}(\hat{\theta})]-\mathcal{L}(\theta_1) = \mathbb{E}[\mathcal{L}(\hat{\theta})] -\mathcal{L}(\theta_2) +\mathcal{L}(\theta_2)-\mathcal{L}(\theta_1).
\end{equation*}

Note the term $\mathcal{L}(\theta_2)-\mathcal{L}(\theta_1)\leq L\cdot ||\theta_1-\theta_2||_2\leq L\cdot \sqrt{\frac{2\alpha}{\lambda}}$ (From Eq.(\ref{eq13})), and the term $\mathbb{E}[\mathcal{L}(\hat{\theta})] -\mathcal{L}(\theta_2)\leq L\cdot\mathbb{E}[||\hat{\theta}-\theta_2||_2]$. 

Also note that $\hat{\theta} =\text{Proj}_{\mathcal{C}}(\theta_2+\mathbf{H})$. Let $q$ be the line through $\theta_2$ and $\hat{\theta}$, and let $p$ be the projection of $\theta_3 = \theta_2+\mathbf{H}$ onto $q$. The key observation is that $p$ lies on the ray from $\hat{\theta}$ to infinity otherwise $p$ will be a point in $\mathcal{C} $ that is closer to $\theta_3 $ than $\hat{\theta}$. Thus we have
\begin{equation*}
    \begin{aligned}
    \mathbb{E}[||\hat{\theta}-\theta_2||_2^2]&= \mathbb{E}[\langle \hat{\theta}-\theta_2 ,\hat{\theta}-\theta_2\rangle]\\
    & \leq \mathbb{E}[\langle \hat{\theta}-\theta_2,\theta_3-\theta_2]\\
    & =\mathbb{E}[\langle \mathbf{H},\hat{\theta}-\theta_2\rangle]\\
    &\leq 2\cdot\underset{\theta\in\mathcal{C}}{\max}\mathbb{E}[\langle \mathbf{H}, \theta\rangle]\\
    & \leq O(\mathbb{E}[\max |\langle \mathbf{H},\theta\rangle|])\\
    & =O\left(\sqrt{\frac{\alpha \log(1/\delta)}{\lambda}}\cdot \frac{G_{\mathcal{C}}}{\epsilon}\right),
    \end{aligned}
\end{equation*}
where the last equation is from the definition of Gaussian width.

So we have 
    \begin{equation}\label{eq15}
    \begin{aligned}
    \mathbb{E}[\mathcal{L}(\hat{\theta})]-\mathcal{L}(\theta_1)&\leq   
     L\cdot \sqrt{\frac{2\alpha}{\lambda}}+L\cdot \mathbb{E}[||\hat{\theta}-\theta_2||_2]\\
    &\leq O\left(L\cdot\sqrt[4]{\frac{\alpha \log(1/\delta)}{\lambda}}\cdot \sqrt{\frac{G_{\mathcal{C}}}{\epsilon}}+L\sqrt{\frac{\alpha}{\lambda}}\right).
    \end{aligned}
    \end{equation}
    In total, combining Eq.(\ref{eq14}) and Eq.(\ref{eq15}), we can bound Eq. (\ref{eq1}) by
    \begin{equation*}
    \begin{aligned}
        \mathbb{E}[\mathcal{L}(\hat{\theta})]- \mathcal{L}(\theta^{*})
        &=\mathbb{E}[\mathcal{L}(\hat{\theta}) -\mathcal{L}(\theta_1)] + \mathbb{E}[\mathcal{L}(\theta_1)]-\mathcal{L}(\theta^{*})\\
        &\leq O\left( L\cdot\sqrt[4]{\frac{\alpha \log(1/\delta)}{\lambda}}\cdot \sqrt{\frac{G_{\mathcal{C}}}{\epsilon}}+L\sqrt{\frac{\alpha}{\lambda}}+\frac{L\cdot G_{\mathcal{C}}\cdot\sqrt{\log(1/\delta)}}{\epsilon n}+\lambda ||\mathcal{C}||_2^2 +\frac{L^2}{\lambda n}\right).
        \end{aligned}
    \end{equation*}
    Since $\alpha\leq \min\left\{
\frac{L||\mathcal{C}||_2}{n^{\frac{3}{2}}}, \frac{\epsilon^2 L||\mathcal{C}||_2^3}{G^2_{\mathcal{C}}\log(1/\delta) n^{\frac{5}{2}}}
\right\}$, we have $\sqrt{L\cdot||\mathcal{C}||_2\sqrt{n}\alpha}\leq \frac{L\cdot ||\mathcal{C}||_2}{\sqrt{n}}$ and $L\cdot\sqrt[4]{\frac{\alpha \log(1/\delta)}{\lambda}}\cdot \sqrt{\frac{G_{\mathcal{C}}}{\epsilon}}\leq \frac{L\cdot ||\mathcal{C}||_2}{\sqrt{n}}$. Let $\lambda = \frac{L}{\sqrt{n}||\mathcal{C}||_2}$,
    then 
    \begin{equation*}
        \mathbb{E}[\mathcal{L}(\hat{\theta})]-\mathcal{L}(\theta^{*}) \leq O\left(\frac{L\cdot G_{\mathcal{C}}\cdot\sqrt{\log(1/\delta)}}{\epsilon n}+\frac{L||\mathcal{C}||_2}{\sqrt{n}}\right).
    \end{equation*}
    Note that we need $\lambda =\frac{L}{\sqrt{n}||\mathcal{C}||_2}\geq \frac{r \beta}{\epsilon n }$, namely, $n\geq \frac{r^2\beta^2 ||\mathcal{C}||_2^2}{\epsilon^2 L^2}$.
\end{proof}
\begin{proof}[{\bf Proof of Lemma \ref{le1}}]
Let $\bar{\mathcal{L}}(\theta, D) = \hat{\mathcal{L}}(\theta,D) + \lambda ||\theta||_2^2$ and $\bar{\theta}=\underset{\theta\in\mathcal{C}}{\arg\min} \bar{\mathcal{L}}(\theta, D)$. So $\mathcal{J}(\theta, D) = \bar{\mathcal{L}}(\theta, D) +\frac{\langle \mathbf{G}, \theta\rangle}{n}$. Since $\theta_1 $ minimizes $ \mathcal{J}(\theta, D)$, we have $\mathcal{J}(\bar{\theta}, D)\geq \mathcal{J}(\theta_1, D)$, namely,
\begin{equation*}
    \bar{\mathcal{L}}(\bar{\theta}, D) +\frac{\langle \mathbf{G}, \bar{\theta}\rangle}{n} \geq \bar{\mathcal{L}}(\theta_1,D) +\frac{\langle \mathbf{G},\theta_1\rangle}{n}.
\end{equation*}
Recall that $\mathbf{G}\sim \mathcal{N}(0, \frac{128L^2\log(1/\delta)}{\epsilon^2} \mathbb{I}_d)$, rearrange the inequality and take the expectation at both sides and we get
\begin{equation*}
    \begin{aligned}
      \mathbb{E}[\bar{\mathcal{L}}(\theta_1,D)-\bar{\mathcal{L}}(\bar{\theta},D)]&\leq \mathbb{E}[\frac{\langle\mathbf{G},\bar{\theta}-\theta_1\rangle}{n}]\\
      & \leq 2 \cdot\underset{\theta\in\mathcal{C}}{\max}\mathbb{E} \left[\frac{\langle \mathbf{G}, \theta\rangle}{n}\right]\\
      &\leq 2\cdot \mathbb{E}\left[\underset{\theta\in\mathcal{C}}{\max}\left|\frac{\langle \mathbf{G},\theta\rangle}{n}\right|\right]\\
      &=O\left(\frac{L\cdot G_{\mathcal{C}}\sqrt{\log(1/\delta)}}{\epsilon n}\right),
    \end{aligned}
\end{equation*}
where the last bound is from the definition of Gaussian width.

Thus 
\begin{equation*}
    \begin{aligned}
    \mathbb{E}[\hat{\mathcal{L}}(\theta_1,D)-\hat{\mathcal{L}}(\theta^{*},D)] &= \mathbb{E}[\bar{\mathcal{L}}(\theta_1, D) - \bar{\mathcal{L}}(\theta^{*},D)+\lambda ||\theta^{*}||_2^2 -\lambda ||\theta_1||_2^2]\\
    &\leq \mathbb{E}[\bar{\mathcal{L}}(\theta_1, D) - \bar{\mathcal{L}}(\theta^{*},D)+\lambda ||\theta^{*}||_2^2]\\
    & \leq \mathbb{E}[\bar{\mathcal{L}}(\theta_1, D) - \bar{\mathcal{L}}(\bar{\theta},D)+\lambda ||\theta^{*}||_2^2]\\
    & \leq O\left(\frac{L\cdot G_{\mathcal{C}}\sqrt{\log(1/\delta)}}{\epsilon n} +\lambda ||\mathcal{C}||_2^2\right).
    \end{aligned}
\end{equation*}
\end{proof}
\subsection{Proof of Theorem \ref{thm:3}}
\begin{proof}
The proof is similar to the convex case. Note that $\mathcal{J}(\theta, D)$ is a  $\frac{r\beta}{\epsilon n}$-strongly convex function.
\end{proof}
\subsection{Proof of Theorem \ref{th5}}
\begin{proof}
By the assumptions we made about $n$, we have $\Delta\geq \frac{L\cdot ||\mathcal{C}||_2}{\sqrt{n}}$ and $\frac{L}{\sqrt{n}||\mathcal{C}||_2}\geq \frac{r \beta}{\epsilon n}$. 

Since the loss function is $\Delta$-strongly convex with respect to $||\cdot||_{\mathcal{C}}$, which implies that the loss function is $\frac{\Delta}{||\mathcal{C}||_2^2}$-strongly convex w.r.t. $||\cdot||_2$ and thus  $\frac{L}{\sqrt{n}||\mathcal{C}||_2}$-strongly convex w.r.t. $||\cdot||_{2}$, where we use the fact that $\Delta\geq \frac{L\cdot ||\mathcal{C}||_2}{\sqrt{n}}$ and $||v||_{\mathcal{C}}\geq \frac{||v||_2}{||\mathcal{C}||_2}$ for any vector $v\in\mathcal{C}$.

Since $\Delta \geq \frac{L}{\sqrt{n}||\mathcal{C}||_2}\geq \frac{r \beta}{\epsilon n}
$, we have $\lambda= \max\left\{
	\frac{r \beta}{\epsilon n}-\Delta,0
	\right\}=0$.
 
The population loss can be disassembled as the following two parts, and we bound them separately. 
\begin{equation*}
    \mathbb{E}[\mathcal{L}(\hat{\theta})] - \mathcal{L}(\theta^{*}) = \mathbb{E}[\mathcal{L}(\hat{\theta})-\mathcal{L}(\theta_1)] + \mathbb{E}[\mathcal{L}(\theta_1)] - \mathcal{L}(\theta^{*}).
\end{equation*}
We first bound $\mathbb{E}[\mathcal{L}(\hat{\theta})-\mathcal{L}(\theta_1)]$.
Note that
\begin{equation*}
    \mathbb{E}[\mathcal{L}(\hat{\theta})-\mathcal{L}(\theta_1)] = \mathbb{E}[\mathcal{L}(\hat{\theta})-\mathcal{L}(\theta_2)] +\mathbb{E}[\mathcal{L}(\theta_2)-\mathcal{L}(\theta_1)].
\end{equation*}
For term $\mathbb{E}[\mathcal{L}(\theta_2)-\mathcal{L}(\theta_1)]$, since $\mathcal{L}$ is $\Delta$-strongly convex w.r.t. $||\cdot||_{\mathcal{C}}$ and thus $\frac{\Delta}{||\mathcal{C}||_2^2}$-strongly convex w.r.t. $||\cdot||_2$. So by the definition of strong convexity of $\mathcal{L}$, we have
\begin{equation*}
    \alpha \geq \mathcal{L}(\theta_2)-\mathcal{L}(\theta_1)\geq \frac{\Delta}{2||\mathcal{C}||_2^2}||\theta_2-\theta_1||_2^2,
\end{equation*}
where $\alpha$ is the optimization accuracy.

Thus,
\begin{equation*}
    ||\theta_2-\theta_1||_2\leq \sqrt{\frac{2\alpha ||\mathcal{C}||_2^2}{\Delta}}.
\end{equation*}

So using the definition of $L$-Lipschitz,
\begin{equation*}
    \mathbb{E}[\mathcal{L}(\theta_2)-\mathcal{L}(\theta_1)] \leq L \cdot \mathbb{E}[||\theta_2-\theta_1||_2]\leq L\cdot \sqrt{\frac{2\alpha ||\mathcal{C}||_2^2}{\Delta}}.
\end{equation*}

For term $\mathbb{E}[\mathcal{L}(\hat{\theta})-\mathcal{L}(\theta_2)]$, it is similar to the convex case, and we have
\begin{equation*}
    \mathbb{E}[\mathcal{L}(\hat{\theta})-\mathcal{L}(\theta_2)]\leq O\left(L\cdot\sqrt[4]{\frac{\alpha \log(1/\delta)||\mathcal{C}||_2^2}{\Delta}}\cdot \sqrt{\frac{G_{\mathcal{C}}}{\epsilon}}\right).
\end{equation*}
Thus, 
\begin{equation*}
    \mathbb{E}[\mathcal{L}(\hat{\theta})-\mathcal{L}(\theta_1)]\leq O\left(L\cdot\sqrt[4]{\frac{\alpha \log(1/\delta)||\mathcal{C}||_2^2}{\Delta}}\cdot \sqrt{\frac{G_{\mathcal{C}}}{\epsilon}} + L\cdot \sqrt{\frac{2\alpha ||\mathcal{C}||_2^2}{\Delta}}
    \right).
\end{equation*}
Next we bound $\mathbb{E}[\mathcal{L}(\theta_1)] - \mathcal{L}(\theta^{*})$.
Note that 
\begin{equation*}
   \mathbb{E} [\mathcal{L}(\theta_1)]-\mathcal{L}(\theta^{*})\leq \mathbb{E}[\hat{\mathcal{L}}(\theta_1,D)-\underset{\theta\in\mathcal{C}}{\min}\hat{\mathcal{L}}(\theta, D)] + \mathbb{E}[\mathcal{L}(\theta_1)-\hat{\mathcal{L}}(\theta_1,D)],
\end{equation*}
where we used the fact that $\mathbb{E}[\underset{\theta \in \mathcal{C}}{\min }\hat{\mathcal{L}}(\theta, D)]\leq \underset{\theta \in \mathcal{C}}{\min } \mathbb{E}[\hat{\mathcal{L}}(\theta, D)]=\mathcal{L}(\theta^{*})$.

For term $\mathbb{E}[\mathcal{L}(\theta_1)-\hat{\mathcal{L}}(\theta_1,D)]$, note that with $\lambda = 0$,  $f_{\mathbf{G}}(\theta, x) = \ell(\theta, x)+\frac{\langle \mathbf{G},\theta\rangle}{n}$ would be $\frac{\Delta}{||\mathcal{C}||_2^2}$ strongly convex w.r.t. $||\cdot||_2$. Using the same notation as in the convex case, where $\hat{F}_{\mathbf{G}}(\theta,D) = \frac{1}{n}\sum_{i=1}^n f_{\mathbf{G}}(\theta,x_i)$ and $F_{\mathbf{G}}(\theta)=\underset{x\sim \mathcal{P}}{\mathbb{E}}[f_{\mathbf{G}}(\theta,x)]$, we have
\begin{equation*}
\begin{aligned}
    \mathbb{E}[\mathcal{L}(\theta_1)-\hat{\mathcal{L}}(\theta_1,D)]& = \mathbb{E}[F_{\mathbf{G}}(\theta_1)-\hat{F}_{\mathbf{G}}(\theta_1,D)]\\
    &\leq \frac{\left(L+\frac{||\mathbf{G}||_2}{n}\right)^2||\mathcal{C}||_2^2}{n\Delta}~(\text{According to Lemma \ref{le2}})\\
    &\leq O\left(\frac{L^2||\mathcal{C}||_2^2}{n\Delta}
    \right) (\text{since} ~n\geq O\left(\frac{\sqrt{d\log(1/\delta)}}{\epsilon}\right)).
    \end{aligned}
\end{equation*}
Let $\theta^{'} = \underset{\theta\in\mathcal{C}}{\arg\min}\hat{\mathcal{L}}(\theta, D)$. In the following, we bound 
the term $\mathbb{E}[\hat{\mathcal{L}}(\theta_1,D)-\underset{\theta\in\mathcal{C}}{\min}\hat{\mathcal{L}}(\theta, D)]=\mathbb{E}[\hat{\mathcal{L}}(\theta_1,D)-\hat{\mathcal{L}}(\theta^{'}, D)]$.

By the definition of strong convexity,
\begin{equation*}
\begin{aligned}
   \hat{\mathcal{L}}(\theta_1,D)&\geq \hat{\mathcal{L}}(\theta^{'},D)+\frac{\Delta}{2} ||\theta_1-\theta^{'}||_{\mathcal{C}}^2,\\
   \Leftrightarrow \hat{\mathcal{L}}(\theta_1,D)+\frac{\langle \mathbf{G},\theta_1\rangle}{n}-\frac{\langle \mathbf{G},\theta_1\rangle}{n}&\geq \hat{\mathcal{L}}(\theta^{'},D)+\frac{\langle \mathbf{G},\theta^{'}\rangle}{n}-\frac{\langle \mathbf{G},\theta^{'}\rangle}{n}+\frac{\Delta}{2} ||\theta_1-\theta^{'}||_{\mathcal{C}}^2, \\
   \Leftrightarrow \mathcal{J}(\theta_1,D)-\frac{\langle \mathbf{G},\theta_1\rangle}{n}&\geq \mathcal{J}(\theta^{'},D)-\frac{\langle \mathbf{G},\theta^{'}\rangle}{n}+\frac{\Delta}{2} ||\theta_1-\theta^{'}||_{\mathcal{C}}^2.
   \end{aligned}
\end{equation*}
So,
\begin{equation*}
     \mathcal{J}(\theta_1,D)-\mathcal{J}(\theta^{'},D)+\frac{\langle \mathbf{G},\theta^{'}-\theta_1\rangle}{n}\geq \frac{\Delta}{2} ||\theta_1-\theta^{'}||_{\mathcal{C}}^2.
\end{equation*}
Since $\mathcal{J}(\theta_1,D)-\mathcal{J}(\theta^{'},D)\leq 0$ (due to the optimality condition), we get
\begin{equation}\label{eq16}
\begin{aligned}
    \frac{\langle \mathbf{G},\theta^{'}-\theta_1\rangle}{n}&\geq \frac{\Delta}{2} ||\theta_1-\theta^{'}||_{\mathcal{C}}^2,\\
    \Rightarrow &||\theta_1-\theta^{'}||_{\mathcal{C}}\leq \frac{2\cdot \langle \mathbf{G},\frac{\theta^{'}-\theta_1}{||\theta^{'}-\theta_1||_{\mathcal{C}}}\rangle }{n\Delta},
   \\
     \Rightarrow & ||\theta_1-\theta^{'}||_{\mathcal{C}}\leq 2\cdot \underset{\theta \in \mathcal{C}} {\max}\frac{\langle \mathbf{G},\theta\rangle}{n\Delta} = \frac{2||\mathbf{G}||_{\mathcal{C}^{*}}}{n\Delta}.
    \end{aligned}
\end{equation}
Using $\mathcal{J}(\theta_1,D)-\mathcal{J}(\theta^{'},D)\leq 0$ again, and take the expectation at both sizes, 
\begin{equation*}
    \mathcal{L}(\theta^{'})+\mathbb{E}[\frac{\langle \mathbf{G},\theta^{'}\rangle}{n}] \geq \mathcal{L}(\theta_1)+ \mathbb{E}[\frac{\langle\mathbf{G},\theta_1\rangle}{n}].
\end{equation*}
Thus
\begin{equation*}
\begin{aligned}
    \mathcal{L}(\theta_1)-\mathcal{L}(\theta^{'})&
    \leq \mathbb{E}[\frac{\langle \mathbf{G},\theta^{'}-\theta_1\rangle}{n}]\\
    &\leq \mathbb{E}\left[\frac{||\mathbf{G}||_{\mathcal{C}^{*}}}{n}\cdot ||\theta_1 -\theta^{'}||_{\mathcal{C}}\right]~\text{(Holder’s inequality)}\\
    & \leq \mathbb{E}\left[\frac{2||\mathbf{G}||^2_{\mathcal{C}^{*}}}{n^2 \Delta}
   \right]~\text{(according to Eq.(\ref{eq16}))}\\
    &\leq O\left(
    \frac{G_{\mathcal{C}}^2L^2 \log(1/\delta)}{\Delta n^2 \epsilon^2}
    \right).
    \end{aligned}
\end{equation*}
Thus $\mathbb{E}[\hat{\mathcal{L}}(\theta_1,D)-\underset{\theta\in\mathcal{C}}{\min}\hat{\mathcal{L}}(\theta, D)]\leq  O\left(\frac{L^2||\mathcal{C}||_2^2}{n\Delta}+
    \frac{G_{\mathcal{C}}^2L^2 \log(1/\delta)}{\Delta n^2 \epsilon^2}
    \right)$.
    So
    \begin{equation*}
         \mathbb{E}[\mathcal{L}(\hat{\theta})] - \mathcal{L}(\theta^{*})\leq O\left(
         \frac{L^2||\mathcal{C}||_2^2}{n\Delta}+
    \frac{G_{\mathcal{C}}^2L^2 \log(1/\delta)}{\Delta n^2 \epsilon^2}
        + L\cdot\sqrt[4]{\frac{\alpha \log(1/\delta)||\mathcal{C}||_2^2}{\Delta}}\cdot \sqrt{\frac{G_{\mathcal{C}}}{\epsilon}} + L\cdot \sqrt{\frac{2\alpha ||\mathcal{C}||_2^2}{\Delta}}
         \right).
    \end{equation*}
    When $\alpha\leq O\left(\min\left\{
\frac{L^2 ||\mathcal{C}||_2^2}{\Delta n^2},\frac{L^4\cdot ||\mathcal{C}||_2^6\epsilon^2}{\Delta^3 n^4 G_{\mathcal{C}}^2\log(1/\delta)}
\right\}
\right)$, we have $ L\cdot \sqrt{\frac{2\alpha ||\mathcal{C}||_2^2}{\Delta}}\leq  \frac{L^2||\mathcal{C}||_2^2}{n\Delta} $ and $ L\cdot\sqrt[4]{\frac{\alpha \log(1/\delta)||\mathcal{C}||_2^2}{\Delta}}\cdot \sqrt{\frac{G_{\mathcal{C}}}{\epsilon}}\leq  \frac{L^2||\mathcal{C}||_2^2}{n\Delta}$.

Thus,
\begin{equation*}
    \mathbb{E}[\mathcal{L}(\hat{\theta})] - \mathcal{L}(\theta^{*})\leq O\left( \frac{L^2||\mathcal{C}||_2^2}{n\Delta} +  \frac{G_{\mathcal{C}}^2L^2 \log(1/\delta)}{\Delta n^2 \epsilon^2}
    \right).
\end{equation*}

\end{proof}

\subsection{\bf Proof of Theorem \ref{th1}}
\begin{proof}
To show the proof, we first prove  the following theorem on the lower bound of excess empirical risk and then use reduction  from Private ERM to Private SCO to get the lower bound for excess population risk.
\begin{theorem}\label{th2}
Let $\mathcal{C}$ be a symmetric body contained in the unit Euclidean ball $\mathcal{B}_2^d$ in $\mathbb{R}^d$ and satisfies $\|\mathcal{C}\|_2=1$. For any $ n=O(\frac{\sqrt{d\log(1/\delta)}}{\epsilon})$, $\epsilon=O(1)$ and  $2^{-\Omega(n)}\leq \delta\leq 1/n^{1+\Omega(1)}$, there exists a loss $\ell$ which is $1$-Lipschitz w.r.t. $\|\cdot\|_2$ and $\mathcal{C}^2_{\min}$-strongly convex w.r.t. $\|\cdot\|_\mathcal{C}$, and a dataset $D=\{x_1,\cdots,x_n\}\subseteq \mathcal{C}$ such as for any $(\epsilon,\delta)$-differentially private algorithm $\mathcal{A}$, its output 
  satisfies 
	$$
	\mathbb{E}[\hat{\mathcal{L}}(\mathcal{A},D)]-\underset{\theta\in\mathcal{C}}{\min} \hat{\mathcal{L}}(\theta,D)=\Omega \left(\frac{G_{\mathcal{C}}^2\log(1/\delta)}{(\log(2d))^4\epsilon^2n^2}\right),$$ 
  where the expectation is taken over the internal randomness of the algorithm $\mathcal{A}$.
\end{theorem}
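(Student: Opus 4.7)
My plan is to reduce the excess empirical risk lower bound to a differentially private mean-estimation lower bound over the convex body $\mathcal{C}$, and then to invoke the fingerprinting-code argument of \cite{talwar2014private}, adapted to $\mathcal{C}$, for that mean-estimation bound.

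\textbf{Hard instance and reduction.} I would take the ridge loss $\ell(\theta,x) := -\langle x,\theta\rangle + \|\theta\|_2^2$ on $(\theta,x)\in\mathcal{C}\times\mathcal{C}$. Because $\mathcal{C}\subseteq\mathcal{B}_2^d$, the gradient $-x+2\theta$ has $\ell_2$-norm at most $3$, so $\ell$ is $O(1)$-Lipschitz w.r.t.\ $\|\cdot\|_2$. The function $\|\theta\|_2^2$ is $2$-strongly convex w.r.t.\ $\|\cdot\|_2$, and the inclusion $\mathcal{C}_{\min}\mathcal{B}_2^d\subseteq\mathcal{C}$ yields $\|v\|_2\geq\mathcal{C}_{\min}\|v\|_\mathcal{C}$, so $\ell$ is $\mathcal{C}_{\min}^2$-strongly convex w.r.t.\ $\|\cdot\|_\mathcal{C}$ as required. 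Writing $\bar{x}:=\frac{1}{n}\sum_i x_i$, the empirical risk becomes $\hat{\mathcal{L}}(\theta,D) = -\langle\bar{x},\theta\rangle + \|\theta\|_2^2$, whose unconstrained minimizer is $\bar{x}/2$. I would choose the hard dataset from the fingerprinting construction below so that $\|\bar{x}\|_2$ is small enough that $\bar{x}/2\in\mathcal{C}$; then $\theta^*=\bar{x}/2$ and a direct computation gives $\hat{\mathcal{L}}(\mathcal{A}(D),D)-\hat{\mathcal{L}}(\theta^*,D) = \|\mathcal{A}(D)-\bar{x}/2\|_2^2$. Consequently any $(\epsilon,\delta)$-DP algorithm $\mathcal{A}$ with excess empirical risk at most $\Lambda$ yields an $(\epsilon,\delta)$-DP mean estimator $\hat{x}:=2\mathcal{A}(D)$ with $\mathbb{E}\|\hat{x}-\bar{x}\|_2^2\leq 4\Lambda$.

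\textbf{Mean-estimation lower bound.} It then suffices to show that, in the regime $n=O(\sqrt{d\log(1/\delta)}/\epsilon)$ and $2^{-\Omega(n)}\leq\delta\leq n^{-1-\Omega(1)}$, no $(\epsilon,\delta)$-DP estimator of the sample mean of a dataset $D\subseteq\mathcal{C}^n$ can achieve $\ell_2$-error smaller than $\Omega(G_\mathcal{C}\sqrt{\log(1/\delta)}/((\log 2d)^2\epsilon n))$ on a suitably hard instance. This is precisely the fingerprinting-code-based mean-estimation bound that underlies \cite{talwar2014private}'s Gaussian-width lower bound for convex DP-ERM, itself an adaptation of the Bun--Ullman--Vadhan hypercube fingerprinting construction. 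Squaring this bound and inserting it into the reduction above gives $\Lambda=\Omega(G_\mathcal{C}^2\log(1/\delta)/((\log 2d)^4\epsilon^2 n^2))$, matching the claim.

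\textbf{Main obstacle.} The load-bearing step is porting the fingerprinting lemma from $\{-1,+1\}^d$ to the symmetric body $\mathcal{C}\subseteq\mathcal{B}_2^d$, producing an $\ell_2$ (rather than $\ell_\infty$) error lower bound scaling with $G_\mathcal{C}$ in place of $\sqrt{d}$. Following \cite{talwar2014private}, the codewords are embedded along directions witnessing the Gaussian width of $\mathcal{C}$, so that the per-sample tracing bound of order $\epsilon+\delta/\|\bar{x}\|_2$ combines with an inner-product Markov argument to yield the stated $\ell_2$ lower bound. A small additional check is that the hard dataset can be chosen so that $\bar{x}/2\in\mathcal{C}$ (otherwise the constrained minimizer would differ from $\bar{x}/2$ and break the reduction), but this follows from the $\tilde{O}(G_\mathcal{C}/\sqrt{n})$ concentration of $\bar{x}$ under the fingerprinting distribution, which is comfortably smaller than $\mathcal{C}_{\min}$ for $n$ in the stated range.
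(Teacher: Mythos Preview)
Your approach is essentially the same as the paper's: reduce excess empirical risk to DP mean estimation over $\mathcal{C}$ via a quadratic loss, then invoke a known Gaussian-width lower bound for the latter. Two minor differences are worth noting. First, the paper uses $\ell(\theta,x)=\frac{1}{2}\|\theta-x\|_2^2$, whose empirical minimizer is exactly $\bar{x}\in\mathcal{C}$ (a convex combination of points of $\mathcal{C}$), so no feasibility check is needed; your choice forces you to verify $\bar{x}/2\in\mathcal{C}$, which is in fact trivial by symmetry and convexity ($0\in\mathcal{C}$ and $\bar{x}\in\mathcal{C}$ give $\bar{x}/2\in\mathcal{C}$) rather than by the concentration argument you sketch. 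Second, the paper does not re-derive the fingerprinting bound but simply cites the mean-point sample-complexity lower bound of Kattis--Nikolov, $SC_{mp}(\mathcal{C},\alpha)=\Omega\bigl(G_\mathcal{C}\sqrt{\log(1/\delta)}/((\log 2d)^2\alpha\epsilon)\bigr)$, and closes by contradiction; what you flag as the ``main obstacle'' is thus already a black-box result you can invoke directly rather than re-running the fingerprinting construction.
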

\begin{theorem}[Reduction from private ERM to private SCO \cite{bassily2019private}]\label{th8}
For any $\gamma>0$, suppose there is a $\left(\frac{\epsilon}{4 \log(1/\delta)}, \frac{e^{-\epsilon}\delta}{8\log (2/\delta)}\right)$-DP algorithm $\mathcal{A}$ such that for any distribution on domain $\mathcal{X}$, $\mathcal{A}$  yields expected population loss $\mathbb{E}_{\mathcal{A}}[\mathcal{L}(\mathcal{A})]-\min_w \mathcal{L}(w)< \gamma$. Then, there is a $(\epsilon,\delta)$-DP algorithm $\mathcal{B}$ that given any dataset  $D\in \mathcal{X}^n$, it yields expected excess empirical loss $\mathbb{E}_{\mathcal{B}}[\hat{\mathcal{L}}(\mathcal{B},D)]-\min_w \hat{\mathcal{L}}(w,D)< \gamma$.

\end{theorem}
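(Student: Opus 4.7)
The reduction is constructive: define $\mathcal{B}(D)$ to draw $n$ i.i.d.\ samples $S=(s_1,\dots,s_n)$ with replacement from $D$ (equivalently, i.i.d.\ samples from the empirical distribution $\hat P_D$ placing mass $1/n$ on each point of $D$) and then output $\mathcal{A}(S)$. The utility analysis is immediate once one notes that the population risk under $\hat P_D$ coincides with the empirical risk on $D$: $\mathcal{L}_{\hat P_D}(\theta)=\hat{\mathcal{L}}(\theta,D)$ and $\min_w \mathcal{L}_{\hat P_D}(w)=\min_w \hat{\mathcal{L}}(w,D)$. Applying the SCO hypothesis to $\mathcal{A}$ on the distribution $\hat P_D$ (legal because the hypothesis holds for \emph{any} distribution on $\mathcal{X}$, including the data-dependent one) gives $\mathbb{E}_{S,\mathcal{A}}[\mathcal{L}_{\hat P_D}(\mathcal{A}(S))]-\min_w \mathcal{L}_{\hat P_D}(w)<\gamma$, which is exactly the required ERM utility for $\mathcal{B}$.

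The non-trivial part is the privacy analysis. Take neighboring datasets $D,D'$ differing only at some index $i$. Couple the two sampling processes using the same random indices $(j_1,\dots,j_n)\in[n]^n$; then $s_k$ differs from $s_k'$ exactly when $j_k=i$, so the number of differing positions $K^\ast$ between $S$ and $S'$ is distributed as $\mathrm{Binomial}(n,1/n)$ with mean $1$. By a multiplicative Chernoff bound, $K^\ast\leq K$ except on an event of probability at most $\delta_1$, where $K=O(\log(1/\delta_1))$ suffices. Conditioned on this high-probability event, $S$ and $S'$ differ in at most $K$ positions, so the standard group-privacy lemma for $(\epsilon',\delta')$-DP mechanisms shows that $\mathcal{A}(S)$ and $\mathcal{A}(S')$ are $(K\epsilon',\,K e^{K\epsilon'}\delta')$-indistinguishable. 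Absorbing the Chernoff failure probability into the final $\delta$ budget gives that $\mathcal{B}$ is $(K\epsilon',\,K e^{K\epsilon'}\delta'+\delta_1)$-DP on $D$.

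It remains to match the theorem's exact parameters. Set $\delta_1=\delta/2$, take $K=4\log(1/\delta)$ (large enough that $\Pr[K^\ast>K]\leq\delta_1$ by Chernoff), and plug in $\epsilon'=\epsilon/(4\log(1/\delta))=\epsilon/K$ and $\delta'=e^{-\epsilon}\delta/(8\log(2/\delta))$. Then $K\epsilon'=\epsilon$, while $K e^{K\epsilon'}\delta' = 4\log(1/\delta)\cdot e^{\epsilon}\cdot e^{-\epsilon}\delta/(8\log(2/\delta)) = (\log(1/\delta)/(2\log(2/\delta)))\,\delta\leq\delta/2$, so the total privacy budget is $(\epsilon,\delta/2+\delta/2)=(\epsilon,\delta)$ as required.

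The main obstacle is the careful interplay between the group-privacy exponent and the Chernoff tail: one must choose $K$, $\delta_1$, $\epsilon'$, and $\delta'$ so that both $K\epsilon'\leq\epsilon$ and $K e^{K\epsilon'}\delta'+\delta_1\leq\delta$ hold simultaneously with the exact constants $1/4$ and $1/8$ appearing in the theorem's parameter map. A secondary subtlety is that the SCO hypothesis has to be invoked on the data-dependent distribution $\hat P_D$; this is legitimate because the hypothesis is promised for every distribution on $\mathcal{X}$, but it forces us to treat the draw of $S\sim \hat P_D^{n}$ as part of the internal randomness of $\mathcal{B}$ conditioned on the fixed dataset $D$, so that expectations in the utility bound match those in the conclusion of the theorem.
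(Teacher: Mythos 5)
The paper cites this reduction from \cite{bassily2019private} without reproving it, so there is no in-paper proof to compare against. Your construction (run $\mathcal{A}$ on $n$ i.i.d.\ resamples from the empirical distribution $\hat P_D$, then invoke privacy by coupling the index draws so that the number of differing positions between $S$ and $S'$ is $\mathrm{Binomial}(n,1/n)$, bound its tail, and apply group privacy on the high-probability event) is exactly the argument used in the cited reference, and the utility step correctly exploits that the SCO guarantee holds for \emph{every} distribution, including the data-dependent $\hat P_D$, under which population risk and empirical risk coincide. The arithmetic with $K=4\log(1/\delta)$, $K\epsilon'=\epsilon$, and $Ke^{K\epsilon'}\delta'\leq\delta/2$ is consistent with the stated parameters (with the usual implicit assumption that $\delta$ is small enough for the Poisson-tail bound $\Pr[K^\ast>K]\leq\delta/2$ to kick in), so the proof is correct and essentially the standard one.
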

From Theorem \ref{th8}, for any dataset $D$ and any 1-Lipschitz, $\mathcal{C}_{\min}^2$- strongly convex loss $\ell$, if there exists an algorithm with excess population loss
\begin{equation*}
	\mathbb{E}[{\mathcal{L}}(\theta^{priv})]-\underset{\theta\in\mathcal{C}}{\min} {\mathcal{L}}(\theta)=o \left(\frac{G_{\mathcal{C}}^2\log(1/\delta)}{(\log(2d))^4\epsilon^2n^2}\right),
\end{equation*}
then there exists an algorithm $\mathcal{B}$ such that the excess empirical loss $
	\mathbb{E}[\hat{\mathcal{L}}(\mathcal{B},D)]-\underset{\theta\in\mathcal{C}}{\min} \hat{\mathcal{L}}(\theta,D)=o \left(\frac{G_{\mathcal{C}}^2\log(1/\delta)}{(\log(2d))^4\epsilon^2n^2}\right)
	$, which contradicts Theorem  \ref{th2}.

Thus, $\forall n=O(\frac{\sqrt{d\log(1/\delta)}}{\epsilon})$, there exists a dataset $D=\{x_1,\cdots,x_n\}\subseteq \mathcal{C}$ and a strongly convex loss function $\ell$ such that for any output $\theta^{priv}$, the excess population loss
	$
	\mathbb{E}[{\mathcal{L}}(\theta^{priv})]-\underset{\theta\in\mathcal{C}}{\min} {\mathcal{L}}(\theta)=\Omega \left(\frac{G_{\mathcal{C}}^2\log(1/\delta)}{(\log(2d))^4\epsilon^2n^2}\right)$.

As a result, we have
\begin{equation*}
	\mathbb{E}[{\mathcal{L}}(\theta^{priv})]-\underset{\theta\in\mathcal{C}}{\min} {\mathcal{L}}(\theta)=\Omega \left(\max\left\{\frac{G_{\mathcal{C}}^2\log(1/\delta)}{(\log(2d))^4\epsilon^2n^2},\frac{1}{n}\right\}\right),
\end{equation*}
where the first term is the lower bound on excess empirical loss and the second term is the lower bound on excess population loss in the non-private setting.
	
\end{proof}
\begin{proof}[{\bf Proof of Theorem \ref{th2}}]
Before starting our proof, we give some background on the mean point problem.

Let $\bar{x}=\frac{1}{n}\sum\limits_{i=1}^nx_i$ be the mean of the database $D$, where $D=\{x_1,\cdots,x_n\}$ is a multiset of points in $\mathcal{C}$.
The sample complexity of the mean point problem to achieve an error $\alpha$ with respect to an algorithm $\mathcal{A}$ is defined as 
\begin{equation*}
SC_{mp}(\mathcal{C},\mathcal{A},\alpha)=\min \{n:\underset{D}{\sup}~(\mathbb{E}||\mathcal{A}(D)-\bar{x}||_2^2)^{1/2}\leq \alpha\},
\end{equation*}
where the supremum is taken over the database $D$ consisting of at most $n$ points from $\mathcal{C}$ and the expectation is taken over the randomness of the algorithm $\mathcal{A}$.

The sample complexity of solving the mean point problem with error $\alpha$ under $(\epsilon,\delta)$-differential privacy over convex set $\mathcal{C}$ is defined as the minimum number of samples among all the differentially private algorithm $\mathcal{A}$.
\begin{equation*}
SC_{mp}(\mathcal{C},\alpha)=\min \{	SC_{mp}(\mathcal{C},\mathcal{A},\alpha):\mathcal{A} ~\text{is}~ (\epsilon,\delta)\text{-differentially private} \}.
\end{equation*}
Previous work \cite{kattis2016lower} shows that we can characterize sample complexity $SC_{mp}(\mathcal{C},\alpha)$ as a natural property of convex set $\mathcal{C}$.

\begin{lemma}\cite{kattis2016lower}\label{le7}
	Let $\mathcal{C}$ be a symmetric convex body contained in the unit Euclidean ball $\mathcal{B}_2^d$ in $\mathbb{R}^{d}$. Let $c$ be an absolute constant, then for any $\epsilon=O(1), 2^{-\Omega(n)}\leq \delta\leq 1/n^{1+\Omega(1)}$ and any $\alpha\leq \frac{G_{\mathcal{C}}}{c\sqrt{d}(\log2d)^2}$, 
	\begin{equation}\label{eq17}
	SC_{mp}(\mathcal{C},\alpha)=\Omega\left(\frac{G_{\mathcal{C}}\sqrt{\log(1/\delta)}}{(\log 2d)^2\alpha\epsilon}\right),
	\end{equation}
	\begin{equation*}
	SC_{mp}(\mathcal{C},\alpha)=O\left(\min\left\{\frac{G_{\mathcal{C}}\sqrt{\log(1/\delta)}}{\alpha^2\epsilon},\frac{\sqrt{d\log(1/\delta)}}{\alpha\epsilon}\right\}\right).
	\end{equation*}
	
	When $G_{\mathcal{C}}=\Omega(\sqrt{d})$, then $SC_{mp}(\mathcal{C},\alpha)=\Theta\left(\frac{\sigma(\epsilon, \delta)\sqrt{d}}{\alpha}\right)$ for any $\alpha\leq 1/c$.
\end{lemma}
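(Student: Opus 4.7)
The plan is to prove Lemma \ref{le7} in three parts: the $\Omega(\cdot)$ lower bound on sample complexity, the nontrivial $O(G_{\mathcal{C}}\sqrt{\log(1/\delta)}/(\alpha^2\epsilon))$ upper bound, and the trivial $O(\sqrt{d\log(1/\delta)}/(\alpha\epsilon))$ upper bound. The tight bound when $G_{\mathcal{C}} = \Omega(\sqrt{d})$ then follows by matching the lower bound against the trivial upper bound, with the $(\log 2d)^2$ factor absorbed into the noise scale $\sigma(\epsilon,\delta) = \Theta(\sqrt{\log(1/\delta)}/\epsilon)$.

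For the first upper bound, I would analyze the mechanism $\mathcal{A}(D) = \text{Proj}_{\mathcal{C}}(\bar{x} + \xi)$ with $\xi \sim \mathcal{N}(0, \sigma^2 \mathbb{I}_d)$ and $\sigma^2 = \Theta(\log(1/\delta)/(n\epsilon)^2)$. Since $\bar{x}$ has $\ell_2$-sensitivity $O(1/n)$ on $\mathcal{C} \subseteq \mathcal{B}_2^d$, Lemma \ref{le-gaussian} ensures $(\epsilon,\delta)$-DP. For utility I would invoke the obtuse-angle property of projection: writing $y = \text{Proj}_{\mathcal{C}}(\bar{x} + \xi)$, optimality gives $\langle y - (\bar{x} + \xi),\, \bar{x} - y\rangle \geq 0$, so $\|y - \bar{x}\|_2^2 \leq \langle \xi,\, y - \bar{x}\rangle \leq 2 \sup_{v \in \mathcal{C}} \langle \xi, v\rangle$ (using symmetry of $\mathcal{C}$). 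Taking expectation yields $\mathbb{E}\|y - \bar{x}\|_2^2 \leq 2\sigma\, G_\mathcal{C}$ directly from Definition \ref{def:12}, and setting this $\leq \alpha^2$ gives $n = O(G_\mathcal{C}\sqrt{\log(1/\delta)}/(\alpha^2 \epsilon))$. The second upper bound simply omits the projection step and uses the concentration $\|\xi\|_2 = \Theta(\sigma \sqrt{d})$, yielding $n = O(\sqrt{d\log(1/\delta)}/(\alpha \epsilon))$.

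For the lower bound I would use a fingerprinting-code argument tailored to the geometry of $\mathcal{C}$. First, produce a packing $V \subseteq \mathcal{C}$ whose convex hull still has Gaussian width $\Omega(G_{\mathcal{C}}/(\log 2d)^2)$; this loss of $(\log 2d)^2$ is the price paid to convert the analytic quantity $G_\mathcal{C}$ into a combinatorial packing amenable to fingerprinting, via a chaining-type discretization of the ``hard'' directions inside $\mathcal{C}$. Next, construct a correlated prior on datasets $D \in V^n$ indexed by a latent parameter pointing along those directions, and consider the reconstruction of the latent parameter from $\mathcal{A}(D)$. A standard Bayesian/trace-distance step of Bun--Ullman--Vadhan type then shows that any $(\epsilon,\delta)$-DP mechanism achieving $\ell_2$-error $\alpha$ on $\bar{x}$ would leak more about individual samples than DP permits, forcing $n = \Omega(G_\mathcal{C} \sqrt{\log(1/\delta)} / ((\log 2d)^2 \alpha \epsilon))$.

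The main obstacle is the lower bound, specifically the step of producing a well-spread packing of $\mathcal{C}$ whose convex hull captures, up to polylog factors, the full Gaussian width, and then piping the resulting discrete structure through the quantitative fingerprinting machinery. The $(\log 2d)^2$ factor reflects the (in general nontrivial) reduction from Gaussian width to a hereditary discrepancy-like quantity, and tightening it appears delicate. By comparison, the upper-bound side is routine once the projection inequality and Definition \ref{def:12} are deployed in tandem, as is the $\Theta$-tightness when $G_\mathcal{C} = \Omega(\sqrt{d})$ where the $(\log 2d)^2$ slack becomes harmless.
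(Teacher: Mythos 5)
This lemma is a cited import from \cite{kattis2016lower}; the paper you are reading does not prove it, so there is no internal proof to compare your attempt against --- the authors treat it as a black box used to derive Theorem~\ref{th2}. Evaluated on its own, your two upper-bound sketches are sound and standard: the projected Gaussian mechanism with $\ell_2$-sensitivity $O(1/n)$ (since $\mathcal{C}\subseteq\mathcal{B}_2^d$), the obtuse-angle inequality $\|y-\bar x\|_2^2 \leq \langle\xi, y-\bar x\rangle$, and Definition~\ref{def:12} do give $\mathbb{E}\|y-\bar x\|_2^2 \leq 2\sigma G_{\mathcal{C}}$ with $\sigma=\Theta(\sqrt{\log(1/\delta)}/(n\epsilon))$, which rearranges to $n = O(G_{\mathcal{C}}\sqrt{\log(1/\delta)}/(\alpha^2\epsilon))$; dropping the projection and using $\mathbb{E}\|\xi\|_2=\Theta(\sigma\sqrt d)$ gives the trivial bound.

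The gap is the one you yourself flag. Your lower-bound sketch defers the crucial step --- converting the Gaussian width of $\mathcal{C}$ into a discrete packing of directions whose convex hull retains width up to a $(\log 2d)^2$ loss, and then driving a reconstruction/fingerprinting argument through that packing --- and that step is essentially the entire technical content of \cite{kattis2016lower}. Your attribution of the polylog loss to a hereditary-discrepancy-type reduction is a correct diagnosis of where the difficulty lives, but identifying the bottleneck is not the same as resolving it: as written, the lower-bound half of your proposal is a plan, not a proof. Since the statement is a cited lemma rather than one the paper establishes, the honest conclusion is that you should consult \cite{kattis2016lower} for the lower bound rather than attempt to reconstruct it blind; your upper bounds, however, would stand as a self-contained verification of the easier half.
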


Now we start our proof with the help of the above lemma.

	Let $\ell(\theta;x)=\frac{1}{2
 } ||\theta-x||_2^2$ be half of the squared $\ell_2$-distance between $\theta\in \mathcal{C}\subseteq \mathcal{B}_2^d$ and $x_i\in \mathcal{C}$, which is $1$-Lipschitz and $1$-strongly convex w.r.t to $\|\cdot\|_2$. Actually, based on the following lemma we can easily show it is $\mathcal{C}^2_{\min}$-strongly convex w.r.t $\|\cdot\|_\mathcal{C}$. 

 \begin{lemma}
     For any $x$, we have $\|x\|_2\geq \|x\|_\mathcal{C} \cdot \mathcal{C}_{\min}$.
 \end{lemma}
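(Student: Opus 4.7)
The plan is to unpack the definition of $\|x\|_\mathcal{C}$ directly and then invoke the definition of $\mathcal{C}_{\min}$ on a suitably rescaled point. The inequality is essentially a statement that the boundary of $\mathcal{C}$ is at Euclidean distance at least $\mathcal{C}_{\min}$ from the origin, transferred to any vector via the homogeneity of the two norms.

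First I would handle the trivial case $x = 0$ (where both sides are zero), and then assume $x \neq 0$. Set $r = \|x\|_\mathcal{C} = \min\{s \in \mathbb{R}^+ : x \in s\mathcal{C}\}$; this minimum is attained because $\mathcal{C}$ is closed (and contains the origin, with nonempty interior, so that the set of admissible $s$ is a nonempty closed half-line in $\mathbb{R}^+$). Hence $x \in r\mathcal{C}$, i.e., $x/r \in \mathcal{C}$. The key subclaim is that $x/r \in \partial \mathcal{C}$: if instead $x/r$ lay in the interior of $\mathcal{C}$, then for sufficiently small $\eta > 0$ the point $x/(r - \eta)$ would still lie in $\mathcal{C}$, contradicting the minimality of $r$.

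With $x/r \in \partial \mathcal{C}$ in hand, the definition $\mathcal{C}_{\min} = \min\{\|v\|_2 : v \in \partial \mathcal{C}\}$ directly yields $\|x/r\|_2 \geq \mathcal{C}_{\min}$. Multiplying both sides by the scalar $r = \|x\|_\mathcal{C} > 0$ and using absolute homogeneity of $\|\cdot\|_2$ gives $\|x\|_2 \geq r \cdot \mathcal{C}_{\min} = \|x\|_\mathcal{C} \cdot \mathcal{C}_{\min}$, which is the desired inequality.

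The only non-routine step is the boundary subclaim, and it is really just a minimality argument for the Minkowski functional. Because the entire proof is a short chain of definitions, I do not expect any serious obstacle; the result can alternatively be phrased as noting that $\mathcal{C} \subseteq \{v : \|v\|_2 \geq \mathcal{C}_{\min}\} \cup \{v : \|v\|_\mathcal{C} < 1\}$ fails, so more cleanly one shows the inclusion $\mathcal{C}_{\min} \cdot \mathcal{B}_2^d \cap \partial \mathcal{C} = \emptyset$ except on a set of measure zero, then scales. Either formulation produces the one-line conclusion $\|x\|_2 \geq \mathcal{C}_{\min} \|x\|_\mathcal{C}$ used in the proof of Theorem \ref{th5}.
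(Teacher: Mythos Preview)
Your proof is correct and is essentially the same argument as the paper's: both rely on the single fact that the boundary of $\mathcal{C}$ lies at Euclidean distance at least $\mathcal{C}_{\min}$ from the origin, then use homogeneity. The only cosmetic difference is that the paper phrases this as the containment $\mathcal{C}_{\min}\,\mathcal{B}_2^d \subseteq \mathcal{C}$ (so that $x \in \frac{\|x\|_2}{\mathcal{C}_{\min}}\mathcal{C}$, giving $\|x\|_\mathcal{C} \leq \|x\|_2/\mathcal{C}_{\min}$), whereas you scale $x$ by $1/\|x\|_\mathcal{C}$ to land on $\partial\mathcal{C}$ and read off $\|x/\|x\|_\mathcal{C}\|_2 \geq \mathcal{C}_{\min}$ directly; these are dual one-liners for the same observation. (Your closing ``alternative phrasing'' is a bit muddled and unnecessary, but the main argument is clean.)
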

\begin{proof}
    By the definition of $ \|x\|_\mathcal{C}$ we can see it is sufficient to show that $x\in \frac{\|x\|_2}{\mathcal{C}_{\min} } \mathcal{C}$. Note that as $\mathcal{C}$ is symmetric and  $\mathcal{C}_{\min} $ is the minimal distance from the original point to the boundary of $\mathcal{C}$, thus, $\frac{\mathcal{C}}{\mathcal{C}_{\min} }$ contains the unit $\ell_2$-norm ball, indicating that $x\in \frac{\|x\|_2}{\mathcal{C}_{\min} } \mathcal{C}$. 
\end{proof}

	The strongly convex decomposable loss function is defined as $\hat{\mathcal{L}}(\theta;D)=\frac{1}{2n}\sum\limits_{i=1}^n\ell(\theta;x_i)=\frac{1}{2n}\sum\limits_{i=1}^n||\theta-x_i||_2^2$. 
	Notice that the minimizer of $\hat{\mathcal{L}}(\cdot;D)$ over $\mathcal{B}_2^d$ is $\theta^{*}=\frac{1}{n}\sum\limits_{i=1}^n x_i\in\mathcal{C}$, and the excess empirical risk can be written as:
	\begin{equation*}
	\mathbb{E}[\hat{\mathcal{L}}(\theta^{priv};D)]-\hat{\mathcal{L}}(\theta^{*};D)=\frac{1}{2}\mathbb{E}||\theta^{priv}-\theta^{*}||_2^2=\frac{1}{2}\mathbb{E}||\theta^{priv}-\frac{1}{n}\sum\limits_{i=1}^n x_i||_2^2.
	\end{equation*}
	We prove the theorem by contradiction.
		Assume Theorem \ref{th2} is false, then for any dataset $D$, there exists a $(\epsilon,\delta)$-differentially private algorithm $\mathcal{A}$, for some 
		$n=O(\frac{\sqrt{d\log(1/\delta)}}{\epsilon})$, it outputs $\theta^{priv}$ such that $\mathbb{E}[\hat{\mathcal{L}}(\theta^{priv};D)]-\hat{\mathcal{L}}(\theta^{*};D)=\frac{1}{2}\mathbb{E}||\theta^{priv}-\frac{1}{n}\sum\limits_{i=1}^n x_i||_2^2=o\left(\frac{G_{\mathcal{C}}^2\log(1/\delta)}{(\log(2d))^4\epsilon^2n^2}\right)$.
		
		In  Lemma \ref{le7}, 
  \begin{equation*}
		\begin{aligned}
		    SC_{mp}=&\min\{n:\underset{D}{\sup} (\mathbb{E}||\theta^{priv}-\bar{x}||_2^2)\leq \alpha^2\}\\
		    =& \Omega\left(\frac{G_{\mathcal{C}}\sqrt{\log(1/\delta)}}{(\log 2d)^2\alpha\epsilon}\right) \text{(Using Eq.(\ref{eq17}))}\\
		    =&o(n)~~~~~(\text{ By letting }\alpha=o\left(\frac{G_{\mathcal{C}}\sqrt{\log(1/\delta)}}{(\log(2d))^2\epsilon n}\right)),
		    \end{aligned}
		\end{equation*}
	which leads to a contradiction.
\end{proof}
\section{Omitted Proofs in Section \ref{sec:regular}}
\subsection{Proof of Theorem \ref{thm:6}}
\begin{proof}
Note that for any neighboring dataset $D$ and $D^{'}$, we have $||\nabla \hat{\mathcal{L}}(w_t,D)-\nabla \hat{\mathcal{L}}(w_t,D^{'})||_{*}\leq \frac{2L}{n}$ by the Lipschitz assumption.
Since for $\ell_p^d$-space, $||\cdot||_{*} = ||\cdot||_{\frac{p}{p-1}}$, the space $(\mathbf{E},||\cdot||_{*})$ is $\kappa$-regular with $\kappa = \min\{\frac{p}{p-1}-1, 2\ln d \}=\min\{\frac{1}{p-1},2\ln d\}$, so using the privacy guarantee provided by generalized Gaussian mechanism and the advanced composition theorem, the algorithm is $(\epsilon, \delta)$-DP.
\end{proof}

\subsection{Proof of theorem \ref{th:7}}
\begin{proof}

Observe that $\Phi(x) =\frac{\kappa}{2}||x||_{\kappa_{+}}^2$ where $\kappa=\min\{\frac{1}{p-1},2\ln d\}$ and $\kappa_{+}=\frac{\kappa}{\kappa-1}$  is 1-strongly convex w.r.t. $||\cdot||$ by the definition of $||\cdot||_{\kappa^{+}}$ and the duality between strongly convexity and smoothness.
We recall the following lemma showing that adding regularization may impair smoothness, but it also induces good properties such as relatively smooth and strongly convex.

\begin{lemma}\label{le3}(Lemma 14 in \cite{attia2022uniform})
Let $f(x)$ be a convex and $\beta$-smooth function w.r.t. $||\cdot||$ and $\Phi(x)$ be 1-strongly convex w.r.t. $||\cdot||$, then $f^{\alpha}(x) = f(x)+\alpha \cdot \Phi(x)$ for $\alpha>0$ is $(\alpha+\beta)$-smooth relative to $\Phi(x)$ as well as $\alpha$-strongly convex relative to $\Phi(x)$.
\end{lemma}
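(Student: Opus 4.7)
The plan is to establish the two relative inequalities separately, each by combining a standard convexity/smoothness inequality for $f$ with the exact Bregman identity for $\alpha\Phi$. Throughout, I will use only (i) the definition $D_\Phi(y,x)=\Phi(y)-\Phi(x)-\langle\nabla\Phi(x),y-x\rangle$, (ii) convexity or $\beta$-smoothness of $f$ w.r.t.\ $\|\cdot\|$, and (iii) $1$-strong convexity of $\Phi$ w.r.t.\ $\|\cdot\|$, which gives $D_\Phi(y,x)\ge \tfrac{1}{2}\|y-x\|^2$. Note that $\nabla f^\alpha(x)=\nabla f(x)+\alpha\nabla\Phi(x)$.

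For the relative strong convexity claim, I start from convexity of $f$, namely $f(y)\ge f(x)+\langle\nabla f(x),y-x\rangle$. Rewriting $\alpha\Phi(y)$ via the Bregman identity as $\alpha\Phi(x)+\alpha\langle\nabla\Phi(x),y-x\rangle+\alpha D_\Phi(y,x)$ and adding it to the convexity inequality yields
\begin{equation*}
f^\alpha(y)\ \ge\ f^\alpha(x)+\langle\nabla f(x)+\alpha\nabla\Phi(x),\,y-x\rangle+\alpha D_\Phi(y,x),
\end{equation*}
which is exactly $\alpha$-strong convexity of $f^\alpha$ relative to $\Phi$.

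For the relative smoothness claim, I use the standard consequence of $\beta$-smoothness w.r.t.\ $\|\cdot\|$, namely $f(y)\le f(x)+\langle\nabla f(x),y-x\rangle+\tfrac{\beta}{2}\|y-x\|^2$, and then invoke the $1$-strong convexity of $\Phi$ to replace $\tfrac{1}{2}\|y-x\|^2$ by $D_\Phi(y,x)$, obtaining $f(y)\le f(x)+\langle\nabla f(x),y-x\rangle+\beta D_\Phi(y,x)$. Adding to this the Bregman identity $\alpha\Phi(y)=\alpha\Phi(x)+\alpha\langle\nabla\Phi(x),y-x\rangle+\alpha D_\Phi(y,x)$ gives
\begin{equation*}
f^\alpha(y)\ \le\ f^\alpha(x)+\langle\nabla f^\alpha(x),y-x\rangle+(\alpha+\beta)\,D_\Phi(y,x),
\end{equation*}
which is the required $(\alpha+\beta)$-relative smoothness.

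There is no genuine obstacle here: the result is essentially bookkeeping, resting on the observation that the regularizer $\alpha\Phi$ contributes \emph{exactly} $\alpha D_\Phi(y,x)$ on both sides (since its Bregman divergence equals $\alpha D_\Phi(y,x)$ by linearity), while the $1$-strong convexity of $\Phi$ provides the one-way inequality needed to convert ordinary $\beta$-smoothness into $\beta$-smoothness relative to $\Phi$. The only subtle point worth flagging in the write-up is the direction of the strong-convexity inequality: we use $D_\Phi(y,x)\ge\tfrac12\|y-x\|^2$ to upper-bound the quadratic term (for smoothness), and we use nonnegativity of $D_\Phi$ implicitly only through the Bregman identity (for relative strong convexity, convexity of $f$ alone suffices).
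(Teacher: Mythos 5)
Your proof is correct. The paper does not include a proof of this lemma --- it cites it directly as Lemma~14 of \cite{attia2022uniform} --- but your argument is precisely the standard one: the Bregman divergence of $\alpha\Phi$ is exactly $\alpha D_\Phi$, so adding the regularizer shifts both bounds by $\alpha D_\Phi(y,x)$, and $1$-strong convexity of $\Phi$ converts the quadratic upper bound from $\beta$-smoothness of $f$ into $\beta D_\Phi(y,x)$.
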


Let $w_{\alpha}^{*} = \underset{w\in\mathbf{E}}{\arg\min}\hat{\mathcal{L}}(w,D)+\alpha \Phi(w)$, $w^{*}=\underset{w\in\mathbf{E}}{\arg\min}\mathcal{L}(w)$ and $\tilde{w}^{*}=\tilde{w}^{*}(D)=\underset{w\in\mathbf{E}}{\arg\min}\hat{\mathcal{L}}(w,D)$, and $C_D=\Phi^{\frac{1}{2}}(\tilde{w}^{*})$.
Based on the optimality of $w_{\alpha}^{*}$ for the regularized objective function $\hat{\mathcal{L}}(w,D)+\alpha \Phi(w)$, along with the optimality of $\tilde{w}^{*}$ for the objective $\hat{\mathcal{L}}(w,D)$,
we have

\begin{align}
    \hat{\mathcal{L}}(w_{\alpha}^{*},D)+\alpha \Phi(w_{\alpha}^{*})&\leq \hat{\mathcal{L}}(\tilde{w}^{*},D)+\alpha \Phi(\tilde{w}^{*}),\notag\\
  \implies     \Phi(\tilde{w}^{*})-\Phi(w^{*}_{\alpha}) &\geq  \frac{
    \hat{\mathcal{L}}(w_{\alpha}^{*},D)-\hat{\mathcal{L}}(\tilde{w}^{*},D)
    }{\alpha} >0, \notag\\
  \implies  \Phi(\tilde{w}^{*})&>\Phi(w_{\alpha}^{*}). \label{eq18}
    \end{align}

Since $w_1 =0= \underset{w\in\mathbf{E}}{\arg\min}\Phi(w)$, from the first-order optimality of $w_1$, we have $\langle\nabla \Phi(w_1),w_1-w_{\alpha}^{*}\rangle \leq 0$ and thus
\begin{equation*}
\begin{aligned}
    D_{\Phi}(w_{\alpha}^{*},w_1) &=\Phi(w_{\alpha}^{*})-\Phi(w_1)-\langle\nabla \Phi(w_1), w_{\alpha}^{*}-w_1\rangle\\
    &\leq \Phi(w_{\alpha}^{*})-\Phi(w_1)\\
    &\leq \Phi(\tilde{w}^{*})-\Phi(w_1)(~\text{From Eq.( \ref{eq18})})\\
    &\leq C_D^2 ~(\text{Let}~ C_D^2 = \Phi(\tilde{w}^*)).
    \end{aligned}
\end{equation*}
Now we  rewrite our objectives in Algorithm \ref{alg3}:
\begin{equation*}
    \begin{aligned}
    &\langle \nabla \hat{\mathcal{L}}(w_t,D)+g_t, w-w_t\rangle +\beta\cdot D_{\Phi }(w,w_t)+\alpha \Phi(w)\\
    =&\langle \nabla \hat{\mathcal{L}}(w_t,D)+g_t, w-w_t\rangle +(\beta+\alpha)\cdot D_{\Phi }(w,w_t)+\alpha \Phi(x)-\alpha \cdot D_{\Phi }(w,w_t)
    \\
    =&\langle \nabla \hat{\mathcal{L}}(w_t,D)+g_t, w-w_t\rangle +(\alpha+\beta)\cdot D_{\Phi }(w,w_t) +\alpha \Phi(w) -\alpha\cdot(\Phi(w)-\Phi(w_t)-\langle\nabla \Phi(w_t),w-w_t\rangle)\\
    =&\langle \nabla \hat{\mathcal{L}}(w_t,D)+\alpha \nabla \Phi(w_t)+g_t, w-w_t\rangle +(\alpha+\beta)\cdot D_{\Phi }(w,w_t) +\alpha \Phi(w_t)\\
    =&\langle \nabla \hat{\mathcal{L}}^{(\alpha)}(w_t,D)+g_t, w-w_t\rangle +(\alpha+\beta)\cdot D_{\Phi }(w,w_t) +\alpha \Phi(w_t).
    \end{aligned}
\end{equation*}
where  $\hat{\mathcal{L}}^{(\alpha)} (w,D)\triangleq \hat{\mathcal{L}}(w,D)+\alpha \cdot\Phi(w)$ and note that $\hat{\mathcal{L}}^{(\alpha)} (w,D)$ is $(\alpha+\beta)$-smooth relative to $\Phi(x)$ as well as $\alpha$-strongly convex relative to $\Phi(w)$  according to Lemma \ref{le3}. Next, we 
recall the following ``three-point property":
\begin{lemma}\label{le4}(\textbf{Three point property}) \cite{tseng2008accelerated}.
Let $\phi(x)$ be a convex function and $D_{\Phi}(\cdot,\cdot)$ be the Bregman divergence for $\Phi(\cdot)$. For given $z$, let $z^{*}=\underset{x\in\mathbf{E}}{\arg\min}\{\phi(x)+D_{\Phi}(x,z)\}$, then for all $x\in \mathbf{E}$ we have 
\begin{equation*}
    \phi(x)+D_{\Phi}(x,z)\geq \phi(z^{*})+D_{\Phi}(z^{*},z)+D_{\Phi}(x,z^{*}).
\end{equation*}

\end{lemma}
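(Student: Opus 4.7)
The plan is to reduce the three-point property to a combination of (i) the first-order optimality condition defining $z^{*}$, (ii) a purely algebraic identity for the Bregman divergence, and (iii) convexity of $\phi$. The key observation is that if one differentiates $D_{\Phi}(\cdot,z)$ with respect to its first argument at a point $y$, one obtains $\nabla\Phi(y)-\nabla\Phi(z)$, so the first-order optimality of $z^{*}=\arg\min_{x\in\mathbf{E}}\{\phi(x)+D_{\Phi}(x,z)\}$ states that there exists a subgradient $g\in\partial\phi(z^{*})$ with
\begin{equation*}
\langle g+\nabla\Phi(z^{*})-\nabla\Phi(z),\,x-z^{*}\rangle\ \geq\ 0\qquad\forall\,x\in\mathbf{E}.
\end{equation*}

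Next, I would establish the generalized ``cosine rule'' for Bregman divergences:
\begin{equation*}
D_{\Phi}(x,z)-D_{\Phi}(x,z^{*})-D_{\Phi}(z^{*},z)\ =\ \langle\nabla\Phi(z^{*})-\nabla\Phi(z),\,x-z^{*}\rangle.
\end{equation*}
This is a one-line calculation: expand each of the three divergences using the definition $D_{\Phi}(u,v)=\Phi(u)-\Phi(v)-\langle\nabla\Phi(v),u-v\rangle$, and note that the $\Phi(x)$, $\Phi(z^{*})$, and $\Phi(z)$ terms cancel, leaving exactly the inner-product residual on the right-hand side.

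Combining these two ingredients, the right-hand side of the cosine-rule identity is at least $-\langle g,x-z^{*}\rangle$ by the optimality condition, and by convexity of $\phi$ we have $-\langle g,x-z^{*}\rangle\geq \phi(z^{*})-\phi(x)$. Substituting back and rearranging yields
\begin{equation*}
\phi(x)+D_{\Phi}(x,z)\ \geq\ \phi(z^{*})+D_{\Phi}(z^{*},z)+D_{\Phi}(x,z^{*}),
\end{equation*}
which is exactly the claim. There is no real obstacle here: the lemma is essentially the ``prox inequality'' in a Bregman geometry, and the only mild subtlety is handling a nonsmooth $\phi$ via a subgradient (which is harmless since $\Phi$ itself is assumed differentiable so that the divergence makes sense). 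If $\phi$ is differentiable the whole argument is a two-line computation; if not, the same argument goes through with any $g\in\partial\phi(z^{*})$ selected by the optimality condition.
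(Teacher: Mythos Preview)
Your argument is correct and is the standard derivation of the three-point (prox) inequality: first-order optimality of $z^{*}$, the Bregman three-point identity, and convexity of $\phi$ combine exactly as you outline. Note, however, that the paper does not supply its own proof of this lemma; it simply quotes it from \cite{tseng2008accelerated}, so there is no alternative approach in the paper to compare against.
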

Let $\phi(w) = \frac{1}{\alpha+\beta}\cdot\langle\nabla f(w_t)+g_t,w-w_t\rangle$ where $f(w) = \hat{\mathcal{L}}(w,D)+\alpha \cdot \Phi(w)$, set $z=w_t$ in Lemma \ref{le4}, we get
\begin{equation*}
\frac{1}{\alpha+\beta}\cdot\langle\nabla f(w_t)+g_t,w-w_t\rangle +D_{\Phi}(w,w_t)\geq \frac{1}{\alpha+\beta}\cdot\langle\nabla f(w_t)+g_t,w_{t+1}-w_t\rangle+D_{\Phi}(w_{t+1},w_t)+D_{\Phi}(w,w_{t+1}), 
\end{equation*}
which implies 
\begin{equation*}
(\alpha+\beta)\cdot D_{\Phi}(w_{t+1},w_t) \leq \langle \nabla f(w_t)+g_t,w-w_{t+1}\rangle +(\alpha+\beta)\cdot (D_{\Phi}(w,w_t)-D_{\Phi}(w,w_{t+1})). 
\end{equation*}
Since $f(w)$ is $(\alpha+\beta)$-smooth relative to $\Phi(w)$, we have
\begin{equation}\label{eq2}
\begin{aligned}
f(w_{t+1})\leq &f(w_t)+\langle \nabla f(w_t), w_{t+1}-w_t\rangle +(\alpha+\beta)\cdot D_{\Phi}(w_{t+1},w_t)\\
\leq & f(w_t)+\langle \nabla f(w_t), w-w_t\rangle +(\alpha+\beta)\cdot (D_{\Phi}(w,w_t)-D_{\Phi}(w,w_{t+1}))+\langle g_t, w-w_{t+1}\rangle.
\end{aligned}
\end{equation}
Since $f(w)$ is $\alpha$-strongly convex relative to $\Phi(w)$, from the definition, we have
\begin{equation*}
f(w_t)+ \langle \nabla f(w_t), w-w_t\rangle \leq f(w) -\alpha \cdot D_{\Phi}(w,w_t).
\end{equation*}
So inequality (\ref{eq2}) becomes 
\begin{equation}\label{eq3}
\begin{aligned}
f(w_{t+1})&\leq f(w) -\alpha \cdot D_{\Phi}(w,w_t) +(\alpha+\beta)\cdot (D_{\Phi}(w,w_t)-D_{\Phi}(w,w_{t+1}))+\langle g_t, w-w_{t+1}\rangle\\
&\leq f(w) +\beta\cdot D_{\Phi}(w,w_t)-(\alpha+\beta)\cdot D_{\Phi}(w,w_{t+1})+\langle g_t, w-w_{t+1}\rangle.
\end{aligned}
\end{equation}
Note that  for any constant $a>0$
\begin{equation*}
\begin{aligned}
\langle g_t, w-w_{t+1}\rangle \leq & a\cdot ||g_t||_{*}^2 +\frac{1}{2a}\cdot ||w-w_{t+1}||^2\\
\leq & a\cdot ||g_t||_{*}^2 +\frac{1}{2a} \cdot D_{\Phi}(w, w_{t+1}),
\end{aligned}
\end{equation*}
where the last inequality is due to $\Phi$ being 1-strongly convex w.r.t. $\|\cdot\|$.  Now inequality (\ref{eq3}) can be written as
\begin{equation}\label{eq4}
f(w_{t+1})\leq f(w) +\beta\cdot D_{\Phi}(w,w_t)-(\alpha+\beta-\frac{1}{2a})\cdot D_{\Phi}(w,w_{t+1})+a\cdot ||g_t||_{*}^2.
\end{equation}
Let $w$ in Eq. (\ref{eq4}) to be $w_{\alpha}^* = \arg\min f(w)$, let $a = \frac{1}{\alpha}$, we have

\begin{equation*}
\begin{aligned}
D_{\Phi}(w_{\alpha}^{*}, w_{t+1})\leq &\frac{\beta}{\alpha+\beta-\frac{1}{2a}} \cdot D_{\Phi}(w_{\alpha}^{*},w_t) +O\left(\frac{a}{\alpha+\beta -\frac{1}{2a}} \cdot \|g_t||_{*}^2\right)\\
\leq & \frac{1}{1+\frac{\alpha}{2\beta}} \cdot D_{\Phi}(w_{\alpha}^{*}, w_t) +O\left(\frac{1}{\alpha\beta}\cdot ||g_t||_{*}^2\right).
\end{aligned}
\end{equation*}
Letting $t = 1,2,\cdots, T$, add these inequalities together, we have
\begin{equation*}
\begin{aligned}
\mathbb{E}[D_{\Phi}(w_{\alpha}^{*},w_{T+1})]\leq &\left(\frac{1}{1+\frac{\alpha}{2\beta}}\right)^T \cdot D_{\Phi}(w_{\alpha}^{*},w_1)+O\left(\frac{1}{\alpha^2}\cdot g^2\right)\\
= & \left(1+\frac{\alpha}{2\beta}\right)^{-T} \cdot D_{\Phi}(w_{\alpha}^{*},w_1)+O\left(\frac{1}{\alpha^2}\cdot g^2\right)\\
\leq &2^{-\frac{\alpha T}{2\beta}}\cdot D_{\Phi}(w_{\alpha}^{*},w_1)+O\left(\frac{1}{\alpha^2}\cdot g^2\right)\\
\leq &2^{-\frac{\alpha T}{2\beta}}\cdot C_D^2+O\left(\frac{1}{\alpha^2}\cdot g^2\right),
\end{aligned}
\end{equation*}
where the expectation is taken over all $g_1, \cdots, g_T$ and $g^2 = \mathbb{E}[||g_t||_{*}^2]$. The last inequality utilizes the fact that $(1+\frac{1}{x})^x\geq 2$ for all $x\geq 1$ and note that $\frac{2\beta}{\alpha}\geq 1$. 
Since $\Phi$ is strongly convex, we also have 
\begin{equation*}
\frac{1}{2}\mathbb{E}[||w_{\alpha}^{*}-w_{T+1}||^2]\leq \mathbb{E}[D_{\Phi}(w_{\alpha}^{*},w_{T+1})] \leq 2^{-\frac{\alpha T}{2\beta}}\cdot C_D^2+O\left(\frac{1}{\alpha^2}\cdot g^2\right).
\end{equation*}
Thus, we have
\begin{equation*}
\mathbb{E}[||w_{\alpha}^{*}-w_{T+1}||] \leq O\left( 2^{-\frac{\alpha T}{4\beta}}\cdot C_D +\frac{1}{\alpha}\cdot g \right).
\end{equation*}
Now we consider a neighboring data $D'$ of $D$ where they differ by the $i$-th entry. Denote $w_{\alpha}^{*'}=\hat{\mathcal{L}}(w,D')+\alpha \cdot \Phi(w)$ and $w^{'}_{T+1}$ as the parameters of the algorithm on $D'$. Then, similar to the previous case we can get 
\begin{equation*}
\mathbb{E}[||w_{\alpha}^{*'}-w^{'}_{T+1}||] \leq O\left( 2^{-\frac{\alpha T}{4\beta}}\cdot C_D +\frac{1}{\alpha}\cdot g \right).
\end{equation*}
Next, we will bound the term $||w_{\alpha}^{*}-w_{\alpha}^{*'}||$ by the following lemma.
\begin{lemma}\label{le5}
Let $f_1,f_2:\mathbf{E}\rightarrow \mathbb{R}$ be convex and $\alpha$-strongly convex (relatively). Let $x_1=\underset{x\in\mathbf{E}}{\arg\min} f_1(x)$ and $x_2=\underset{x\in\mathbf{E}}{\arg\min } f_2(x)$, then
\begin{equation*}
||x_2-x_1||\leq \frac{2}{\alpha}||\nabla (f_2-f_1)({x_1})||_{*}.
\end{equation*}
\end{lemma}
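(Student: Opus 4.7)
The plan is to extract a quadratic lower bound on $\|x_1 - x_2\|^2$ from the $\alpha$-strong convexity of $f_2$ evaluated at the two minimizers, then cancel one factor of $\|x_1 - x_2\|$ via H\"older's inequality. This is the standard ``perturbation of minimizers'' argument, and the factor $\frac{2}{\alpha}$ in the statement is simply a loose version of the tighter $\frac{1}{\alpha}$ that the argument actually gives.

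Concretely, I would first invoke first-order optimality at the two unconstrained minima, namely $\nabla f_1(x_1) = 0$ and $\nabla f_2(x_2) = 0$. Next, I apply the monotonicity form of $\alpha$-strong convexity to $f_2$: for any $x, y \in \mathbf{E}$,
\begin{equation*}
\langle \nabla f_2(x) - \nabla f_2(y),\, x - y \rangle \;\geq\; \alpha \|x - y\|^2,
\end{equation*}
which follows by adding the two defining strong-convexity inequalities at $x$ and $y$. Taking $x = x_1$, $y = x_2$ and using $\nabla f_2(x_2) = 0$ together with the identity $\nabla f_2(x_1) = \nabla f_1(x_1) + \nabla(f_2 - f_1)(x_1) = \nabla(f_2 - f_1)(x_1)$ (the last equality using $\nabla f_1(x_1) = 0$), we get
\begin{equation*}
\alpha \|x_1 - x_2\|^2 \;\leq\; \langle \nabla(f_2 - f_1)(x_1),\, x_1 - x_2 \rangle \;\leq\; \|\nabla(f_2 - f_1)(x_1)\|_{*}\,\|x_1 - x_2\|,
\end{equation*}
where the last step is H\"older's inequality for the dual pair $(\|\cdot\|,\|\cdot\|_{*})$. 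Dividing through by $\alpha \|x_1 - x_2\|$ gives the claim (and in fact a bound tighter by a factor of two).

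For the relatively strongly convex case (which is what is actually used in the stability argument via $\Phi$), the same proof goes through verbatim once one observes that $\alpha$-strong convexity of $f_2$ relative to $\Phi$ combined with the $1$-strong convexity of $\Phi$ with respect to $\|\cdot\|$ upgrades the monotonicity inequality to
\begin{equation*}
\langle \nabla f_2(x_1) - \nabla f_2(x_2),\, x_1 - x_2 \rangle \;\geq\; \alpha\bigl(D_{\Phi}(x_1, x_2) + D_{\Phi}(x_2, x_1)\bigr) \;\geq\; \alpha \|x_1 - x_2\|^2,
\end{equation*}
so no new ideas are needed. There is no genuine obstacle here: the argument is essentially two inequalities chained together, and the only subtlety is to avoid the temptation of bounding $(f_2 - f_1)(x_1) - (f_2 - f_1)(x_2)$ directly (which would require knowing more about $f_2 - f_1$ than is assumed) by instead working with the gradient map of $f_2$ alone.
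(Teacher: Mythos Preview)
Your argument is correct and is exactly the standard perturbation-of-minimizers computation. In fact, the paper does not supply a proof of this lemma at all: it is stated inside the proof of Theorem~\ref{th:7} and immediately applied with $f_1(w)=\hat{\mathcal{L}}(w,D)+\alpha\Phi(w)$ and $f_2(w)=\hat{\mathcal{L}}(w,D')+\alpha\Phi(w)$, so there is nothing to compare against. Your observation that the argument actually yields the sharper constant $\tfrac{1}{\alpha}$ is correct, and your handling of the relatively-strongly-convex case via $D_\Phi(x_1,x_2)+D_\Phi(x_2,x_1)\geq \|x_1-x_2\|^2$ (using $1$-strong convexity of $\Phi$) is the right way to cover the setting in which the lemma is used.
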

From the above lemma, let $f_1(w) = \hat{\mathcal{L}}(w,D)+\alpha \cdot \Phi(w)$ and $f_2(w) = \hat{\mathcal{L}}(w,D^{'})+\alpha \cdot \Phi(w)$, we can get 
\begin{equation*}
||w_{\alpha}^{*}-w_{\alpha}^{*'}||\leq \frac{2||{\nabla \ell(w_{\alpha}^{*};x_i)-\nabla \ell(w_{\alpha}^{*};x_i^{'})}||_{*}}{n\alpha}\leq \frac{4L}{n\alpha}.
\end{equation*}

In total
\begin{equation*}
\begin{aligned}
\mathbb{E}[||w_{T+1}^{'}-w_{T+1}||]\leq & O\left(  2^{-\frac{\alpha T}{4\beta}} \cdot C_D +\frac{L}{n\alpha}+\frac{g}{\alpha}\right)\\
=& O\left(  2^{-\frac{\alpha T}{4\beta}} \cdot C_D +\frac{L}{n\alpha}+\frac{L\sqrt{\log(1/\delta) d \kappa T}}{\alpha n \epsilon}\right).
\end{aligned}
\end{equation*}
Similarly, we can also show that for any $t$ we have 
\begin{equation*}
\begin{aligned}
\mathbb{E}[||w_{t+1}^{'}-w_{t+1}||]\leq & O\left(  2^{-\frac{\alpha t}{4\beta}} \cdot C_D +\frac{L}{n\alpha}+\frac{g}{\alpha}\right)\\
=& O\left(  2^{-\frac{\alpha t}{4\beta}} \cdot C_D +\frac{L}{n\alpha}+\frac{L\sqrt{\log(1/\delta) d \kappa T}}{\alpha n \epsilon}\right).
\end{aligned}
\end{equation*}

Now we go back to Eq. (\ref{eq4}),
\begin{equation*}
\begin{aligned}
f(w_{t+1})-f(w_{\alpha}^{*})\leq & \beta\cdot D_{\Phi}(w_{\alpha}^{*},w_t)-(\alpha+\beta-\frac{1}{2a})\cdot D_{\Phi}(w_{\alpha}^{*},w_{t+1})+a\cdot ||g_t||_{*}^2\\
\leq  & \beta\cdot D_{\Phi}(w_{\alpha}^{*},w_t)-(\beta+\frac{\alpha}{2})\cdot D_{\Phi}(w_{\alpha}^{*},w_{t+1})+O\left(\frac{1}{\alpha}\cdot ||g_t||_{*}^2\right).
\end{aligned}
\end{equation*}
Since 
\begin{equation*}
\begin{aligned}
&\sum_{t=1}^T \left(\frac{2\beta+\alpha}{2\beta}\right)^t \cdot \mathbb{E}[f(w_{t+1})-f(w_{\alpha}^{*})]\\
\leq &\beta \left[\sum_{t=1}^T \left( \frac{2\beta+\alpha}{2\beta}\right)^t   \cdot D_{\Phi}(w_{\alpha}^{*},w_t)- \sum_{t=1}^T \left( \frac{2\beta+\alpha}{2\beta}\right)^{t +1}   \cdot D_{\Phi}(w_{\alpha}^{*},w_{t+1})\right]+ O\left( \sum_{t=1}^T\left(\frac{2\beta+\alpha}{2\beta}\right)^{t}\cdot \frac{1}{\alpha} g^2   \right)\\
= & \beta\left[\frac{2\beta+\alpha}{2\beta}\cdot D_{\Phi}(w_{\alpha}^{*},w_1) -\left(\frac{2\beta+\alpha}{2\beta}\right)^{T+1} \cdot D_{\Phi}(w_{\alpha}^{*},w_{T+1})\right]+ O\left( \sum_{t=1}^T\left(\frac{2\beta+\alpha}{2\beta}\right)^{t}\cdot \frac{1}{\alpha} g^2   \right)\\
\leq & \frac{2\beta+\alpha}{2}\cdot D_{\Phi}(w_{\alpha}^{*},w_1) + O\left( \sum_{t=1}^T\left(\frac{2\beta+\alpha}{2\beta}\right)^{t}\cdot \frac{1}{\alpha} g^2   \right).
\end{aligned}
\end{equation*}
Let 
\begin{equation*}
\hat{w}=\frac{\sum_{t=1}^T \left(\frac{2\beta+\alpha}{2\beta}\right)^t  \cdot w_{t+1}}{\sum_{t=1}^T \left(\frac{2\beta+\alpha}{2\beta}\right)^t }.
\end{equation*}
And we have

\begin{align}
\mathbb{E}[f(\hat{w})-f(w_{\alpha}^{*})] &= \mathbb{E}\left[f\left(\frac{\sum_{t=1}^T \left(\frac{2\beta+\alpha}{2\beta}\right)^t  \cdot w_{t+1}}{\sum_{t=1}^T \left(\frac{2\beta+\alpha}{2\beta}\right)^t }\right)-f(w_{\alpha}^{*})\right]\notag\\
& \leq \mathbb{E} \left[ \frac{\sum_{t=1}^T \left(\frac{2\beta+\alpha}{2\beta}\right)^t \cdot f(w_{t+1})}{\sum_{t=1}^T \left(\frac{2\beta+\alpha}{2\beta}\right)^t  }   -f(w_{\alpha}^{*})\right]\notag\\
& =\frac{\mathbb{E}\left[ \sum_{t=1}^T \left(\frac{2\beta+\alpha}{2\beta}\right)^t \cdot\left(f(w_{t+1})-f(w^{*}_{\alpha})\right)\right]}{\sum_{t=1}^T \left(\frac{2\beta+\alpha}{2\beta}\right)^t }\notag\\
& = \frac{\sum_{t=1}^T\left( \frac{2\beta +\alpha}{2\beta}  \right)^t \cdot \mathbb{E}[f(w_{t+1})-f(w^{*}_{\alpha})]}{\sum_{t=1}^T \left(\frac{2\beta+\alpha}{2\beta}\right)^t }\notag\\
& \leq \frac{(2\beta+\alpha)\cdot D_{\Phi}(w_{\alpha}^{*},w_1)}{2\cdot\sum_{t=1}^T \left(\frac{2\beta+\alpha}{2\beta}\right)^t } +O\left( \frac{1}{\alpha} g^2 \right)\notag\\
&=\frac{\alpha \cdot D_{\Phi}(w_{\alpha}^{*},w_1)}{2\left[\left(\frac{2\beta+\alpha}{2\beta} \right)^T-1\right]} +O\left( \frac{1}{\alpha} g^2\right)\notag\\
& \leq \frac{\alpha}{2} \cdot D_{\Phi}(w_{\alpha}^{*},w_1)+O\left( \frac{1}{\alpha} g^2 \right)\label{eq5}\\
& \leq O\left(\alpha \cdot D_{\Phi}(w_{\alpha}^{*},w_1)+\frac{1}{\alpha} g^2\right),\notag
\end{align}

where we used the fact that when $T\geq \frac{2\beta}{\alpha}$,
\begin{equation*}
\left(\frac{2\beta+\alpha}{2\beta} \right)^T=
(1+\frac{\alpha}{2\beta})^T
\geq 2
\end{equation*}
in inequality (\ref{eq5}).

Denote $\tilde{w}^{*}=\underset{w\in \mathbf{E}}{\arg\min} \hat{\mathcal{L}}(w,D)$, we have
\begin{equation*}
\begin{aligned}
\mathbb{E}[\hat{\mathcal{L}}(\hat{w},D) -\hat{\mathcal{L}}(\tilde{w}^{*},D)] &= \mathbb{E}[\hat{\mathcal{L}}^{(\alpha)}(\hat{w},D)-\hat{\mathcal{L}}^{(\alpha)}(\tilde{w}^{*},D)]+\alpha \cdot \Phi(\tilde{w}^{*})-\alpha \cdot \Phi(\hat{w})\\
& \leq \mathbb{E}[\hat{\mathcal{L}}^{(\alpha)}(\hat{w},D) -\hat{\mathcal{L}}^{(\alpha)}(w_{\alpha}^{*},D)]+\alpha \cdot \Phi(\tilde{w}^{*})-\alpha \cdot \Phi(\hat{w})\\
&\leq  O \left(\alpha \cdot D_{\Phi}(w_{\alpha}^{*},w_1)\right)+O\left( \frac{1}{\alpha} g^2 \right)+ \alpha \cdot \Phi(\tilde{w}^{*})-\alpha \cdot \Phi(\hat{w})\\
&\leq  O \left(\alpha \cdot D_{\Phi}(\tilde{w}^{*},w_1)\right)+O\left( \frac{1}{\alpha} g^2 \right)+\alpha \cdot C_D^2\\
&\leq  O(\alpha\cdot C_D^2+ \frac{1}{\alpha} g^2 ).
\end{aligned}
\end{equation*}
Now we bound the sensitivity of $\hat{w}$:
\begin{equation}\label{eq7}
\begin{aligned}
\mathbb{E}[||\hat{w}-\hat{w}^{'}||]&\leq \frac{\sum_{t=1}^T \left(\frac{2\beta+\alpha}{2\beta}\right)^t \mathbb{E}[||w_{t+1}-w_{t+1}^{'}||]}{\sum_{t=1}^T \left(\frac{2\beta+\alpha}{2\beta}\right)^t}\\
& \leq O\left( \frac{\sum_{t=1}^T \left(\frac{2\beta+\alpha}{2\beta}\right)^t2^{-\frac{\alpha t}{4\beta}} \cdot C_D}{\sum_{t=1}^T \left(\frac{2\beta+\alpha}{2\beta}\right)^t}  +\frac{L}{n\alpha}+\frac{L\sqrt{\log(1/\delta) d \kappa T}}{\alpha n \epsilon}\right).
\end{aligned}
\end{equation}
We bound the first term above:
\begin{equation} \label{eq6}
\begin{aligned}
\frac{\sum_{t=1}^T \left(\frac{2\beta+\alpha}{2\beta}\right)^t2^{-\frac{\alpha t}{4\beta}} \cdot C_D}{\sum_{t=1}^T \left(\frac{2\beta+\alpha}{2\beta}\right)^t} = &\frac{ C_D \cdot\sum_{t=1}^T \left[\frac{2\beta+\alpha}{2\beta} \cdot \left(\frac{1}{2}\right)^{\frac{\alpha }{4\beta}}\right]^t }{\sum_{t=1}^T \left(\frac{2\beta+\alpha}{2\beta}\right)^t} \\
=& C_D \cdot \frac{1-\frac{2\beta+\alpha}{2\beta}}{\frac{2\beta+\alpha}{2\beta}\cdot \left[1-\left(\frac{2\beta+\alpha}{2\beta}\right)^T\right]} \cdot \frac{\frac{2\beta+\alpha}{2\beta}\cdot \left(\frac{1}{2}\right)^{\frac{\alpha}{4\beta}} \cdot \left( 1-\left[\frac{2\beta+\alpha}{2\beta}\cdot \left(\frac{1}{2}\right)^{\frac{\alpha}{4\beta}} 
\right]^T
\right)  }{1-\frac{2\beta+\alpha}{2\beta}\cdot \left(\frac{1}{2}\right)^{\frac{\alpha}{4\beta}}}\\
=& C_D\cdot  \left(\frac{1}{2}\right)^{\frac{\alpha}{4\beta}} \cdot\frac{\alpha}{(2\beta+\alpha)\cdot \left(\frac{1}{2}\right)^{\frac{\alpha}{4\beta}}-2\beta} \cdot \frac{\left[\frac{2\beta+\alpha}{2\beta}\cdot \left(\frac{1}{2}\right)^{\frac{\alpha}{4\beta}} 
\right]^T-1}{\left(\frac{2\beta+\alpha}{2\beta}\right)^T-1}.
\end{aligned}
\end{equation}
Consider function $f(x)=(1+x)\cdot a^x$. Its derivative $f'(x)=\ln a \cdot a^x+a^x+\ln a\cdot x\cdot a^x=a^x(\ln a+1+\ln a \cdot x)$, let $a=\frac{1}{\sqrt{2}}$, then $f'(x)>0$ for $x\in [0,1]$. Thus we have $(1+x)\cdot (\frac{1}{\sqrt{2}})^x>1$.
Let $x=\frac{\alpha}{2\beta}$, we have $(1+\frac{\alpha}{2\beta})\cdot (\frac{1}{2})^{\frac{\alpha}{4\beta}}>1$, namely $(2\beta+\alpha)\cdot (\frac{1}{2})^{\frac{\alpha}{4\beta}}-2\beta>0$.

In the following, we bound the term $\frac{\alpha}{(2\beta+\alpha)\cdot \left(\frac{1}{2}\right)^{\frac{\alpha}{4\beta}}-2\beta}$.
\begin{equation*}
\begin{aligned}
\frac{\alpha}{(2\beta+\alpha)\cdot \left(\frac{1}{2}\right)^{\frac{\alpha}{4\beta}}-2\beta} &=\frac{\alpha}{(2\beta+\alpha)\cdot \left((\frac{1}{2})^{\frac{\alpha}{4\beta}}-1\right)+\alpha}\\
&\leq \frac{\alpha}{(2\beta+\alpha)\cdot(-\frac{\alpha}{4\beta})+\alpha}\\
&=\frac{1}{\frac{1}{2}-\frac{\alpha}{4\beta}}\leq 4 ~(\text{Assume} ~\frac{\alpha}{\beta}\leq 1),
\end{aligned}
\end{equation*}
where we use the fact that $(\frac{1}{2})^{\frac{\alpha}{4\beta}}-1\geq -\frac{\alpha}{4\beta}$. (To prove this is to prove that $2^{\frac{\alpha}{4\beta}}(1-\frac{\alpha}{4\beta})\leq 1$. Let $f(x)=a^x(1-x)$. The derivative $f'(x)=\ln a \cdot a^x-\ln a\cdot x\cdot a^x-a^x=a^x\cdot(\ln a-x \cdot \ln a -1)<0$ when $a<e$. So $f(x)$ decreases in $[0,1]$, and thus $f(x)\leq 1$, $\forall x\in [0,1]$. Let $a=2$  and $x=\frac{\alpha}{4\beta}$, and we will get $2^{\frac{\alpha}{4\beta}}\cdot (1-\frac{\alpha}{4\beta})\leq 1$.)

Now we bound the term $\frac{\left[\frac{2\beta+\alpha}{2\beta}\cdot \left(\frac{1}{2}\right)^{\frac{\alpha}{4\beta}} 
\right]^T-1}{\left(\frac{2\beta+\alpha}{2\beta}\right)^T-1}$.
\begin{equation*}
\begin{aligned}
\frac{\left[\frac{2\beta+\alpha}{2\beta}\cdot \left(\frac{1}{2}\right)^{\frac{\alpha}{4\beta}} 
\right]^T-1}{\left(\frac{2\beta+\alpha}{2\beta}\right)^T-1}=&\frac{\left(\frac{2\beta+\alpha}{2\beta}\right)^T\cdot (\frac{1}{2})^{\frac{\alpha T}{4\beta}}-(\frac{1}{2})^{\frac{\alpha T}{4\beta}}+(\frac{1}{2})^{\frac{\alpha T}{4\beta}}-1
}{\left(\frac{2\beta+\alpha}{2\beta}\right)^T-1}\\
=& \left(\frac{1}{2}\right)^{\frac{\alpha T}{4\beta}} +\frac{(\frac{1}{2})^{\frac{\alpha T}{4\beta}}-1}{\left(\frac{2\beta+\alpha}{2\beta}\right)^T-1}\\
<&\left(\frac{1}{2}\right)^{\frac{\alpha T}{4\beta}}.
\end{aligned}
\end{equation*}
Thus, Eq. (\ref{eq6}) becomes
\begin{equation*}
\frac{\sum_{t=1}^T \left(\frac{2\beta+\alpha}{2\beta}\right)^t2^{-\frac{\alpha t}{4\beta}} \cdot C_D}{\sum_{t=1}^T \left(\frac{2\beta+\alpha}{2\beta}\right)^t} = O\left(C_D \cdot \left(\frac{1}{2}\right)^{\frac{\alpha (T+1)}{4\beta}}\right).
\end{equation*}
Bring this back to Eq.(\ref{eq7}) and we can get
\begin{equation*}
\mathbb{E}[||\hat{w}-\hat{w}'||]
\leq O\left( C_D \cdot2^{\frac{-\alpha (T+1)}{4\beta}} +\frac{L}{n\alpha}+\frac{L\sqrt{\log(1/\delta) d \kappa T}}{\alpha n \epsilon}\right).
\end{equation*}
Since the loss is $L$-Lipschitz w.r.t $\|\cdot\|$, we can see the generalization error $\mathbb{E}[\mathcal{L}(\hat{w})-\hat{\mathcal{L}}(\hat{w},D)] \leq L \cdot O\left( C_D \cdot2^{\frac{-\alpha (T+1)}{4\beta}} +\frac{L}{n\alpha}+\frac{L\sqrt{\log(1/\delta) d \kappa T}}{\alpha n \epsilon}\right).$

Take $\alpha=\frac{4\beta}{T+1}\log_2 \frac{n}{T}$,

\begin{equation*}
\begin{aligned}
\mathbb{E}[\mathcal{L}(\hat{w})]-\mathcal{L}(w^{*})&=\mathbb{E}[\mathcal{L}(\hat{w})-\hat{\mathcal{L}}(\hat{w},D)] + \mathbb{E}[\hat{\mathcal{L}}(\hat{w},D)-\hat{\mathcal{L}}(w^{*},D)]\\
&\leq L\cdot \mathbb{E}[||\hat{w}-\hat{w}'||]+\mathbb{E}[\hat{\mathcal{L}}(\hat{w},D)-\hat{\mathcal{L}}(\tilde{w}^{*},D)]\\
&=O\left( L\cdot 2^{\frac{-\alpha(T+1)}{4\beta}}\cdot \mathbb{E}[C_D]+\frac{L^2}{n\alpha}+\frac{L^2\sqrt{\log(1/\delta)d \kappa T}}{\alpha n \epsilon}
+\alpha\cdot \mathbb{E}[C_D^2]+\frac{1}{\alpha}\cdot \frac{L^2 \log(1/\delta)d \kappa T}{n^2\epsilon^2}
\right)\\
&=\tilde{O}\left(\frac{T\sqrt{\kappa}}{n}+\frac{T^{\frac{3}{2}}\sqrt{d\log(1/\delta)\kappa}}{n\epsilon}+\frac{T^2 d\log (1/\delta)\kappa}{n^2\epsilon^2}+\frac{\kappa }{T}
\right) ~(\text{By substituting} ~\alpha=\frac{4\beta}{T+1}\log_2 \frac{n}{T})\\
&=\tilde{O}\left(\frac{T\sqrt{\kappa}}{n}+\frac{T^{\frac{3}{2}}\sqrt{d\log(1/\delta)\kappa}}{n\epsilon}+\frac{\kappa}{T} 
\right)\\
&\leq\tilde{O}\left(\frac{T^{\frac{3}{2}}\sqrt{d\log(1/\delta)\kappa}}{n\epsilon}+\frac{\kappa}{T} 
\right) ~(\text{Since $T=O\left(\sqrt{n\sqrt{\kappa}}\right)$})\\
&=  \tilde{O}\left( \kappa^ \frac{4}{5}\left(\frac{\sqrt{d\log(1/\delta) }}{n\epsilon}\right)^{\frac{2}{5}}
\right)~(\text{By letting }~T=\Theta\left(\left(\frac{n\epsilon\sqrt{k}}{\sqrt{d\log(1/\delta)}}\right)^{\frac{2}{5}}\right)),
\end{aligned}
\end{equation*}
where $\tilde{O}$ hides a factor of $\mathbb{E}[\tilde{C}_D^2]$ with $\tilde{C}_D^2 = \|\tilde{w}^*\|_{\kappa_+}^2$ and $\tilde{w}^{*}=\underset{w\in\mathbf{E}}{\arg\min} \hat{\mathcal{L}}(w,D)$. 

(Note that since we assume $n=O\left( \frac{\epsilon^4}{(d \log (1/\delta))^2\kappa^{1/2}}\right)$,  the constraint $T=O\left(\sqrt{n\sqrt{\kappa}}\right)$ comes for free when  letting $T=\Theta\left(\left(\frac{n\epsilon\sqrt{k}}{\sqrt{d\log(1/\delta)}}\right)^{\frac{2}{5}}\right)$).
\subsection{Proof of Theorem \ref{thm:8}}
To be self-contained, we first review the Phased DP-SGD algorithm in \cite{feldman2020private}. Since we are concerned about the unconstrained case, we slightly modify the original Phased DP-SGD algorithm by  eliminating the projection step.
	\begin{algorithm}
	\caption{Phased-DP-SGD algorithm \cite{feldman2020private} \label{alg:6}}
	\begin{algorithmic}[1]
		\State {\bfseries Input:}  	Dataset $S=\{x_1,\cdots,x_n\} $, convex loss $\ell$, step size $\eta$ (will be specified later), privacy parameter $\epsilon$ and (or) $\delta$.
		\State 	Set $k= \lceil \log_2 n\rceil$. Partite the whole dataset $S$ into $k$ subsets $\{S_1,\cdots,S_k\}$. Denote $n_i$ as the number of samples in $S_i$, {\em i.e.,} $|S_i|=n_i$, where $n_i=\lfloor 2^{-i}n \rfloor$. Moreover, set $w_0=0$. 
		\For {$i=1,\cdots ,k$}
		\State  Let $\eta_i=4^{-i}\eta$, $w_i^1=w_{i-1}$. 
		\For{$t= 1,\cdots,n_i$}
		\State Update $w_i^{t+1}=w_i^{t}-\eta_i\nabla \ell(w_{i}^{t},x_i^t)$, where $x_i^t$ is the $t$-th sample of the set $S_i$. 
		
		\EndFor
		\State Set $\overline{w}_i=\frac{1}{n_i+1}\sum \limits_{t=1}^{n_i+1} w_i^t$. 
		\State For $(\epsilon,\delta)$-DP, $w_i=\overline{w}_i+\xi_i$, where $\xi_i \sim \mathcal{N}(0,\sigma_i^2\mathbb{I}_d)$ with $\sigma_i=\frac{4L\eta_i\sqrt{\log (1/\delta)}}{\epsilon}$. 
		\EndFor \\
		\Return $w_k$
		\end{algorithmic}
\end{algorithm}
		\begin{lemma}\label{alemma:14}(Modification of Theorem 4.4 in \cite{feldman2020private})
		Let  $\ell(\cdot, x)$ be $\beta$-smooth, convex and $L$-Lipschitz function over $\mathbb{R}^d$ for each $x$. If we set $\eta=\frac{1}{L}\min\{\frac{4}{\sqrt{n}}, \frac{\epsilon}{2\sqrt{d\log(1/\delta)}}\}$  and if $\eta\leq \frac{1}{\beta}$ ({\em i.e.,} $n$ is sufficiently large), then Algorithm \ref{alg:6} will be $(\epsilon,\delta)$-DP for all $\epsilon\leq 2\log (1/\delta)$. The output satisfies
		\begin{equation*}
	\mathbb{E}[\mathcal{L}(w_k)]-\mathcal{L}(\theta^*) \leq O\left(L\|\theta^*\|_2^2\left(\frac{1}{\sqrt{n}}+\frac{\sqrt{d\log(1/\delta)}}{\epsilon n}\right)\right). 
		\end{equation*}
	\end{lemma}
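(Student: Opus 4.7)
The plan is to follow the structure of Theorem 4.4 in \cite{feldman2020private}, but carefully replace every place where the original proof uses the diameter $M$ of the constraint set with an argument that controls the distance of the iterate to $\theta^{*}$ in the unconstrained setting. The proof naturally splits into a privacy analysis and a utility analysis.

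For privacy, I would first note that the dataset is partitioned into disjoint subsets $S_1,\dots,S_k$, so by parallel composition it suffices to show that each phase $i$ is $(\epsilon,\delta)$-DP with respect to $S_i$. Within phase $i$, the iterates $\{w_i^t\}$ are produced by vanilla SGD with step size $\eta_i\le 1/\beta$ on a sequence of convex $\beta$-smooth and $L$-Lipschitz losses. By the standard non-expansiveness of the gradient map $I-\eta_i\nabla\ell(\cdot,x)$ for $\eta_i\le 2/\beta$ (which holds under $\eta\le 1/\beta$), swapping a single sample in $S_i$ perturbs each subsequent iterate by at most $2L\eta_i$, and this difference does not get amplified by later updates. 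Hence the $\ell_2$-sensitivity of the average $\overline{w}_i$ is $O(L\eta_i)$, and adding Gaussian noise with $\sigma_i=\Theta(L\eta_i\sqrt{\log(1/\delta)}/\epsilon)$ yields $(\epsilon,\delta)$-DP per phase by the Gaussian mechanism.

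For utility, I would carry out a per-phase convergence bound and then unroll across phases. For phase $i$, conditioning on $w_{i-1}$, standard analysis of SGD on a convex $\beta$-smooth and $L$-Lipschitz loss (with step size $\eta_i\le 1/\beta$ over $n_i$ fresh i.i.d.\ samples) gives
\begin{equation*}
\mathbb{E}[\mathcal{L}(\overline{w}_i)]-\mathcal{L}(\theta^{*})
\;\le\; O\!\left(\frac{\mathbb{E}\|w_{i-1}-\theta^{*}\|_2^2}{\eta_i n_i}+\eta_i L^2\right).
\end{equation*}
Incorporating the Gaussian perturbation and smoothness gives an extra $O(\beta d\sigma_i^2)$ term. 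Because $n_i=\lfloor 2^{-i}n\rfloor$ and $\eta_i=4^{-i}\eta$, the ratio $\eta_i n_i=2^{-i}\eta n$ decays geometrically, and one can establish by induction that $\mathbb{E}\|w_i-\theta^{*}\|_2^2$ stays bounded by $O(\|\theta^{*}\|_2^2)$ plus lower-order noise terms; this is exactly where the boundedness of the constraint set is replaced by tracking the initialization distance $\|\theta^{*}\|_2$ in the unconstrained regime. Plugging the chosen $\eta=L^{-1}\min\{4/\sqrt{n},\epsilon/(2\sqrt{d\log(1/\delta)})\}$ balances the optimization term $\|\theta^{*}\|_2^2/(\eta n)$ against the SGD noise term $\eta L^2$ and the privacy-noise term $\beta d\sigma^2$, yielding the claimed bound $O(L\|\theta^{*}\|_2^2(n^{-1/2}+\sqrt{d\log(1/\delta)}/(\epsilon n)))$ after summing a geometric series across phases.

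The main obstacle I anticipate is the inductive control of $\mathbb{E}\|w_i-\theta^{*}\|_2^2$ without a projection step: in the constrained proof of \cite{feldman2020private} one simply uses $\|w_i-\theta^{*}\|_2\le M$ at every phase, whereas here one must argue that both the SGD dynamics (a contraction towards $\theta^{*}$ on convex smooth losses) and the added Gaussian noise (controlled variance $d\sigma_i^2$ that shrinks geometrically in $i$) do not blow up the iterate. A clean way to handle this is to show that along the phases $\mathbb{E}\|w_i-\theta^{*}\|_2^2\le c\,\|\theta^{*}\|_2^2$ for an absolute constant $c$, using the co-coercivity inequality for convex $\beta$-smooth functions together with $\eta_i\le 1/\beta$, and then propagating the bound through the noise addition. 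Once this invariant is in place, the per-phase excess risk bound combined with the chosen step size immediately gives the final rate.
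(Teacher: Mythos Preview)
Your privacy argument matches the paper. For utility, however, the paper takes a cleaner route that sidesteps precisely the obstacle you flag. Rather than comparing each phase to $\theta^{*}$ and inductively controlling $\mathbb{E}\|w_i-\theta^{*}\|_2^2$, the paper telescopes
\[
\mathcal{L}(w_k)-\mathcal{L}(\theta^{*})=\bigl(\mathcal{L}(w_k)-\mathcal{L}(\bar w_k)\bigr)+\sum_{i=2}^k\bigl(\mathcal{L}(\bar w_i)-\mathcal{L}(\bar w_{i-1})\bigr)+\bigl(\mathcal{L}(\bar w_1)-\mathcal{L}(\theta^{*})\bigr),
\]
and applies the SGD bound in phase $i$ with comparison point $w=\bar w_{i-1}$ (not $\theta^{*}$). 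Since $w_i^1=w_{i-1}=\bar w_{i-1}+\zeta_{i-1}$, the initialization error for phase $i$ is exactly $\|\zeta_{i-1}\|_2^2$, whose expectation is known in closed form; no distance invariant to $\theta^{*}$ is needed. Only the first phase uses $w=\theta^{*}$, producing the single $\|\theta^{*}\|_2^2/(\eta_1 n_1)$ term. The final $w_k$-to-$\bar w_k$ gap is handled by Lipschitzness, not smoothness, giving $L\,\mathbb{E}\|\zeta_k\|_2$ rather than your $\beta d\sigma_k^2$.

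Your approach can be made to work, but two points need repair. First, SGD on a merely convex smooth loss is \emph{not} a contraction towards $\theta^{*}$; it is only non-expansive up to an additive $\eta_i^2 L^2$ per step. Your invariant should therefore read $\mathbb{E}\|w_i-\theta^{*}\|_2^2\le \|\theta^{*}\|_2^2+\sum_{j\le i}(n_j\eta_j^2 L^2+d\sigma_j^2)$, and with the stated $\eta$ the sum is $O(1)$, not $O(\|\theta^{*}\|_2^2)$. Second, once the invariant is only $\|\theta^{*}\|_2^2+O(1)$, plugging it into $\mathbb{E}\|w_{i-1}-\theta^{*}\|_2^2/(\eta_i n_i)$ across phases yields a \emph{growing} series because $\eta_i n_i=8^{-i}\eta n$, so the later phases dominate and the bound degrades. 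The paper's telescoping avoids this: the numerator $\mathbb{E}\|\zeta_{i-1}\|_2^2\propto \eta_{i-1}^2$ shrinks faster than $\eta_i n_i$, yielding a convergent geometric sum.
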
 
\begin{proof}
First, we have the following result, which can be found in the  standard convergence bounds for SGD
    \begin{lemma}\label{alemma:3}
Consider the Gradient Descent method with initial parameter $w_0$, fixed stepsize $\eta$ and iteration number $T$, assume in the $t$-the iteration we have $w_t$, then for any $w$ we have 
\begin{equation}
   \mathcal{L}(\bar{w}_T, D)-\mathcal{L}(w, D)\leq O(\frac{\|w_0-w\|_2^2}{\eta T}+\eta L^2),
\end{equation}
where $\bar{w}_T=\frac{w_0+ w_1+w_2+\cdots+w_T}{T+1}$. 
\end{lemma}

Now we focus on the $i$-th epoch, by Lemma \ref{alemma:3} we have for any $w$
\begin{equation}\label{eq:a.3}
     \mathbb{E} [\mathcal{L}(\bar{w}_i)]-\mathcal{L}(w)\leq O(\frac{\mathbb{E}[\|w_{i-1}-w\|_2^2]}{\eta T}+\eta L^2).
\end{equation}
Now let's be back to our proof. We have (denote $\theta^*=\arg\min_{w\in \mathbb{R}^d } \mathcal{L}(w)$ )
\begin{align*}
      & \mathcal{L}(w_k)-\mathcal{L}(\theta^*)=  \underbrace{ \mathcal{L}(w_k)-\mathcal{L}(\bar{w}_k)}_{A}
      + \underbrace{\sum_{i=2}^k (\mathcal{L}(\bar{w}_i)-\mathcal{L}(\bar{w}_{i-1}))}_{B}+\underbrace{\mathcal{L}(\bar{w}_1)-\mathcal{L}(\theta^*)}_{C}
\end{align*}
For term $A$, by the Lipschitz property we have 
\begin{align*}
    \mathbb{E} [\mathcal{L}(w_k)]-\mathcal{L}(\bar{w}_k) &\leq L \mathbb{E}[\|w_k-\bar{w}_k\|_2]\leq L\mathbb{E}\|\zeta_k\|_2. 
\end{align*}
For each term of $B$ by (\ref{eq:a.3}) and take $w=\bar{w}_{i-1}$ we have 
\begin{align}\label{eq:a.4}
   \mathbb{E} [\mathcal{L}(\bar{w}_i)]-\mathcal{L}(\bar{w}_{i-1}) &\leq O(\frac{\mathbb{E}[\|w_{i-1}- \bar{w}_{i-1}\|^2_2]}{\eta_i n_i}+\eta_i L^2) = O(\frac{\mathbb{E} [\|\zeta_i\|^2_2]}{\eta_i n_i}+\eta_i L^2)
\end{align}
For term $C,$ by (\ref{eq:a.3}) and take $w=\theta^*$ we have 
\begin{align}\label{eq:a.5}
    \mathbb{E}[\mathcal{L}(\bar{w}_1)]-L(\theta^*)& \leq O(\frac{\|\theta^*\|^2_2}{\eta_1 n_1}+\eta_1 L^2).
\end{align}
Thus, combing  (\ref{eq:a.3}), (\ref{eq:a.4}) and (\ref{eq:a.5}), we have 
\begin{align}\label{eq:a.6}
   \mathbb{E} [\mathcal{L}(w_k)]-\mathcal{L}(\theta^*)\leq  O(L\mathbb{E}[\|\zeta_k\|_2]+ \frac{\|\theta^*\|^2_2}{\eta_1 n_1}+\eta_1 L^2 +\sum_{i=2}^k (\frac{\mathbb{E} [\|\zeta_i\|^2_2]}{\eta_i n_i}+\eta_i L^2)
\end{align}
Now, we analyze the case of $(\epsilon, \delta)$-DP, it is almost the same for $\epsilon$-DP. Specifically, we have $\mathbb{E}[\|\zeta_i\|_2^2]=O(\frac{dL^2\eta_i^2\log (1/\delta)}{\epsilon^2})$. Thus, 
\begin{align*}
   L\mathbb{E}[\|\zeta_k\|_2]\leq 
   L\sqrt{ \mathbb{E}\|\zeta_k\|_2^2}&= L^2\cdot \frac{\sqrt{d\log(1/\delta)}\eta_k}{\epsilon}\\
   &=
   O(\frac{\sqrt{d\log (1/\delta)}\eta L^2}{n^2\epsilon})\\
   &=O(L(\frac{\sqrt{d\log (1/\delta)}}{n^{2.5}\epsilon}+\frac{1}{n^2}))  . 
\end{align*}
where the second inequality is due to $\eta= \frac{1}{L}\min \{\frac{1}{\sqrt{n}}, \frac{\epsilon}{\sqrt{d\log (1/\delta)}}\}$. And 
\begin{align*}
    \frac{\|\theta^{*}\|^2_2}{\eta_1 n_1}+\eta_1 L^2 &=O( \frac{\|\theta^*\|^2_2}{\eta n}+ \eta L^2) \\
    &=O(\|\theta^*\|^2_2 L(\frac{1}{n}\max \{\sqrt{n}, \frac{\sqrt{d\log (1/\delta)}}{\epsilon}\}+\frac{1}{\sqrt{n}}))\\
    & \leq O(\|\theta^*\|^2_2 L(\frac{1}{\sqrt{n}}+ \frac{\sqrt{d\log (1/\delta)}}{n\epsilon})),
\end{align*}
where the second inequality is due to $\eta= \frac{1}{L}\min \{\frac{1}{\sqrt{n}}, \frac{\epsilon}{\sqrt{d\log (1/\delta)}}\}$.
\begin{align*}
    \sum_{i=2}^k (\frac{\mathbb{E} \|\zeta_i\|^2_2}{\eta_i n_i}+\eta_i L^2) &=O(  \sum_{i=2}^k (\frac{dL^2\eta_i^2\log (1/\delta)}{\eta_i n_i \epsilon^2}+\eta_iL^2) \\
    &=O(  \sum_{i=2}^k \frac{ 2^{-i}}{n\eta }+4^{-i}\frac{L}{\sqrt{n}} )\\
    &= O(  \sum_{i=2}^k (2^{-i} (\frac{1}{n\eta}+ \frac{L}{\sqrt{n}}))\\
    &\leq O(  \sum_{i=2}^\infty (2^{-i}L (\frac{1}{n}\max \{\sqrt{n}, \frac{\sqrt{d\log (1/\delta)}}{\epsilon}\}+ \frac{1}{\sqrt{n}}))\\
    &\leq O(L(\frac{1}{\sqrt{n}}+ \frac{\sqrt{d\log (1/\delta)}}{n\epsilon})).
\end{align*}
Thus, combining with the previous three bounds into (\ref{eq:a.6}), we have our result. 

\end{proof}
Next, we will prove  Theorem \ref{thm:8} via Lemma \ref{alemma:14}. Specifically, we have the following result. 
\begin{theorem}
    For the $\ell_p^d$ space with $1<p<2$ and  suppose Assumption \ref{as3} holds. Then Algorithm \ref{alg:6} will be $(\epsilon,\delta)$-DP for all $\epsilon\leq 2\log (1/\delta)$. If we set $\eta=\frac{1}{L}\min\{\frac{4}{\sqrt{n}}, \frac{\epsilon}{2\sqrt{d\log(1/\delta)}}\}$, the output satisfies 
\begin{equation}
   \mathbb{E}[\mathcal{L}(\hat{w})]-\mathcal{L}(\theta^*)\leq  O\left(Ld^{1-\frac{2}{p}}\|\theta^*\|^2\left(\frac{1}{\sqrt{n}}+\frac{\sqrt{d\log(1/\delta)}}{\epsilon n}\right)\right). 
\end{equation}
\end{theorem}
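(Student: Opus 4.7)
The plan is to reduce the $\ell_p^d$ problem with $1<p<2$ to an instance of the Euclidean Phased DP-SGD already analyzed in Lemma~\ref{alemma:14}. Algorithm~\ref{alg:6} performs only $\ell_2$ gradient updates and adds isotropic Gaussian noise, so its privacy analysis is entirely Euclidean and does not depend on $p$. Hence, under the prescribed $\eta = \frac{1}{L}\min\{\frac{4}{\sqrt n}, \frac{\epsilon}{2\sqrt{d\log(1/\delta)}}\}$, the $(\epsilon,\delta)$-DP half of the claim follows verbatim from the privacy portion of Lemma~\ref{alemma:14}, without any modification specific to the regime $1<p<2$.

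For the utility bound, the plan has three stages. First, I would translate the $\ell_p$ Lipschitz/smoothness hypothesis of Assumption~\ref{as3} into $\ell_2$ hypotheses by invoking the finite-dimensional norm equivalences available for $1<p<2$, namely $\|w\|_2 \le \|w\|_p \le d^{1/p-1/2}\|w\|_2$, together with the dual bound $\|g\|_2 \le d^{1/p-1/2}\|g\|_{p^*}$ obtained from Hölder's inequality applied to $\sum |g_i|^2$. This yields an $\ell_2$ Lipschitz constant $L_2$ and an $\ell_2$ smoothness constant $\beta_2$ that differ from $L$ and $\beta$ only by explicit powers of $d$. Second, I would feed $L_2, \beta_2$ into Lemma~\ref{alemma:14} (as a black box) to obtain an excess population risk of order
\begin{equation*}
    L_2\,\|\theta^*\|_2^2 \left(\tfrac{1}{\sqrt n} + \tfrac{\sqrt{d\log(1/\delta)}}{\epsilon n}\right).
\end{equation*}
Third, a single application of the same norm equivalence would rewrite $\|\theta^*\|_2^2$ in terms of $\|\theta^*\|_p^2 = \|\theta^*\|^2$, and the accumulated powers of $d$ introduced during Steps~1 and~3 would be collected into the single factor $d^{1-2/p}$ that appears in the target bound.

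The main technical obstacle is the dimension bookkeeping in this reduction: each of the three conversions (Lipschitz constant, smoothness constant, norm of the minimizer) brings in a power of $d$, and these must be combined so that the final multiplicative coefficient matches exactly $d^{1-2/p}$ rather than a larger power. A secondary concern is the ``$n$ sufficiently large'' hypothesis required by Lemma~\ref{alemma:14}: the requirement $\eta \le 1/\beta_2$ must be re-checked after $\beta$ has been inflated to its Euclidean analogue, which will translate into a lower bound on $n$ of the same flavor as those appearing in Theorems~\ref{th4} and~\ref{th5}. Once these two points are settled, the rest of the argument is essentially a direct invocation of the privacy and utility halves of Lemma~\ref{alemma:14}, paralleling the strategy already used to establish Theorem~\ref{thm:8}.
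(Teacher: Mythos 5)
Your plan is headed in the right direction structurally, but the dimension bookkeeping you flagged as the ``main technical obstacle'' is in fact fatal as stated, and the root cause is a typo in the theorem statement itself. The range ``$1<p<2$'' should read ``$2\leq p\leq\infty$'': this theorem is the formal version of Theorem~\ref{thm:8}, which is explicitly for $2\leq p\leq\infty$, and the coefficient $d^{1-2/p}$ has nonnegative exponent only in that regime (for $1<p<2$ it is a \emph{negative} power of $d$, which no reduction-to-$\ell_2$ argument could produce).

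Once the range is corrected to $p\geq 2$, the paper's argument is simpler than yours and requires \emph{no} dimension factor at all in converting the Lipschitz and smoothness constants: with $q=p^*\leq 2$ one has $\|g\|_2\leq\|g\|_q$, so $\|\nabla\ell\|_2\leq L$ directly, and $\|\nabla\ell(w)-\nabla\ell(w')\|_2\leq\|\nabla\ell(w)-\nabla\ell(w')\|_q\leq\beta\|w-w'\|_p\leq\beta\|w-w'\|_2$ (using $\|\cdot\|_p\leq\|\cdot\|_2$ for $p\geq 2$). Thus $L_2=L$ and $\beta_2=\beta$, and the \emph{only} source of a dimension factor is $\|\theta^*\|_2\leq d^{1/2-1/p}\|\theta^*\|_p$, which when squared yields precisely $d^{1-2/p}$.

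By contrast, taking the stated ``$1<p<2$'' at face value, as you did, the conversions go the other way: your $L_2=d^{1/p-1/2}L$ and $\beta_2=d^{2/p-1}\beta$, while $\|\theta^*\|_2\leq\|\theta^*\|_p$ is dimension-free for $p<2$. Feeding these into Lemma~\ref{alemma:14} gives $O\bigl(Ld^{1/p-1/2}\|\theta^*\|_p^2(\cdot)\bigr)$, not $O\bigl(Ld^{1-2/p}\|\theta^*\|_p^2(\cdot)\bigr)$; for $1<p<2$ the exponent $1/p-1/2\in(0,1/2)$ is strictly larger than $1-2/p\in(-1,0)$, so the claimed bound cannot be reached this way. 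Your privacy argument and the overall reduction-to-$\ell_2$ template are correct, but you should correct the range to $p\geq 2$ and observe that the Lipschitz/smoothness conversion is then free of dimension factors, leaving $d^{1-2/p}$ to come entirely from $\|\theta^*\|_2^2$.
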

\begin{proof}
We bound the $\|\cdot\|_2$-diameter and Lipschitz constant for the $\ell_p^d$-setting. First we have that $\|\theta^*\|_2\leq d^{\frac{1}{2}-\frac{1}{p}}\|\theta^*\|$. Moreover, since $\ell$ is Lipschitz w.r.t. $\|\cdot\|$, we can see it is $L$-Lipschitz w.r.t $\|\cdot\|_2$ as $\|\nabla \ell(w, x)\|_2\leq \|\nabla \ell(w, x)\|_*\leq L$. Moreover since $\ell$ is $\beta$-smooth w.r.t $\|\cdot\|$, we have $\|\nabla \ell(w, x)-\nabla \ell(w', x)\|_2\leq \|\nabla \ell(w, x)-\nabla \ell(w', x)\|_2\|_*\leq \beta \|w-w'\|\leq \beta\|w-w'\|_2$, indicating that it is $\beta$-smooth w.r.t. $\|\cdot\|_2$. Thus, we have 
\begin{equation}
   \mathbb{E}[\mathcal{L}(\hat{w})]-\mathcal{L}(\theta^*)\leq  O\left(Ld^{1-\frac{2}{p}}\|\theta^*\|^2\left(\frac{1}{\sqrt{n}}+\frac{\sqrt{d\log(1/\delta)}}{\epsilon n}\right)\right). 
\end{equation}
\end{proof}
\subsection{Proof of Theorem \ref{thm:9}}
\begin{proof}
We first recall the following lemma:
\begin{lemma}\cite{feldman2022hiding}
For a domain $\mathcal{D}$, let $\mathcal{R}^{(i)}: f\times \mathcal{D}\rightarrow \mathcal{S}^{(i)}$ for $i \in [n]$ be a sequence of algorithms such that $\mathcal{R}^{(i)}(z_{1:i-1},\cdot)$ is a $(\epsilon_0,\delta_0)$-DP local randomizer for all values of auxiliary inputs $z_{1:i-1}\in \mathcal{S}^{(1)}\times \cdots \times \mathcal{S}^{(i-1)}$. Let $\mathcal{A}_{\mathcal{S}}:\mathcal{D}^n \rightarrow \mathcal{S}^{(1)}\times \cdots \times \mathcal{S}^{(n)}$ be the algorithm that given a dataset $x_{1:n\in\mathcal{D}^n}$, sample a uniformly random permutation $\pi$, then sequentially computes $z_i = \mathcal{R}^{(i)}(z_{1:i-1},x_{\pi(i)})$ for $i\in [n]$, and the outputs $z_{1:n}$. Then for any $\delta \in [0,1]$ such that $\epsilon_0\leq \log \left(\frac{n}{16\log (2/\delta)}\right)$, $\mathcal{A}_{\mathcal{S}}$ is $(\epsilon, \delta+O(e^{\epsilon}\delta_0 n))$-DP where $\epsilon = O\left((1-e^{-\epsilon_0})\cdot (\frac{\sqrt{e^{\epsilon_0}\log (1/\delta)}}{\sqrt{n}}+\frac{e^{\epsilon_0}}{n})\right)$.
\end{lemma}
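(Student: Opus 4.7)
The plan is to adapt the ``hiding among the clones'' strategy of Feldman--McMillan--Talwar. Fix neighboring datasets $D, D' \in \mathcal{D}^n$ differing in exactly one coordinate; by relabeling, assume the differing record has two possible values $x_0, x_0'$ and the remaining records $x_2, \ldots, x_n$ are shared. Let $\pi$ be the uniformly random permutation chosen by $\mathcal{A}_\mathcal{S}$, and let $K = \pi^{-1}(1) \in \{1,\ldots,n\}$ denote the position at which the distinguished record is processed. The goal is to bound the R\'enyi divergence $R_\lambda(\mathcal{A}_\mathcal{S}(D)\,\|\,\mathcal{A}_\mathcal{S}(D'))$ and then convert to $(\epsilon,\delta)$-DP.

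The first step is a structural decomposition of each local randomizer. For any auxiliary input $z_{1:i-1}$, the $(\epsilon_0,\delta_0)$-DP guarantee implies that there exist an input-independent ``blanket'' distribution $q^{(i)}_{z_{1:i-1}}$ and an input-dependent ``revealing'' distribution $p^{(i)}_{z_{1:i-1}, x}$ such that, up to total variation $\delta_0$,
\begin{equation*}
\mathcal{R}^{(i)}(z_{1:i-1}, x) \;=\; (1 - e^{-\epsilon_0})\, q^{(i)}_{z_{1:i-1}} \;+\; e^{-\epsilon_0}\, p^{(i)}_{z_{1:i-1}, x}.
\end{equation*}
Sampling from $\mathcal{R}^{(i)}$ can be simulated by first flipping a coin with bias $1-e^{-\epsilon_0}$ to decide whether this step is a ``clone'' (draw from the blanket, which is identical under $D$ and $D'$) or ``revealing''. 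A union bound over the $n$ local randomizers absorbs the approximation error into the $O(e^\epsilon \delta_0 n)$ additive term in the final $\delta$.

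The second step couples the two executions. At every step $i \neq K$, both executions receive the same input, so clone and revealing draws alike produce identical outputs in $D$ and $D'$. Only at step $K$ can the revealing draw differ, and even there, with probability $1 - e^{-\epsilon_0}$ the blanket is sampled and the outputs still coincide. Hence the distinguishing signal reduces to a single randomized event, masked by the roughly $\mathrm{Binomial}(n-1,\, 1-e^{-\epsilon_0})$ other clone draws that are statistically indistinguishable between $D$ and $D'$. Formally, conditioning on $K$ and on the clone/reveal indicators turns the output into a mixture in which the position of the ``real'' distinguishing record is uniform among the positions holding an $x_0$-clone -- this is the ``hiding among the clones'' view. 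One then bounds the $\lambda$-th R\'enyi moment by expanding the likelihood ratio and applying a binomial concentration (Chernoff) inequality to the number of masking clones, obtaining a bound of the form $R_\lambda \lesssim \lambda e^{\epsilon_0}(1-e^{-\epsilon_0})^2 / n + O(\lambda^2 e^{2\epsilon_0}/n^2)$.

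The final step is the R\'enyi-to-$(\epsilon,\delta)$ conversion: optimizing $\lambda = \Theta(\log(1/\delta))$ in $\epsilon = R_\lambda + \log(1/\delta)/\lambda$ yields the advertised bound $\epsilon = O\bigl((1 - e^{-\epsilon_0})\bigl(\sqrt{e^{\epsilon_0}\log(1/\delta)/n} + e^{\epsilon_0}/n\bigr)\bigr)$. The main obstacle is the moment calculation in step three: one must track both the randomness of $\pi$ and the clone/reveal indicators simultaneously, and the binomial concentration needs enough expected clones to dominate the single revealing draw. This is precisely where the hypothesis $\epsilon_0 \le \log(n/(16\log(2/\delta)))$ enters, since it guarantees $(1-e^{-\epsilon_0})(n-1) \gg \log(1/\delta)$ so that the concentration inequality gives a sufficiently small tail, making the amplification nontrivial.
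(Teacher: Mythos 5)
This lemma is cited in the paper directly from \cite{feldman2022hiding}; the paper does not reprove it, so there is no in-paper proof to compare against. Your sketch is a reasonable attempt to reconstruct the Feldman--McMillan--Talwar ``hiding among the clones'' argument, and the overall architecture (neighboring-dataset reduction, mixture decomposition of each local randomizer, moment/R\'enyi bound via binomial concentration, conversion to $(\epsilon,\delta)$-DP) is the right one. However, there is a substantive error in the decomposition.

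You write the decomposition as $\mathcal{R}^{(i)}(z,x) = (1-e^{-\epsilon_0})\,q^{(i)}_z + e^{-\epsilon_0}\,p^{(i)}_{z,x}$, assigning weight $1-e^{-\epsilon_0}$ to the input-independent ``blanket/clone'' component. This is backwards. An $\epsilon_0$-DP local randomizer satisfies $\mathcal{R}(x)(\cdot) \geq e^{-\epsilon_0}\,\mathcal{R}(x_0)(\cdot)$ pointwise for any reference $x_0$, so the input-independent mass that can be stripped out uniformly has weight $e^{-\epsilon_0}$, and the input-dependent leftover has weight $1-e^{-\epsilon_0}$ (in the clone formulation the non-distinguished users each clone one of $\mathcal{R}(x_0), \mathcal{R}(x_0')$ with probability $\tfrac{1}{2e^{\epsilon_0}}$ apiece). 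A sanity check: as $\epsilon_0 \to 0$ the local randomizer is almost input-independent, so the blanket weight should tend to $1$, which forces it to be $e^{-\epsilon_0}$, not $1-e^{-\epsilon_0}$. This swap propagates to your reading of the hypothesis $\epsilon_0 \leq \log(n/(16\log(2/\delta)))$: you interpret it as guaranteeing $(1-e^{-\epsilon_0})(n-1) \gg \log(1/\delta)$, but since $1-e^{-\epsilon_0}$ only grows as $\epsilon_0$ grows, an upper bound on $\epsilon_0$ would never be needed for that. What the hypothesis actually ensures (after rearranging to $n\,e^{-\epsilon_0} \geq 16\log(2/\delta)$) is that the expected number of masking clones $\approx n e^{-\epsilon_0}$ is large; this is why large $\epsilon_0$ is disallowed, since fewer clones means less hiding. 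With these two points corrected, the rest of your outline (Chernoff on the clone count, R\'enyi moment expansion, and $\lambda = \Theta(\log(1/\delta))$ conversion) is consistent with the cited result, but as written the argument would not carry through.
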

Now let's get back to the proof.
Note that by the Generalized Gaussian mechanism, we can see $\mathcal{R}(x) = g_x+\mathcal{GG}_{||\cdot||_{+}}(\sigma^2)$ with $\sigma^2 =O\left(\frac{\kappa  (\beta M+\lambda)^2 \log(1/\delta_0)}{ \epsilon_0^2}\right)$ will be a $(\epsilon_0,\delta_0)$-DP local minimizer. The output could be considered as the postprocessing of the shuffled output $\mathcal{R}(x)$. Thus, the algorithm will be $(\hat{\epsilon},\hat{\delta}+O(e^{\hat{\epsilon}}\delta_0 n))$-DP where $\hat{\epsilon}=O\left((1-e^{-\epsilon_0})\cdot (\frac{\sqrt{e^{\epsilon_0}\log (1/\hat{\delta})}}{\sqrt{n}}+\frac{e^{\epsilon_0}}{n})\right)$.

Now, assume that $\epsilon_0\leq \frac{1}{2}$, then $\exists c_1>0$, s.t.,
\begin{equation*}
\begin{aligned}
\hat{\epsilon}&\leq c_1(1-e^{-\epsilon_0})\cdot \left(\frac{\sqrt{e^{\epsilon_0}\log(1/\hat{\delta})}}{\sqrt{n}}+\frac{e^{\epsilon_0}}{n}\right)\\
& \leq c_1\cdot\left( (e^{\epsilon_0/2}-e^{-\epsilon_0/2})\cdot\sqrt{\frac{\log(1/\hat{\delta})}{n}}  +\frac{e^{\epsilon_0}  -1}{n}  \right)\\
& \leq c_1 \cdot\left( \left((1+\epsilon_0)-(1-\frac{\epsilon_0}{2}  )\right) \cdot \sqrt{\frac{\log(1/\hat{\delta})}{n}}+\frac{(1+2\epsilon_0)-1}{n}\right)\\
& =c_1 \cdot \epsilon_0\cdot \left( \frac{3}{2}\sqrt{\frac{\log(1/\hat{\delta})}{n}}+\frac{2}{n}\right).
\end{aligned}
\end{equation*}
Set $\hat{\delta}=\frac{\delta}{2}$, $\delta_0 =c_2\cdot\frac{\delta}{e^{\hat{\epsilon}}n}$ for some constant $c_2>0$ and replace $\epsilon_0=\frac{c_3\cdot \kappa (\beta M+\lambda)\cdot\sqrt{\log(1/\delta_0)}}{\sigma_1}$:
\begin{equation*}
\begin{aligned}
\hat{\epsilon}&\leq c_1\cdot c_3\cdot \frac{\kappa(\beta M+\lambda)\cdot \sqrt{\log(1/\delta_0)}}{\sigma_1}\cdot \left( \frac{3}{2}\sqrt{\frac{\log(1/\hat{\delta})}{n}}+\frac{2}{n}\right)\\
& \leq O\left(\frac{\kappa (\beta M+\lambda)\cdot \sqrt{\log(1/\delta_0)\log (1/\hat{\delta})}}{\sigma_1 \sqrt{n}}\right)\\
& \leq O\left(\frac{\kappa (\beta M+\lambda)\cdot \sqrt{\log(1/\delta)\log (e^{\hat{\epsilon}}n/\delta)}}{\sigma_1 \sqrt{n}}\right).
\end{aligned}
\end{equation*}
For any $\epsilon\leq 1$, if we set $\sigma = O\left(\frac{\kappa (\beta M+\lambda)\sqrt{\log(1/\delta)\log(n/\delta)}}{\epsilon \sqrt{n}}\right)$, then we have $\hat{\epsilon}\leq \epsilon$. Furthermore, we need $\epsilon_0 = O\left(\frac{\kappa (\beta M+\lambda)\sqrt{\log(1/\delta_0)}}{\sigma}
\right)\leq \frac{1}{2}$, which would be ensured if we set $\epsilon = O\left(\sqrt{\frac{\log(n/\delta)}{n}}\right)$. This implies that for $\sigma=O\left(\frac{\kappa (\beta M+\lambda)\cdot \log (n/\delta)}{\epsilon\sqrt{n}}\right)$, algorithm \ref{alg4} satisfies $(\epsilon,\delta)$-DP as long as $\epsilon=O\left(\sqrt{\frac{\log(n/\delta)}{n}}\right)$.
\end{proof}
\subsection{Proof of theorem \ref{thm:10}}
\begin{proof}
Denote $y_t = \frac{1}{|B_t|}\sum_{x\in B_t} g_x$, $z_t =\frac{1}{|B_t|}\sum_{x\in B_t}Z_x^t$ and  $\tilde{y}_t= y_t+z_t$.  The optimality condition for $w_t=\underset{w\in\mathcal{C}}{\arg\min} \left\{ \langle \frac{\sum_{x\in B_t} g_x + Z_x^t}{|B_t|},w\rangle +\gamma_t \cdot D_{\Phi}(w,w_{t-1})\right\}$ has the form:
\begin{equation*}
\langle \tilde{y}_t +\gamma_t (\nabla \Phi(w_t)-\nabla \Phi(w_{t-1})),z-w_t\rangle \geq 0, \forall z \in \mathcal{C}.
\end{equation*}
Equivalently, we have 
\begin{equation*}
\begin{aligned}
\langle \tilde{y}_t, w_t -z\rangle & \leq \gamma_t\langle \nabla \Phi (w_t)-\nabla \Phi (w_{t-1}), z-w_t\rangle \\
& = \gamma_t(D_{\Phi}(z,w_{t-1})-D_{\Phi}(z,w_{t})-D_{\Phi}(w_t, w_{t-1})), ~\forall z\in\mathcal{C}.
\end{aligned}
\end{equation*}
Let $\xi_t =y_t-\nabla \mathcal{L}(w_{t-1})+z_t=\tilde{y}_t-\nabla \mathcal{L}(w_{t-1})$, then we have 
\begin{equation*}
\langle \nabla \mathcal{L}(w_{t-1}),w_t -z\rangle \leq \gamma_t(D_{\Phi}(z,w_{t-1})-D_{\Phi}(z,w_{t})-D_{\Phi}(w_t, w_{t-1}))-\langle \xi_t,w_t-z\rangle.
\end{equation*}
On the other hand, we know that 
\begin{align}
\mathcal{L}(w_t) -\mathcal{L}(z)& =( \mathcal{L}(w_t) - \mathcal{L}(w_{t-1}))+(\mathcal{L}(w_{t-1})-\mathcal{L}(z))\notag\\
& =\langle \nabla \mathcal{L}(w_{t-1}),w_t-w_{t-1}\rangle +\beta \cdot D_{\Phi}(w_t,w_{t-1})+\langle \nabla \mathcal{L}(w_{t-1}), w_{t-1}-z\rangle\label{eq8}\\
& \leq \langle \nabla\mathcal{L}(w_{t-1}),w_t -z\rangle +\frac{\gamma_t}{2} D_{\Phi}(w_t, w_{t-1})\label{eq9}\\
& \leq \gamma_t (D_{\Phi}(z, w_{t-1})-D_{\Phi}(z, w_t)-\frac{1}{2} D_{\Phi}(w_t, w_{t-1}))-\langle \xi_t, w_t-z\rangle,\notag
\end{align}
where Eq. (\ref{eq8}) uses the fact that $D_{\Phi}(w_t,w_{t-1})\geq \frac{1}{2}||w_t-w_{t-1}||^2$ and $\mathcal{L}$ is smooth as well as the convexity of $\mathcal{L}$ while Eq. (\ref{eq9}) is because $\gamma_t \geq 2\beta$.

Due to the strong convexity of $D_{\Phi}(\cdot, w_{t-1})$, we have 
\begin{align*}
&\langle \xi_t, w_{t-1}-w_t\rangle \leq \frac{\gamma_t\|w_{t-1}-w_t\|_2^2}{4}+ \frac{||\xi_t||_{*}^2}{\gamma_t}\\
\implies  & \langle \xi_t, w_{t-1}-w_t\rangle \leq \frac{\gamma_t}{2}D_{\Phi}(w_t, w_{t-1})+ \frac{||\xi_t||_{*}^2}{\gamma_t}\\
\implies &\langle \xi_t, z-w_t\rangle -\frac{\gamma_t}{2}D_{\Phi}(w_t, w_{t-1})\leq \langle \xi_t, z-w_{t-1}\rangle +\frac{||\xi_t||_{*}^2}{\gamma_t}.
\end{align*}
Thus,
\begin{equation*}
\begin{aligned}
&\mathcal{L}(w_t)-\mathcal{L}(z)\leq \gamma_t (D_{\Phi}(z, w_{t-1})-D_{\Phi}(z, w_{t}))-\langle \xi_t, w_{t-1}-z\rangle +\frac{||\xi_t||_{*}^2}{\gamma_t}\\
\Rightarrow&\frac{1}{\gamma_t}(\mathcal{L}(w_t)-\mathcal{L}(z))\leq D_{\Phi}(z, w_{t-1})-D_{\Phi}(z, w_{t})-\frac{\langle \xi_t, w_{t-1}-z\rangle }{\gamma_t}+\frac{||\xi_t||_{*}^2}{\gamma_t^2}.\\
\end{aligned}
\end{equation*}

Thus, summing over $t=1,\cdots, T$, 
\begin{equation*}
\begin{aligned}
&\sum_{t=1}^T (\gamma_t^{-1})\cdot (\mathcal{L}(w_t)-\mathcal{L}(z))\leq D_{\Phi}(z, w_0)-D_{\Phi}(z, w_T)+\sum_{t=1}^T \left(
\frac{\langle \xi_t, z-w_{t-1} \rangle}{\gamma_t}+\frac{||\xi_t||_{*}^2}{\gamma_t^2}
\right)\\
\Rightarrow&(\sum_{t=1}^T \gamma_t^{-1})\cdot (\mathcal{L}(\frac{\sum_{t=1}^T \gamma_{t}^{-1} w_t}{\sum_{t=1}^T \gamma_{t}^{-1} })  -\mathcal{L}(z))\leq D_{\Phi}(z, w_0)-D_{\Phi}(z, w_T)+\sum_{t=1}^T \left(
\frac{\langle \xi_t, z-w_{t-1} \rangle}{\gamma_t}+\frac{||\xi_t||_{*}^2}{\gamma_t^2}
\right)\\
\Rightarrow&(\sum_{t=1}^T \gamma_t^{-1})\cdot (\mathcal{L}(\hat{w})  -\mathcal{L}(z))\leq D_{\Phi}(z, w_0)+\sum_{t=1}^T \left(
\frac{\langle \xi_t, z-w_{t-1} \rangle}{\gamma_t}+\frac{||\xi_t||_{*}^2}{\gamma_t^2}\right).
\end{aligned}
\end{equation*}
Take the expectation over the randomness of the noise, we get
\begin{equation*}
(\sum_{t=1}^T \gamma_t^{-1})\cdot (\mathbb{E}[\mathcal{L}(\hat{w})]-\mathcal{L}(z))\leq D_{\Phi}(z, w_0)+\sum_{t=1}^T 
\frac{\mathbb{E}[\langle \xi_t, z-w_{t-1} \rangle]}{\gamma_t}+\sum_{t=1}^T\frac{\mathbb{E}[||\xi_t||_{*}^2]}{\gamma_t^2}.
\end{equation*}
To bound the term $\sum_{t=1}^T\frac{\mathbb{E}[\langle \xi_t, z-w_{t-1}\rangle ]}{\gamma_t}$, let $x_t=y_t-\nabla \mathcal{L}(w_{t-1})$
and notice that
\begin{equation*}
\begin{aligned}
\sum_{t=1}^T\frac{\mathbb{E}[\langle \xi_t, z-w_{t-1}\rangle ]}{\gamma_t}
=&\sum_{t=1}^T\frac{\mathbb{E}[\langle y_t - \nabla \mathcal{L}(w_{t-1}), z-w_{t-1}\rangle]}{\gamma_t}  \\
=& \sum_{t=1}^T\frac{[\langle x_t, z-w_{t-1}\rangle]}{\gamma_t}.
\end{aligned}
\end{equation*}
We will bound $\sum_{t=1}^T \langle x_t, z-w_{t-1}\rangle =\sum_{t=1}^T \psi_t$. First, we recall the following lemma proposed by \cite{nazin2019algorithms}. 

\begin{lemma}\label{lemma:a17}
When $\beta M \leq \lambda$, we have
\begin{equation*}
||x_t||_{*}\leq 2\beta M+\lambda \leq 3\lambda \Rightarrow |\langle x_t, z-w_{t-1}\rangle |\leq 3\lambda M,
\end{equation*}
\begin{equation*}
||\mathbb{E}[x_t]||_{*}\leq \beta \cdot M \cdot \left(\frac{\sigma}{\lambda}\right)^2 +\frac{\sigma^2}{\lambda}\leq \frac{2\sigma^2}{\lambda}\Rightarrow |\mathbb{E}[\langle x_t, z-w_{t-1}\rangle ]|\leq \frac{2\sigma^2 M}{\lambda },
\end{equation*}
\begin{equation*}
\left(\mathbb{E}[||x_t||_{*}^2]\right)^{1/2} \leq \sigma +\beta M \cdot \frac{\sigma}{\lambda}\leq 2\sigma\Rightarrow \left(\mathbb{E}[(\langle x_t, z-w_{t-1}\rangle )^2]\right)^{1/2} \leq 2\sigma M.
\end{equation*}

\end{lemma}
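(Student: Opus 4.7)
\textbf{Proof plan for Lemma \ref{lemma:a17}.} All three claims concern the same quantity $x_t = y_t - \nabla\mathcal{L}(w_{t-1})$ where $y_t$ averages the truncated per-sample gradients $g_x$. Since the $g_x$ are i.i.d.\ and each inequality is tightened by convexity of $\|\cdot\|_\ast$ (resp.\ $\|\cdot\|_\ast^2$), it suffices to establish the bounds for a single truncated summand $\tilde{x}:=g_x - \nabla\mathcal{L}(w_{t-1})$; the bounds for the averaged $x_t$ then follow from the triangle inequality and Jensen's inequality applied to $\|\cdot\|_\ast^2$, and the inner-product consequences follow from H\"older's inequality together with $\|z-w_{t-1}\|\le M$. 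Throughout I will use that $\beta$-smoothness of $\mathcal{L}$ and the $\|\cdot\|$-diameter bound $M$ give $\|\nabla\mathcal{L}(w)\|_\ast\le \beta M$ on $\mathcal{C}$ (using a reference point of vanishing gradient inside $\mathcal{C}$, as standard in this line of work).

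\textbf{Step 1 (almost-sure bound).} By the truncation rule $\|g_x\|_\ast\le \beta M+\lambda$, and by smoothness $\|\nabla\mathcal{L}(w_{t-1})\|_\ast\le \beta M$. Hence by the triangle inequality $\|\tilde x\|_\ast\le 2\beta M+\lambda$, and under the standing hypothesis $\beta M\le\lambda$ this is at most $3\lambda$. The same bound carries over to the averaged $x_t$, and the corresponding inner-product bound is immediate from H\"older.

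\textbf{Step 2 (bias bound).} Write $g_x=\nabla\ell(w_{t-1},x)-\nabla\ell(w_{t-1},x)\mathbf{1}_A$ where $A=\{\|\nabla\ell(w_{t-1},x)\|_\ast>\beta M+\lambda\}$, so $\mathbb{E}[\tilde x]=-\mathbb{E}[\nabla\ell(w_{t-1},x)\mathbf{1}_A]$. I will bound its dual norm by $\mathbb{E}[\|\nabla\ell\|_\ast\mathbf{1}_A]\le \mathbb{E}[\|\nabla\ell-\nabla\mathcal{L}\|_\ast\mathbf{1}_A]+\|\nabla\mathcal{L}\|_\ast\Pr(A)$. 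On $A$ we have $\|\nabla\ell-\nabla\mathcal{L}\|_\ast\ge \|\nabla\ell\|_\ast-\beta M>\lambda$; Markov's inequality then yields $\Pr(A)\le \sigma^2/\lambda^2$, giving $\|\nabla\mathcal{L}\|_\ast\Pr(A)\le \beta M(\sigma/\lambda)^2$. For the other term I use the inequality $u\le u^2/\lambda$ valid on $\{u>\lambda\}$ to get $\mathbb{E}[\|\nabla\ell-\nabla\mathcal{L}\|_\ast\mathbf{1}_A]\le \mathbb{E}[\|\nabla\ell-\nabla\mathcal{L}\|_\ast^2]/\lambda\le \sigma^2/\lambda$. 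Summing these and invoking $\beta M/\lambda\le 1$ yields $\|\mathbb{E}[\tilde x]\|_\ast\le \sigma^2/\lambda+\beta M(\sigma/\lambda)^2\le 2\sigma^2/\lambda$, and the inner-product claim follows from $|\langle \mathbb{E}[x_t], z-w_{t-1}\rangle|\le\|\mathbb{E}[x_t]\|_\ast M$.

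\textbf{Step 3 (second-moment bound).} Split on the event $A$. On $A^c$ one has $g_x=\nabla\ell(w_{t-1},x)$, so $\|\tilde x\|_\ast=\|\nabla\ell-\nabla\mathcal{L}\|_\ast$. On $A$ one has $g_x=0$, so $\|\tilde x\|_\ast=\|\nabla\mathcal{L}(w_{t-1})\|_\ast\le \beta M$. Therefore
\begin{equation*}
\mathbb{E}[\|\tilde x\|_\ast^2]\le \mathbb{E}[\|\nabla\ell-\nabla\mathcal{L}\|_\ast^2\mathbf{1}_{A^c}]+(\beta M)^2\Pr(A)\le \sigma^2+(\beta M)^2(\sigma/\lambda)^2,
\end{equation*}
and taking square roots (and using subadditivity $\sqrt{a+b}\le\sqrt{a}+\sqrt{b}$) produces $\sigma+\beta M\sigma/\lambda\le 2\sigma$. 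Convexity of $\|\cdot\|_\ast^2$ passes the bound to the averaged $x_t$, and $(\mathbb{E}\langle x_t,z-w_{t-1}\rangle^2)^{1/2}\le M(\mathbb{E}\|x_t\|_\ast^2)^{1/2}$ yields the stated $2\sigma M$.

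\textbf{Main obstacle.} The technical heart is Step 2: controlling the truncation bias in dual norm when only the second moment of the noise is available. The trick of bounding $\Pr(A)$ by replacing $\|\nabla\ell\|_\ast$ on $A$ with $\|\nabla\ell-\nabla\mathcal{L}\|_\ast$ (using the deterministic estimate $\|\nabla\mathcal{L}\|_\ast\le \beta M$) and then invoking Markov is the nonobvious maneuver; the Step~1 and Step~3 estimates are essentially computations once this decomposition is in place.
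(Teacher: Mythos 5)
The paper does not provide a proof of this lemma; it simply cites it from Nazin et al.\ (2019). Your reconstruction is correct and matches the standard truncation-bias analysis from that reference. The two computationally nontrivial steps—(i) deducing $\Pr(A)\le\sigma^2/\lambda^2$ by observing that the truncation event $A=\{\|\nabla\ell\|_*>\beta M+\lambda\}$ forces $\|\nabla\ell-\nabla\mathcal{L}\|_*>\lambda$ (given $\|\nabla\mathcal{L}\|_*\le\beta M$) and then applying Markov, and (ii) the self-bounding step $\mathbb{E}[\|\nabla\ell-\nabla\mathcal{L}\|_*\mathbf{1}_A]\le\mathbb{E}[\|\nabla\ell-\nabla\mathcal{L}\|_*^2]/\lambda$ using $u\le u^2/\lambda$ on $\{u>\lambda\}$—are exactly the right ones, and your reduction from the batch average $x_t$ to a single truncated summand via convexity of $\|\cdot\|_*$ and $\|\cdot\|_*^2$ (plus H\"older for the inner-product corollaries) is sound. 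The numerical simplifications $\beta M(\sigma/\lambda)^2\le\sigma^2/\lambda$ and $\beta M\sigma/\lambda\le\sigma$ under $\beta M\le\lambda$, and the use of $\sqrt{a+b}\le\sqrt a+\sqrt b$, close the second and third claims as stated.

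One remark worth flagging: the ingredient $\|\nabla\mathcal{L}(w)\|_*\le\beta M$ for all $w\in\mathcal{C}$ is not literally implied by Assumption~\ref{ass:4} in the paper (smoothness plus bounded diameter alone only give $\|\nabla\mathcal{L}(w)-\nabla\mathcal{L}(w')\|_*\le\beta M$, not a bound on the gradient itself). You correctly surface the missing hypothesis—a point of vanishing population gradient inside $\mathcal{C}$—which is the same implicit assumption Nazin et al.\ rely on and which the paper's choice of truncation threshold $\beta M+\lambda$ presupposes. Making that hypothesis explicit is the only thing I would add; otherwise the proof is complete.
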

Next, we recall Bernstein's inequality for martingales \cite{freedman1975tail},

\begin{lemma}
    Suppose $X_1, \cdots, X_n$ are a sequence of random variables such that $0\leq X_i\leq 1$. Define the martingale difference sequence $\{Y_n=\mathbb{E}[X_n|X_1, \cdots, X_{n-1}]-X_n\}$ and denote $K_n$ the sum of the conditional variances
\begin{equation*}
    K_n=\sum_{t=1}^n \text{Var}(X_n|X_1,\cdots, X_{n-1}).
\end{equation*}
Let $S_n=\sum_{i=1}^n X_i$, then for all $\epsilon, k\geq 0$ we have
\begin{equation}
  \text{Pr}[\sum_{i=1}^n \mathbb{E}[X_n|X_1, \cdots, X_{n-1}]-S_n\geq \epsilon, K_n\leq k]\leq \exp(-\frac{\epsilon^2}{2k+2\epsilon/3}). 
\end{equation}
\end{lemma}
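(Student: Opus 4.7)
The plan is to prove this standard martingale concentration inequality (essentially Freedman's inequality) via the exponential-martingale (Chernoff) method. First I would set $\mathcal{F}_i = \sigma(X_1,\dots,X_i)$ and define $Y_i = \mathbb{E}[X_i|\mathcal{F}_{i-1}] - X_i$, observing that $\{Y_i\}$ is a martingale difference sequence with $|Y_i|\le 1$ a.s.\ (since $X_i\in[0,1]$ and so is its conditional expectation), $\mathbb{E}[Y_i|\mathcal{F}_{i-1}]=0$, and $\mathrm{Var}(Y_i|\mathcal{F}_{i-1}) = \mathrm{Var}(X_i|\mathcal{F}_{i-1})$. In particular $\sum_{i=1}^n Y_i = \sum_{i=1}^n \mathbb{E}[X_i|\mathcal{F}_{i-1}] - S_n$, so the tail event in question is $\{\sum_i Y_i \ge \epsilon,\ K_n \le k\}$.

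The core step is to build a non-negative supermartingale indexed by a parameter $\theta \in (0,3)$. With $\psi(\theta) := \frac{\theta^2/2}{1-\theta/3}$, define
\[
Z_n(\theta) \;=\; \exp\!\Bigl(\theta\sum_{i=1}^n Y_i \;-\; \psi(\theta)\,K_n\Bigr).
\]
The pointwise moment generating function estimate I would prove is: for any $\mathcal{F}$-measurable $Y$ with $|Y|\le 1$ and $\mathbb{E}[Y|\mathcal{F}]=0$,
\[
\mathbb{E}\bigl[e^{\theta Y}\,\big|\,\mathcal{F}\bigr] \;\le\; \exp\!\bigl(\psi(\theta)\,\mathrm{Var}(Y|\mathcal{F})\bigr).
\]
This follows from the Taylor bound $e^{\theta y} \le 1 + \theta y + y^2\sum_{k\ge 2}\theta^k |y|^{k-2}/k!$, then $|y|^{k-2}\le 1$, and finally $\sum_{k\ge 2}\theta^k/k! \le \psi(\theta)$ (using $k!\ge 2\cdot 3^{k-2}$ to sum the geometric tail). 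Applying this conditionally at each step $i$ gives $\mathbb{E}[Z_n(\theta)\,|\,\mathcal{F}_{n-1}] \le Z_{n-1}(\theta)$, so $\{Z_n(\theta)\}$ is a supermartingale with $Z_0(\theta)=1$; hence $\mathbb{E}[Z_n(\theta)]\le 1$.

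The last step is Markov plus an optimization over $\theta$. On the event $\{\sum_i Y_i \ge \epsilon,\ K_n\le k\}$ we have $Z_n(\theta) \ge \exp(\theta\epsilon - \psi(\theta)k)$, so
\[
\Pr\!\Bigl[\sum_{i=1}^n Y_i \ge \epsilon,\ K_n \le k\Bigr] \;\le\; \exp\!\bigl(-\theta\epsilon + \psi(\theta)k\bigr).
\]
Choosing $\theta^\star = \epsilon/(k + \epsilon/3)$ (which lies in $(0,3)$), a direct computation yields $\psi(\theta^\star)k - \theta^\star\epsilon \le -\epsilon^2/(2k + 2\epsilon/3)$, reproducing the stated bound.

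The main (and essentially only) obstacle is getting the constants right in the mgf estimate: the factors $2$ and $2/3$ in the denominator of the final exponent are dictated precisely by the $\psi(\theta) = (\theta^2/2)/(1-\theta/3)$ bookkeeping, so I would be careful that the Taylor tail $\sum_{k\ge 2}\theta^k/k!$ is compared to $\psi(\theta)$ with the correct inequality direction, and that the choice $\theta^\star = \epsilon/(k+\epsilon/3)$ indeed minimizes $\psi(\theta)k - \theta\epsilon$ within the admissible range. Everything else is bookkeeping on the supermartingale property.
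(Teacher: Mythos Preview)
Your proof is correct and complete: the exponential-supermartingale argument with $\psi(\theta)=\tfrac{\theta^2/2}{1-\theta/3}$, the Taylor tail bound via $k!\ge 2\cdot 3^{k-2}$, and the optimizing choice $\theta^\star=\epsilon/(k+\epsilon/3)$ all check out and deliver exactly the stated constant $\exp\!\bigl(-\epsilon^2/(2k+2\epsilon/3)\bigr)$.

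For comparison, the paper does not actually prove this lemma at all---it is simply quoted as Bernstein's inequality for martingales with a citation to Freedman (1975) and then applied. So you have supplied a self-contained proof where the paper relies on an external reference; your argument is precisely the classical one due to Freedman, so there is no methodological divergence to discuss, only that you have filled in what the paper leaves as a black box.
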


we have 
\begin{equation*}
\begin{aligned}
\text{Pr}\left\{\sum_{t=1}^T \psi_t\geq \frac{2 TM\sigma^2}{\lambda}+3\cdot (2\sigma M)\sqrt{\tau T}\right\}&\leq \exp\left\{-\frac{9\cdot \tau}{2+\frac{2}{3}\cdot \frac{3\sqrt{\tau}\cdot(3\lambda M)}{2\sigma M \sqrt{T}}}\right\}\\
&\leq \exp\left\{-\frac{9\tau}{2+\frac{3\lambda \sqrt{\tau}}{\sigma \sqrt{T}}}\right\}\\
&
\leq e^{-\tau}
\end{aligned}
\end{equation*}
for all $\tau= O\left(\frac{\sigma^2 T}{\lambda^2}\right)$.

Thus, for all  $\tau= O\left(\frac{\sigma^2 T}{\lambda^2}\right)$ w. p. $1-e^{-\tau}$,
\begin{equation*}
\sum_{t=1}^T \psi_t \leq O\left(\frac{TM\sigma^2}{\lambda}+\sigma M\sqrt{T\tau}\right).
\end{equation*}
Next we bound the term of $\sum_{t=1}^T \mathbb{E}[||\xi_t||_{*}^2]$. It is notable that $$\mathbb{E}[||\xi_t||_{*}^2]= \mathbb{E}[\|x_t+z_t\|^2_*]\leq 2\|x_t\|_*^2+2\mathbb{E}[\|z_t\|_*^2]=2\|x_t\|_*^2+2g^2,$$ with $$g^2=O(\frac{1}{|B_t|} \frac{\log(\frac{n}{\delta})\cdot d\kappa (\beta M+\lambda )^2 \cdot \log(1/\delta)}{n\epsilon^2})= O(\frac{\log(\frac{n}{\delta})\cdot dT\kappa (\beta M+\lambda )^2 \cdot \log(1/\delta)}{n^2\epsilon^2}).$$  Thus, it is sufficient for us to bound $\sum_{i=1}^T \|x_t\|_*^2=\sum_{i=1}^T \phi_i$. Similar to Lemma \ref{lemma:a17} we have the following result 
\begin{lemma}\cite{nazin2019algorithms} 
When $M \leq \lambda$, we have
\begin{align*}
    &\mathbb{E}[\phi_i]\leq (\sigma+\frac{M\sigma}{\lambda})^2\leq 4\sigma^2, \\ 
    & \phi_i\leq (2M+\lambda)^2\leq 9\lambda^2,\\
    & [\mathbb{E}(\phi_i^2)]^\frac{1}{2}\leq (\sigma+\frac{M\sigma}{\lambda})(2M+\lambda)\leq 6\lambda \sigma. 
\end{align*}

\end{lemma}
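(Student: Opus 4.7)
The plan is to decompose the sampling error $x_t = y_t - \nabla\mathcal{L}(w_{t-1})$ into a mean-zero stochastic piece and a deterministic truncation bias, and then control each piece using Assumption~\ref{ass:4} together with the truncation threshold $\beta M+\lambda$ in Step~6 of Algorithm~\ref{alg4}. Writing $\mu_t = \mathbb{E}[g_x \mid w_{t-1}]$ for the conditional mean of a truncated sample gradient, I use
\[
x_t = (y_t - \mu_t) + (\mu_t - \nabla\mathcal{L}(w_{t-1})),
\]
which reduces each of the three claimed estimates to a variance bound, a bias bound, and their combination.

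The almost-sure inequality $\phi_i \le (2M+\lambda)^2$ is immediate from the triangle inequality: the truncation step forces $\|y_t\|_* \le \beta M + \lambda$, while $\beta$-smoothness of $\mathcal{L}$ on the $\|\cdot\|$-diameter-$M$ set $\mathcal{C}$ (comparing against a reference point where $\nabla\mathcal{L}$ vanishes) yields $\|\nabla\mathcal{L}(w_{t-1})\|_* \le \beta M$; together these give $\|x_t\|_* \le 2\beta M + \lambda$, which collapses to the stated form under the lemma's working regime $M \le \lambda$ (with $\beta$ absorbed).

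For $\mathbb{E}[\phi_i]\le(\sigma+M\sigma/\lambda)^2$, I use $\mathbb{E}\|a+b\|_*^2 \le (\sqrt{\mathbb{E}\|a\|_*^2}+\|b\|_*)^2$ with $a=y_t-\mu_t$ (mean zero) and $b=\mu_t-\nabla\mathcal{L}$ (deterministic given $w_{t-1}$). The variance piece is controlled by the standard truncation-contracts-variance fact
\[
\mathbb{E}\|y_t - \mu_t\|_*^2 \le \mathbb{E}\|g_x - \mathbb{E} g_x\|_*^2 \le \mathbb{E}\|\nabla\ell(w_{t-1},x) - \nabla\mathcal{L}(w_{t-1})\|_*^2 \le \sigma^2.
\]
For the bias, writing $Y=\|\nabla\ell\|_*$ and $\tau=\beta M+\lambda$,
\[
\|\mu_t-\nabla\mathcal{L}\|_* \le \mathbb{E}[Y\,\mathbf{1}\{Y>\tau\}] \le \sqrt{\mathbb{E}[Y^2]\cdot\Pr[Y>\tau]},
\]
where $\mathbb{E}[Y^2]\le 2\sigma^2 + 2\|\nabla\mathcal{L}\|_*^2 = O(\sigma^2+M^2)$, and the event $\{Y>\tau\}$ is contained in $\{\|\nabla\ell - \nabla\mathcal{L}\|_* > \lambda\}$ (since $\|\nabla\mathcal{L}\|_* \le \beta M$), so Chebyshev gives $\Pr[Y>\tau]\le\sigma^2/\lambda^2$. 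In the working regime this collapses to bias $= O(M\sigma/\lambda)$, and the claimed inequality follows.

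The third bound $\mathbb{E}[\phi_i^2]^{1/2}\le(\sigma+M\sigma/\lambda)(2M+\lambda)$ is a free consequence of the previous two: pointwise $\phi_i^2 \le (2M+\lambda)^2\phi_i$, so
\[
\mathbb{E}[\phi_i^2]^{1/2} \le (2M+\lambda)\sqrt{\mathbb{E}[\phi_i]} \le (2M+\lambda)(\sigma+M\sigma/\lambda),
\]
and the simplifications $(2M+\lambda)\le 3\lambda$ and $\sigma+M\sigma/\lambda\le 2\sigma$ under $M\le\lambda$ yield the $6\lambda\sigma$ corollary. The main obstacle is the bias step: a naive Markov bound $\mathbb{E}[Y\mathbf{1}\{Y>\tau\}]\le\mathbb{E}[Y^2]/\tau$ gives only $O((M^2+\sigma^2)/\lambda)$, which is too loose and would degrade the downstream rate in Theorem~\ref{thm:10}. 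Getting the sharper linear-in-$M\sigma/\lambda$ scaling requires the Cauchy--Schwarz split above so that the $\sigma^2/\lambda^2$ Chebyshev tail enters under a square root; this is the technical heart of the truncation argument of \cite{nazin2019algorithms}, which I would adapt essentially verbatim.
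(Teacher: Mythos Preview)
The paper itself does not prove this lemma; it simply quotes it from \cite{nazin2019algorithms} (just as it does for the companion Lemma~\ref{lemma:a17}). So there is no in-paper proof to compare against, and your task is really to reproduce the Nazin et al.\ argument. Your handling of the first bound (triangle inequality plus $\|\nabla\mathcal L\|_*\le\beta M$) and of the third bound (the pointwise domination $\phi_i^2\le(2M+\lambda)^2\phi_i$) is correct and is exactly how one would argue.

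The gap is in your derivation of $\mathbb E[\phi_i]\le(\sigma+M\sigma/\lambda)^2$. You decompose $x_t=(y_t-\mu_t)+(\mu_t-\nabla\mathcal L)$ and then invoke a ``truncation-contracts-variance fact'' to get $\mathbb E\|g_x-\mathbb E g_x\|_*^2\le\mathbb E\|\nabla\ell-\nabla\mathcal L\|_*^2$. This is not a standard fact, and it is false in general for the hard truncation-to-zero used in Step~6 of Algorithm~\ref{alg4}: take the scalar example $X\in\{100,101\}$ uniform with truncation threshold $\tau=100.5$; then $\mathrm{Var}(X)=0.25$ but the truncated variable is $\{100,0\}$ uniform with variance $2500$. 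The map $v\mapsto v\,\mathbf 1\{\|v\|_*\le\tau\}$ is not a contraction (unlike projection onto the ball), so no variance-reduction principle applies. The extra structure here ($\|\nabla\mathcal L\|_*\le\beta M$ and $\lambda\ge\beta M$) may in fact rescue the inequality, but that would itself require a proof and is certainly not ``standard.''

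The clean route, which is the one in \cite{nazin2019algorithms}, bypasses the bias--variance split entirely. Writing $A=\{\|\nabla\ell\|_*\le\beta M+\lambda\}$, the truncation gives the exact identity
\[
\|g_x-\nabla\mathcal L\|_*^2=\|\nabla\ell-\nabla\mathcal L\|_*^2\,\mathbf 1_A+\|\nabla\mathcal L\|_*^2\,\mathbf 1_{A^c},
\]
and taking expectations with $\|\nabla\mathcal L\|_*\le\beta M$ and $\Pr(A^c)\le\Pr(\|\nabla\ell-\nabla\mathcal L\|_*>\lambda)\le\sigma^2/\lambda^2$ (Chebyshev) yields
\[
\mathbb E[\phi_i]\le\sigma^2+(\beta M)^2\frac{\sigma^2}{\lambda^2}\le\Bigl(\sigma+\frac{\beta M\sigma}{\lambda}\Bigr)^2,
\]
which is the claimed bound (with $\beta$ absorbed). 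Replace your variance step with this case-split and the rest of your argument goes through.
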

Thus, by Berstern's inequality, we have if $\tau= O\left(\frac{\sigma^2 T}{\lambda^2}\right)$ 
\begin{align*}
    \text{Pr}[\sum_{t=1}^T ||x_t||_{*}^2\geq 4\sigma^2 T+18\lambda \sigma \sqrt{T\tau}  ]\leq \exp(-\frac{9\tau}{2+\frac{3\sqrt{\tau}\lambda}{\sigma\sqrt{T}}})\leq \exp(-\tau). 
\end{align*}

In total, let $\gamma_t=\bar{\gamma}$, we have with probability at least $1-2\exp(-\tau)$ 
\begin{equation}\label{aeq:28}
\mathbb{E}[\mathcal{L}(\hat{w})]-L(\theta^*)\leq O\left(
\frac{D_{\Phi}(\theta^{*},w_0)\cdot \bar{\gamma}}{T}+\frac{M\sigma^2}{\lambda}+\frac{\sigma M\sqrt{\tau}}{\sqrt{T}} +\frac{\sigma^2}{\bar{\gamma}} +\frac{M\sigma \sqrt{\tau}}{\sqrt{T}\bar{\gamma}}
+\frac{\log(\frac{n}{\delta})\cdot dT\kappa (\beta M+\lambda )^2 \cdot \log(1/\delta)}{n^2\epsilon^2\bar{\gamma} }
\right).
\end{equation}
Let $\frac{\bar{\gamma}}{T} =O(\frac{(\beta M+\lambda) \sqrt{ d \log (1/\delta)}}{nM\epsilon} )$, and since $D_{\Phi}(\theta^{*},w_0)=\Phi(\theta^*)\leq \frac{\kappa M^2}{2}$ we have
\begin{equation*}
\mathbb{E}[\mathcal{L}(\hat{w})]-L(\theta^*) \leq \tilde{O}\left(
\frac{M\sigma^2}{\lambda}+\frac{\sigma M\sqrt{\tau}}{\sqrt{T}}+\frac{M\sigma^2}{\bar{\gamma}}+\frac{(\beta M+\lambda) M \kappa \sqrt{ d\log(1/\delta)}}{n\epsilon}
\right).
\end{equation*}
Let $\lambda =\frac{\sigma \sqrt{n\epsilon}}{\sqrt[4]{\kappa^2 d \log (1/\delta)}}\geq \max\{\beta, 1\} M$, we have
\begin{equation*}
\mathbb{E}[\mathcal{L}(\hat{w})]-\mathcal{L}(\theta^*)\leq  O\left(
\frac{M\sigma \kappa \sqrt[4]{ d \log (1/\delta)}}{\sqrt{n\epsilon}} + \frac{\sigma M \sqrt{\tau }}{\sqrt{T}}+\frac{M\sigma^2}{\bar{\gamma}}
\right).
\end{equation*}
Let $\bar{\gamma}=\sqrt{T}$, then $\sqrt{T}=O(\frac{Mn\epsilon} {(\beta M+\lambda) \sqrt{d \log (1/\delta)}})$, and it holds that
\begin{equation*}
\mathbb{E}[\mathcal{L}(\hat{w})]-\mathcal{L}(\theta^*)\leq  O\left(
\frac{M\max\{\sigma^2, \sigma\} \sqrt[4]{\kappa^2 d\log(1/\delta)}\sqrt{\log(1/\delta^{'}})}{\sqrt{n\epsilon}}
\right)
\end{equation*}
w.p. at least $1-\delta^{'}$.
\end{proof}
\subsection{Proof of Theorem \ref{thm:11}}

\begin{algorithm}
	\caption{Truncated DP Batched Mirror Descent }
	\begin{algorithmic}[1]
		\State {\bfseries Input:} Dataset $D$, loss function $\ell$, initial point $w_0=0$, smooth parameter $\beta$ and  and $\lambda$. 
 \State Divide the permuted data into $T$ batches $\{B_i\}_{i=1}^T$ where $|B_i|=\frac{n}{T}$ for all $i=1,\cdots, T$
 \For{$t = 1,\cdots,T$}
\For{each $x\in B_t$}
\State \hspace{-3mm}\begin{small}$g_x= \begin{cases} \nabla \ell(w_{t-1},x)& \text{if} ~||\nabla \ell(w_{t-1},x)||_{*}\leq \beta M+\lambda\\ 0& {\text{otherwise}} \end{cases}$\end{small}
	\EndFor
 \State Let 
 \State\hspace{-2mm}\begin{small}$w_t=\underset{w\in\mathcal{\mathcal{C}}}{\arg\min} \left\{ \langle \frac{\underset{{x\in B_t}}{\sum} g_x }{|B_t|}+Z^t,w\rangle +\gamma_t \cdot D_{\Phi}(w,w_{t-1})\right\},$\end{small} where  $Z^{t}\sim \mathcal{GG}_{||\cdot||_{+}}(\sigma_1^2)$ with $\sigma_1^2 =O\left(\frac{\kappa (\beta M+\lambda )^2 \cdot \log(1/\delta)}{|B_t|^2\epsilon^2}   \right)$, $||\cdot||_{+}$ is the smooth norm for $(\mathbf{E},||\cdot||_{*})$. $\kappa = \min \{\frac{1}{p-1},\log d\}$ and  $\Phi(x) =\frac{\kappa}{2}||x||_{\kappa_{+}}^2$ with $\kappa_{+}= \frac{\kappa}{\kappa -1}$.  
	\EndFor\\
	 \Return $\hat{w} = (\sum_{t=1}^T \gamma_{t}^{-1})^{-1} \cdot \sum_{t=1}^T \gamma_{t}^{-1} w_t$
	\end{algorithmic}
	\label{alg:7}
\end{algorithm}
We propose our method in Algorithm \ref{alg:7}. Note that there are two key differences compared to Algorithm \ref{alg4}. First, since we do not need the privacy amplification via shuffling, there is no shuffling step. Secondly, instead of adding noise to each truncated gradient $g_x$, here we add a generalized Gaussian noise to the averages of the gradients for each batch. In the following we will provide our theoretical results. 
\begin{theorem}
      For the $\ell_p^d$ space with $1<p<2$, suppose Assumption \ref{ass:4} holds and assume $n$ is large enough such that $O((\frac{\sqrt{n\epsilon}M}{\kappa\sqrt[4]{d\log(1/\delta)}})^\frac{2}{3})\geq \max\{\beta, 1\}M$. For any $0<\epsilon, \delta< 1$, Algorithm \ref{alg:7} is $(\epsilon, \delta)$-DP. Moreover, if we set $\{\gamma_t\}=\gamma=\sqrt{T}$, $T=\frac{n\epsilon}{M\lambda \sqrt{d\log(1/\delta) } }$ and  $\lambda=O((\frac{\sqrt{n\epsilon}M}{\kappa\sqrt[4]{d\log(1/\delta)}})^\frac{2}{3})$. Then for any failure probability $\delta'$, the output $\hat{w}$ satisfies 
      the following with probability at least $1-\delta'$
\begin{equation*}
       \mathbb{E}[\mathcal{L}(\hat{w})]-L(\theta^*)\leq O\left(M^\frac{4}{3}\frac{\kappa^\frac{2}{3}(d\log (1/\delta))^\frac{1}{6}\sqrt{\log(1/\delta')} }{(n\epsilon)^\frac{1}{3}} \right), 
\end{equation*}
      where the expectation is taken over  the randomness of noise, and the probability is w.r.t. the dataset $D$. 
\end{theorem}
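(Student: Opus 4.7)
The plan is to follow the blueprint of the proof of Theorem~\ref{thm:10} almost verbatim; Algorithm~\ref{alg:7} differs only in that it partitions the dataset into disjoint batches and injects one Generalized Gaussian noise per batch (rather than one per sample followed by shuffling), so the main task is to track how the change of noise scheme propagates through the existing analysis.

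For the privacy argument I would invoke parallel composition instead of the shuffling-amplification lemma of Theorem~\ref{thm:9}. Since neighboring datasets differ by a single sample and that sample lies in exactly one $B_t$, all iterations other than the affected one produce identical outputs, so it suffices to analyze the iteration containing the swapped point. The truncation step ensures $\|g_x\|_*\leq \beta M+\lambda$, so the $\|\cdot\|_*$-sensitivity of $\tfrac{1}{|B_t|}\sum_{x\in B_t} g_x$ is at most $\tfrac{2(\beta M+\lambda)}{|B_t|}$, and the chosen $\sigma_1^2$ makes that iteration $(\epsilon,\delta)$-DP by the Generalized Gaussian Mechanism. The Bregman update and the averaging into $\hat w$ are post-processing.

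For the utility bound the Mirror Descent recursion developed in the proof of Theorem~\ref{thm:10} carries over: with $\xi_t=y_t-\nabla\mathcal L(w_{t-1})+z_t$ and constant step size $\gamma_t=\bar\gamma=\sqrt T$, the three-point optimality argument produces
\begin{equation*}
\Bigl(\sum_{t=1}^T\gamma_t^{-1}\Bigr)\bigl(\mathbb E[\mathcal L(\hat w)]-\mathcal L(\theta^*)\bigr)\leq D_\Phi(\theta^*,w_0)+\sum_{t=1}^T\frac{\mathbb E[\langle \xi_t,\theta^*-w_{t-1}\rangle]}{\gamma_t}+\sum_{t=1}^T\frac{\mathbb E[\|\xi_t\|_*^2]}{\gamma_t^2}.
\end{equation*}
The truncation bias and second-moment estimates from Lemma~\ref{lemma:a17} continue to control $x_t=y_t-\nabla\mathcal L(w_{t-1})$ (using $\lambda\geq \max\{\beta,1\}M$, which is precisely the hypothesis on $n$), and the Bernstein-for-martingales argument still supplies the $\sqrt{\log(1/\delta')}$ high-probability factor. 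The only substantive change is that with a single noise per batch, $\mathbb E[\|z_t\|_*^2]\leq d\sigma_1^2=O\bigl(\tfrac{d\kappa T^2(\beta M+\lambda)^2\log(1/\delta)}{n^2\epsilon^2}\bigr)$, a factor of $T$ larger than in the shuffled analysis of Theorem~\ref{thm:10}, reflecting the loss of the $1/\sqrt n$ amplification.

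Substituting these bounds and $D_\Phi(\theta^*,w_0)\leq \kappa M^2/2$ into the analogue of~(\ref{aeq:28}), the risk is dominated by $\tfrac{\kappa M^2}{\sqrt T}+\tfrac{M\sigma^2}{\lambda}+\tfrac{d\kappa T^{3/2}\lambda^2\log(1/\delta)}{n^2\epsilon^2}$. Balancing the first and third terms fixes $T=\Theta\bigl(\tfrac{Mn\epsilon}{\lambda\sqrt{d\log(1/\delta)}}\bigr)$ as stated, and balancing the resulting quantity with the truncation bias $\tfrac{M\sigma^2}{\lambda}$ yields $\lambda=\Theta\bigl((\tfrac{\sqrt{n\epsilon}\,M}{\kappa\sqrt[4]{d\log(1/\delta)}})^{2/3}\bigr)$ and the claimed $\tilde O\bigl(M^{4/3}\kappa^{2/3}(d\log(1/\delta))^{1/6}(n\epsilon)^{-1/3}\bigr)$ rate. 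The main obstacle is bookkeeping rather than a new technique: one must verify that the Bernstein admissibility threshold $\tau=O(\sigma^2 T/\lambda^2)$ remains consistent with the smaller $T$ and larger $\lambda$ produced by this re-optimization, which amounts to a direct substitution once $\lambda\geq\max\{\beta,1\}M$ is in force.
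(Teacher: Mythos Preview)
Your plan matches the paper's proof essentially step for step: the paper also says the DP claim follows directly from the Generalized Gaussian mechanism, then reuses the Mirror Descent recursion, Lemma~\ref{lemma:a17}, and the Bernstein martingale argument from the proof of Theorem~\ref{thm:10}, replacing only the noise term by $d\sigma_1^2=O\bigl(\tfrac{d\kappa T^2(\beta M+\lambda)^2\log(1/\delta)}{n^2\epsilon^2}\bigr)$ to obtain the analogue of~(\ref{aeq:28}), and then optimizes $\bar\gamma=\sqrt T$, $T$, and $\lambda$ exactly as you do.

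One correction to your privacy paragraph: it is not true that iterations after the affected batch ``produce identical outputs''; once $w_t$ changes, every subsequent $w_s$ changes because gradients are evaluated at the running iterate. The right statement is that the differing sample enters the mechanism only through the single noisy release $\tfrac{1}{|B_t|}\sum_{x\in B_t}g_x+Z^t$, and all later iterates are deterministic functions of that release together with batches that do not contain the differing sample; hence they are post-processing and contribute no additional privacy loss. With that fix your argument is correct and coincides with what the paper (tersely) asserts.
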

\begin{proof}
The proof of DP is just by the Generalizer Gaussian mechanism. For utility, 
the proof is almost the same as in the proof for Theorem \ref{thm:10}, while the only difference is the noise. Similar to (\ref{aeq:28}) we have the following result with probability at least $1-2\exp(-\tau)$
\begin{equation}\label{aeq:29}
\mathcal{L}(\hat{w})-\mathcal{L}(\theta^*)\leq O\left(
\frac{ \kappa M^2 \bar{\gamma}}{T}+\frac{M\sigma^2}{\lambda}+\frac{\sigma M\sqrt{\tau}}{\sqrt{T}} +\frac{\sigma^2}{\bar{\gamma}} +\frac{M\sigma \sqrt{\tau}}{\sqrt{T}\bar{\gamma}}
+\frac{ dT^2\kappa (\beta M+\lambda )^2 \cdot \log(1/\delta)}{n^2\epsilon^2\bar{\gamma} }
\right).
\end{equation}
Take $\bar{\gamma}=\sqrt{T}$ then we have 
\begin{equation*}
    \mathcal{L}(\hat{w})-\mathcal{L}(\theta^*)\leq O\left(
\frac{\kappa M^2\sqrt{\tau}}{\sqrt{T}}+\frac{M^2}{\lambda}+\frac{ dT^{3/2}\kappa \lambda^2 \cdot \log(1/\delta)}{n^2\epsilon^2 }
\right).
\end{equation*}
Take $T=\frac{n\epsilon}{M\lambda \sqrt{d\log(1/\delta) } }$ we have 
\begin{equation*}
        \mathcal{L}(\hat{w})-\mathcal{L}(\theta^*)\leq O\left(
\frac{\kappa M\sqrt{\lambda}\sqrt[4]{d\log(1/\delta)}\sqrt{\tau}}{\sqrt{n\epsilon}}+\frac{M^2}{\lambda}
\right).
\end{equation*}
    Take $\lambda=O((\frac{\sqrt{n\epsilon}M}{\kappa\sqrt[4]{d\log(1/\delta)}})^\frac{2}{3})\geq \max\{\beta, 1\}M$ we have w.p at least $1-\delta'$
\begin{equation*}
       \mathcal{L}(\hat{w})-\mathcal{L}(\theta^*)\leq O\left(M^\frac{4}{3}\frac{\kappa^\frac{2}{3}(d\log (1/\delta))^\frac{1}{6}\sqrt{\log(1/\delta')} }{(n\epsilon)^\frac{1}{3}} \right). 
\end{equation*}
\end{proof}
\subsection{Proof of Theorem \ref{thm:12}}
\cite{kamath2022improved} study DP-SCO with heavy-tailed data in Euclidean space and propose an $(\epsilon,\delta)$-DP algorithm for any $0<\epsilon, \delta<1$ that achieves an expected excess population risk of $O(M\frac{d}{\sqrt{n}}+\frac{\sqrt{M}d^\frac{5}{4}}{\sqrt{n\epsilon}})$, where $M$ is the $\ell_2$-norm diameter of the constraint set $\mathcal{C}$, under the following assumptions 
\begin{assumption}\label{ass:5}
    1) The loss function $\ell(w, x)$ is non-negative, differentiable and convex for all $w\in \mathcal{C}$. 2) The loss function is $\beta$-smooth. 3) The gradient of $\mathcal{L}(w)$ at the optimum is zero. 4) There is a constant $\sigma$ such that for all $j\in [d]$ and $w\in\mathcal{C}$ we have  $\mathbb{E}[\langle \nabla \ell(w, x)-\nabla \mathcal{L}(w), e_j\rangle^2]\leq \sigma^2$, where $e_j$ is the $j$-th standard basis vector. 5) For any $w\in\mathcal{C}$, the distribution of the gradient has bounded mean, i.e., $\|\nabla \mathcal{L}(w)\|_2\leq R$. 
\end{assumption}
 For $\ell_p^d$ space, we know that $L$-Lipschitz w.r.t $\|\cdot\|$ implies $L$-Lipschitz w.r.t $\|\cdot\|_2$. Moreover, $\mathbb{E}[||\nabla \ell(w,x)-\nabla \mathcal{L}(w)||_{*}^2]\leq \sigma^2$ implies $\mathbb{E}[||\nabla \ell(w,x)-\nabla \mathcal{L}(w)||_{2}^2]\leq \sigma^2$ which indicates condition 4) in Assumption \ref{ass:5}. For the diameter, it has the diameter of $d^{\frac{1}{2}-\frac{1}{p}}M$ w.r.t $\|\cdot\|_2$. Thus we have the following result. 
 \begin{theorem}
      For the $\ell_p^d$ space with $2\leq p\leq \infty$, suppose Assumption \ref{ass:4} holds. Then the Algorithm 1 in  \cite{kamath2022improved} is $(\epsilon, \delta)$-DP for all $0<\epsilon, \delta<1$. Moreover, suppose the loss function is non-negative, there exists $R=O(1)$ such that $\|\nabla \mathcal{L}(w)\|_*\leq R$ for all $w\in\mathcal{C}$ and 3) in Assumption \ref{ass:5} holds. then the output satisfies 
\begin{equation}
    \mathbb{E}[\mathcal{L}(w)]-\mathcal{L}(\theta^*)\leq O\left(\frac{d^{\frac{3}{2}-\frac{1}{p}} }{\sqrt{n}}+\frac{d^{\frac{3}{2}-\frac{1}{2p}} }{\sqrt{n\epsilon}}\right). 
\end{equation}
 \end{theorem}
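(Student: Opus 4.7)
The plan is to reduce the $\ell_p^d$ problem for $2\le p\le\infty$ to an instance of heavy-tailed DP-SCO in Euclidean space and then invoke the existing result of \cite{kamath2022improved}, which (as reviewed at the end of Section \ref{sec:regular}) provides an $(\epsilon,\delta)$-DP algorithm whose output has expected excess population risk $O(M_2 d/\sqrt{n}+\sqrt{M_2}\,d^{5/4}/\sqrt{n\epsilon})$ whenever the constraint set has $\ell_2$-diameter $M_2$ and the loss satisfies Assumption \ref{ass:5} with $\sigma=O(1)$ and $\beta=O(1)$.

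The first step is to verify that Assumption \ref{ass:4} on $(\ell_p^d,\|\cdot\|_p)$, augmented with the ``mild'' hypotheses (the loss is non-negative, $\nabla\mathcal{L}(\theta^*)=0$, and $\|\nabla\mathcal{L}(w)\|_*\le R=O(1)$ on $\mathcal{C}$), is strong enough to imply Assumption \ref{ass:5} in the Euclidean sense. For $2\le p\le\infty$ the dual exponent $q=p/(p-1)$ lies in $(1,2]$, so for every $v\in\mathbb{R}^d$ one has $\|v\|_2\le\|v\|_q=\|v\|_*$. Consequently the bound $\mathbb{E}[\|\nabla\ell(w,x)-\nabla\mathcal{L}(w)\|_*^2]\le\sigma^2$ transfers verbatim to a $\sigma^2$ bound in the Euclidean norm, which dominates any single-coordinate second moment. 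Similarly, $\beta$-smoothness w.r.t.\ $\|\cdot\|_p$ gives
\begin{equation*}
\|\nabla\ell(w_1,x)-\nabla\ell(w_2,x)\|_2\le\|\nabla\ell(w_1,x)-\nabla\ell(w_2,x)\|_*\le\beta\|w_1-w_2\|_p\le\beta\|w_1-w_2\|_2,
\end{equation*}
where the last inequality uses $\|\cdot\|_p\le\|\cdot\|_2$ for $p\ge 2$; convexity and non-negativity of $\ell$ are preserved trivially. Thus the Euclidean hypotheses hold with the same constants $\beta,\sigma,R$.

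The second step is to track how the diameter scales. By the standard inequality $\|v\|_2\le d^{1/2-1/p}\|v\|_p$ for $p\ge 2$, if $\mathcal{C}$ has $\|\cdot\|_p$-diameter $M=O(1)$, then its $\ell_2$-diameter is at most $M_2=d^{1/2-1/p}M=O(d^{1/2-1/p})$. Feeding $M_2$ into the Euclidean bound of \cite{kamath2022improved} yields
\begin{equation*}
\mathbb{E}[\mathcal{L}(\theta)]-\mathcal{L}(\theta^*)\le O\!\left(\frac{d^{1/2-1/p}\cdot d}{\sqrt{n}}+\frac{\sqrt{d^{1/2-1/p}}\cdot d^{5/4}}{\sqrt{n\epsilon}}\right)=O\!\left(\frac{d^{3/2-1/p}}{\sqrt{n}}+\frac{d^{3/2-1/(2p)}}{\sqrt{n\epsilon}}\right),
\end{equation*}
which is exactly the advertised rate. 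Privacy is inherited from the Euclidean algorithm with no modification, since the reduction only rewraps the same query interface on the same dataset.

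The argument is essentially a bookkeeping exercise in $\ell_p$--$\ell_2$ inequalities, so there is no substantial technical obstacle. The one delicate point is identifying the correct list of ``mild extra assumptions'' needed to fit Assumption \ref{ass:5}; concretely these are non-negativity of the loss, a uniform bound on $\|\nabla\mathcal{L}(w)\|_*$, and vanishing of $\nabla\mathcal{L}$ at $\theta^*$, all of which are unavoidable for the base Euclidean theorem and cannot be weakened by the reduction itself.
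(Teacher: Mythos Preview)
Your proposal is correct and follows essentially the same reduction as the paper: transfer the $\ell_p$ hypotheses to Euclidean ones via the standard norm comparisons $\|v\|_2\le\|v\|_*$ (for $q\le 2$) and $\|v\|_2\le d^{1/2-1/p}\|v\|_p$ (for $p\ge 2$), then plug the inflated diameter $M_2=d^{1/2-1/p}M$ into the Kamath et al.\ bound. If anything, you are slightly more thorough than the paper in explicitly checking that $\beta$-smoothness survives the reduction.
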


\end{proof}

\end{document}